\newcommand{\X}{X}
\newcommand{\Y}{Y}
\newcommand{\Z}{Z}
\newcommand{\W}{W}
\newcommand{\Xs}{{\X_i}}
\newcommand{\Ys}{{\Y_i}}
\newcommand{\s}{{s}} %
\theoremstyle{plain}
\newtheorem{theorem}{Theorem}%
\newtheorem{proposition}[theorem]{Proposition}
\newtheorem{lemma}[theorem]{Lemma}
\newtheorem{corollary}[theorem]{Corollary}
\newtheorem{definition}[theorem]{Definition}
\newtheorem{assumption}[theorem]{Assumption}
\theoremstyle{definition}
\newtheorem{remark}[theorem]{Remark}
\newcommand{\ie}{\emph{i.e.}}
\newcommand{\eg}{\emph{e.g.}}
\newcommand{\cC}{\mathcal{C}}
\newcommand{\cD}{\mathcal{D}}
\newcommand{\cH}{\mathcal{H}}
\newcommand{\cI}{\mathcal{I}}
\newcommand{\cP}{\mathcal{P}}
\newcommand{\cV}{\mathcal{V}}
\newcommand{\cX}{\mathcal{X}}
\newcommand{\cY}{\mathcal{Y}}
\newcommand{\bE}{\mathbb{E}}
\newcommand{\bI}{\mathbb{I}}
\newcommand{\bM}{\mathbb{M}}
\newcommand{\bP}{\mathbb{P}}
\newcommand{\bR}{\mathbb{R}}
\par\vspace{4mm}}
\newcommand{\BlackBox}{\rule{1.5ex}{1.5ex}}  %
\def\QED{~\rule[-1pt]{5pt}{5pt}\par\medskip}
\newtheorem{theorem}{Theorem}[section]
\newtheorem{definition}{Definition}[section]
\newtheorem{assumption}{Assumption}
\title{Predictive Inference with \\
Feature Conformal Prediction}
\author{Jiaye Teng$^{1,3,4,*}$, Chuan Wen$^{1,3,4,*}$, Dinghuai Zhang$^{2,}\thanks{Equal Contribution.}$~,\\ 
\textbf{Yoshua Bengio$^{2}$, Yang Gao$^{1,3,4}$, Yang Yuan$^{1,3,4,}$\thanks{Correspond to \texttt{yuanyang@mail.tsinghua.edu}.}}\\
$^1$Institute for Interdisciplinary Information Sciences, Tsinghua University \\
$^2$Mila - Quebec AI Institute\\
$^3$Shanghai Artificial Intelligence Laboratory\\
$^4$Shanghai Qi Zhi Institute\\
\texttt{\{tjy20,cwen20\}@mails.tsinghua.edu.cn, dinghuai.zhang@mila.quebec}
}
\begin{document}

\maketitle

\maketitle

\begin{abstract}
Conformal prediction is a distribution-free technique for establishing valid prediction intervals.
Although conventionally people conduct conformal prediction in the output space,
this is not the only possibility.
In this paper, we propose feature conformal prediction, which extends the scope of conformal prediction to semantic feature spaces by leveraging the inductive bias of deep representation learning. 
From a theoretical perspective, we demonstrate that feature conformal prediction provably outperforms regular conformal prediction under mild assumptions.
Our approach could be combined with not only vanilla conformal prediction, but also other adaptive conformal prediction methods.
Apart from experiments on existing predictive inference benchmarks, we also demonstrate the state-of-the-art performance of the proposed methods on \textit{large-scale} tasks such as ImageNet classification and Cityscapes image segmentation.
The code is available at \url{https://github.com/AlvinWen428/FeatureCP}.
\end{abstract}

\section{Introduction}
\label{sec: intro}

Although machine learning models work well in numerous fields~\citep{DBLP:journals/nature/SilverSSAHGHBLB17,DBLP:conf/naacl/DevlinCLT19,DBLP:conf/nips/BrownMRSKDNSSAA20}, they usually suffer from over-confidence issues, yielding unsatisfactory uncertainty estimates~\citep{DBLP:conf/icml/GuoPSW17,chen2021neural, DBLP:journals/corr/abs-2107-03342}.
To tackle the uncertainty issues, people have developed a multitude of uncertainty quantification techniques, including calibration~\citep{Guo2017OnCO, Minderer2021RevisitingTC}, Bayesian neural networks~\citep{DBLP:books/daglib/0032893,DBLP:conf/icml/BlundellCKW15}, and many others \citep{sullivan2015introduction}. 

Among different uncertainty quantification techniques, \emph{conformal prediction} (CP) stands out due to its simplicity and low computational cost 
properties~\citep{vovk2005algorithmic,DBLP:journals/jmlr/ShaferV08,DBLP:journals/corr/abs-2107-07511}.
Intuitively, conformal prediction first splits the dataset into a training fold and a calibration fold, then trains a machine learning model on the training fold, and finally constructs the {confidence band} via a non-conformity score on the calibration fold.
Notably, the confidence band obtained by conformal prediction is {\em guaranteed} due to the exchangeability assumption in the data.
With such a guarantee, conformal prediction has been shown to perform promisingly on numerous realistic applications~\citep{lei2021conformal,DBLP:journals/corr/abs-2202-05265}.

Despite its remarkable effectiveness, vanilla conformal prediction (vanilla CP) is only deployed in the output space, which is not the only possibility.
As an alternative, feature space in deep learning stands out due to its powerful inductive bias of deep representation.
Take the image segmentation problem as an example.
In such problems, we anticipate a predictive model to be certain in the informative regions (\eg, have clear objects), while uncertain elsewhere. 
Since different images would possess different object boundary regions, it is inappropriate to return the same uncertainty for different positions, as standard conformal prediction does.
Nonetheless, if we instead employ conformal prediction on the more meaningful feature space, albeit all images have the same uncertainty on this intermediate space, the pixels would exhibit effectively different uncertainty in the output space after a non-trivial non-linear transformation (see  Figure~\ref{fig:seg-visualization}).

In this work, we thus propose the {F}eature {C}onformal {P}rediction (Feature CP) framework, which deploys conformal prediction in the feature space rather than the output space
(see Figure~\ref{fig:cp_fcp}).
However, there are still two issues unsolved for performing Feature CP: 
(a) commonly used non-conformity scores require a ground truth term, but here the ground truth in feature space is not given; 
and (b) transferring the confidence band in the feature space to the output space is non-trivial.
To solve problem (a), we propose a new non-conformity score based on the notation surrogate feature, which replaces the ground truth term in previous non-conformity scores. 
As for (b), we propose two methods: \emph{Band Estimation}, which calculates the upper bound of the confidence band, together with \emph{Band Detection}, to determine whether a response locates in the confidence band.
More interestingly, feature-level techniques are pretty general and can be deployed into other distribution-free inference algorithms, \eg, conformalized quantile regression (CQR). 
This shows the great potential application impact of the proposed Feature CP methodology
(see the discussion in Appendix~\ref{appendix: fcqr}).

From a theoretical perspective, we demonstrate that \emph{Feature CP is provably more efficient, in the sense that it yields shorter confidence bands than vanilla CP, given that the feature space meets cubic conditions}. 
Here the cubic conditions sketch the properties of feature space from three perspectives, including length preserving, expansion, and quantile stability (see Theorem~\ref{thm: fcp efficient}). 
At a colloquial level, the cubic conditions assume the feature space has a smaller distance between individual non-conformity scores and their quantiles, which reduces the cost of the quantile operation. 
We empirically validate that the feature space in deep learning satisfies the cubic conditions, thus resulting in a better confidence band with a shorter length (See Figure~\ref{fig:1dim-results}) according to our theoretical analysis.

Our contributions can be summarized as follows:

\begin{itemize}
\item We propose Feature CP, together with a corresponding non-conformity score and an uncertainty band estimation method. 
The proposed method no longer treats the trained model as a black box but exploits the semantic feature space information.
What's more, our approach could be directly deployed with any pretrained model as a plug-in component, without the need of re-training under specially designed learning criteria.

\item Theoretical evidence guarantees that Feature CP is both (a) efficient, where it yields shorter confidence bands, and (b) effective, where the empirical coverage provably exceeds the given confidence level, under reasonable assumptions.

\item We conduct extensive experiments under both synthetic and realistic settings (\eg, pixel-level image segmentation) to corroborate the effectiveness of the proposed algorithm.
Besides, we demonstrate the universal applicability of our method by deploying feature-level operations to improve other adaptive conformal prediction methods such as CQR.

\end{itemize}

\begin{figure}[t]
\centering
\includegraphics
[width=0.9\textwidth,trim=1 1 1 1]
{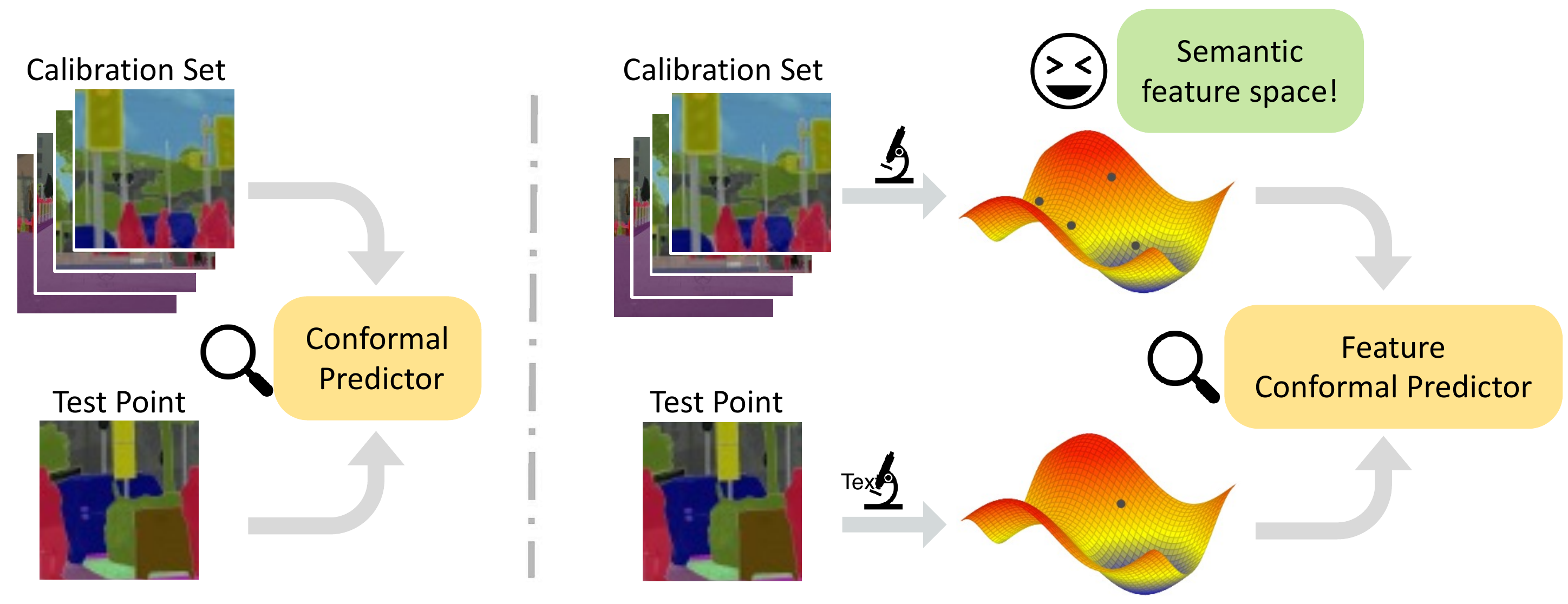}
\caption{
Illustration of vanilla CP (left) vs Feature CP (right).
Feature CP operates in the semantic feature space, as opposed to the commonly adopted output space.
These methods are described in further detail in Sections~\ref{sec: prelim} and~\ref{sec: fcp}.
}
\label{fig:cp_fcp}
\end{figure}

\section{Related Work}

Conformal prediction is a statistical framework dealing with uncertainty quantification problems~\citep{vovk2005algorithmic,DBLP:journals/jmlr/ShaferV08,DBLP:journals/neuroimage/NouretdinovCGCVVF11, Barber2020TheLO, DBLP:journals/corr/abs-2107-07511}. 
The research on conformal prediction can be roughly split into the following branches.
The first line of work focuses on relaxing the assumptions about data distribution in conformal prediction, \eg, exchangeability~\citep{DBLP:conf/nips/TibshiraniBCR19, Hu2020ADT, Podkopaev2021DistributionfreeUQ, barber2022conformal}.
The second line aims at improving the efficiency of conformal prediction~\citep{Romano2020ClassificationWV, sesia2020comparison,izbicki2020cd,yang2021finite, Stutz2021LearningOC}.
The third line tries to generalize  conformal prediction to different settings, \eg, quantile regression~\citep{DBLP:conf/nips/RomanoPC19}, $k$-Nearest Neighbors~\citep{DBLP:journals/jair/PapadopoulosVG11}, density estimator~\citep{Izbicki2020DistributionfreeCP},  survival analysis~\citep{DBLP:conf/icml/TengTY21,candes2021conformalized}, or conditional histogram regression~\citep{DBLP:conf/nips/SesiaR21}.
There are also works combining conformal prediction with other machine learning topics, such as functional data~\citep{Lei2013ACP}, treatment effects~\citep{Lei2021ConformalIO}, time series analysis~\citep{Xu2021ConformalPI}, online learning~\citep{Gibbs2021AdaptiveCI}, adversarial robustness~\citep{gendler2022adversarially}, and many others.

Besides conformal prediction, there are many other uncertainty quantification techniques, including calibration~\citep{DBLP:conf/icml/GuoPSW17,DBLP:conf/icml/KuleshovFE18,DBLP:conf/cvpr/NixonDZJT19} and Bayesian-based techniques~\citep{DBLP:conf/icml/BlundellCKW15,DBLP:conf/icml/Hernandez-Lobato15b,DBLP:conf/icml/LiG17}. 
Different from the above techniques, conformal prediction is appealing due to its simplicity, computationally free, and model-free properties.

Image segmentation is a traditional task in computer vision, which focuses on partitioning images into different semantic segments~\citep{DBLP:journals/cvgip/HaralickS85,DBLP:conf/artcom/SenthilkumaranR09,DBLP:journals/corr/abs-2001-05566}.
A line of researches applies conformal prediction with some threshold output for all pixels~\citep{DBLP:journals/corr/abs-2107-07511, bates-rcps}, or focus on the risk control tasks~\citep{angelopoulos2021learn}.
Different from previous approaches, our method first achieves meaningful pixel-level conformal prediction results to the best of our knowledge.

\vspace{-0.2cm}
\section{Preliminaries}
\label{sec: prelim}
\vspace{-0.2cm}

\textbf{Predictive inference.}
Let $(\X, \Y)\sim\cP$ denotes a random data pair, \eg, an image and its segmentation map. 
Given a significance level $\alpha$, we aim to construct a confidence band $\cC_{1-\alpha}(\X)$, such that
\begin{equation}
\label{eqn: confidence band coverage}
    \bP\left(\Y \in \cC_{1-\alpha}(\X)\right) \geq 1-\alpha.
    \vspace{-1pt}
\end{equation}

There is a tradeoff between efficiency and effectiveness, since one can always set $\cC_{1-\alpha}(\X)$ to be infinitely large to satisfy Equation~\eqref{eqn: confidence band coverage}. In practice, we wish the measure of the confidence band (\eg, its length) can be as small as possible, given that the coverage in Equation~\eqref{eqn: confidence band coverage} holds.

\textbf{Dataset.}
Let $\cD = \{(\Xs, \Ys)\}_{i \in \cI}$ denotes the dataset, where $\cI$ denotes the set of data index and $(\Xs, \Ys)$ denotes a sample pair following the distribution $\cP$.
Typically, conformal prediction requires that data in $\cD$ satisfies exchangeability (see below) rather than the stronger i.i.d. (independent and identically distributed) condition.
We use $|\cI|$ to represent the cardinality of a set $\cI$.
Conformal prediction needs to first randomly split the dataset into a training fold $\cD_{\text{tr}} = \{(\Xs, \Ys)\}_{i \in \cI_{\text{tr}}}$ and a calibration fold $\cD_{\text{ca}} = \{(\Xs, \Ys)\}_{i \in \cI_{\text{ca}}}$, where $\cI_{\text{tr}} \cup \cI_{\text{ca}} = \cI$ and $\cI_{\text{tr}} \cap \cI_{\text{ca}} = \phi$.
We denote the test point as $(\X^\prime, \Y^\prime)$, which is also sampled from the distribution $\cP$.

\textbf{Training process.}
During the training process, we train a machine learning model denoted by $\hat{\mu}(\cdot)$ (\eg, neural network) with the training fold $\cD_{\text{tr}}$. 
For the ease of the following discussion, we rewrite the model as $\hat{\mu} = \hat{g} \circ \hat{f}$, where $\hat{f}$ denotes the feature function (\ie, first several layers in neural networks) and $\hat{g}$ denotes the prediction head (\ie, last several layers in neural networks).

\textbf{Calibration process.}
Different from usual machine learning methods,
conformal prediction has an additional calibration process. 
Specifically, we calculate a \emph{non-conformity score} $V_i = \s(\X_i, \Y_i, \hat{\mu})$ based on the calibration fold $\cD_{\text{ca}}$, where $\s(\cdot, \cdot, \cdot)$ is a function informally measuring how the model $\hat{\mu}$ fits the ground truth.
The simplest form of non-conformity score is 
$\s(\Xs, \Ys, \hat{\mu}) = \|\Ys - \hat{\mu}(\Xs)\|$.
One could adjust the form of the non-conformity score according to different contexts (\eg, \citet{DBLP:conf/nips/RomanoPC19,DBLP:conf/icml/TengTY21}).
Based on the selected non-conformity score, a matching confidence band could be subsequently created.

We present vanilla CP\footnote{We use $\delta_{u}$ to represent a Dirac Delta function (distribution)
at point $u$.} in Algorithm~\ref{alg: cp}.
Moreover, we demonstrate its theoretical guarantee in Proposition~\ref{prop:cp}, based on the following notation of exchangeability in Assumption~\ref{assump: exchangeability}.

\begin{assumption}[exchangeability]
\label{assump: exchangeability}
Assume that the calibration data $(\Xs, \Ys), i\in \cI_{\text{ca}}$ and the test point $(\X^\prime, \Y^\prime)$ are exchangeable. 
Formally, define $\Z_i, i=1, \dots, |\cI_{\text{ca}} + 1|$, as the above data pair, 
then $\Z_i$ are exchangeable if arbitrary permutation leads to the same distribution, i.e.,
\vspace{-0.1cm}
\begin{equation}
\label{eqn: exchangable}
(\Z_{1}, \dots, \Z_{|\cI_{\text{ca}}| + 1} ) \overset{d}{=} (\Z_{\pi(1)}, \dots, \Z_{\pi(|\cI_{\text{ca}}| + 1)}),
\end{equation}
with arbitrary permutation $\pi$ over $\{1, \cdots, |\cI_{\text{ca}} + 1|\}$. 

\end{assumption}

Note that Assumption~\ref{assump: exchangeability} is weaker than the i.i.d. assumption.
Therefore, it is reasonable to assume the exchangeability condition to hold in practice.
Based on the exchangeability assumption, one can show the following theorem, indicating that conformal prediction indeed returns a valid confidence band, which satisfies Equation~\eqref{eqn: confidence band coverage}.

\begin{theorem}[theoretical guarantee for conformal prediction~\citep{DBLP:journals/sigact/Law06,lei2018distribution,DBLP:conf/nips/TibshiraniBCR19}]
\label{prop:cp}
Under Assumption~\ref{assump: exchangeability}, the confidence band $\cC_{1-\alpha}(\X^\prime)$ returned by Algorithm~\ref{alg: cp} satisfies
\begin{equation*}
    \bP(\Y^\prime \in \cC_{1-\alpha}(\X^\prime)) \geq 1-\alpha.
\end{equation*}
\end{theorem}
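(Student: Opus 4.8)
The plan is to reduce the coverage statement to a purely one-dimensional fact about the non-conformity scores and then invoke a standard rank argument. Writing $n = |\cI_{\text{ca}}|$ and setting $V_i = \s(\Xs, \Ys, \hat{\mu})$ for $i \in \cI_{\text{ca}}$ together with the test score $V' = \s(\X', \Y', \hat{\mu})$, the first step is to note that, by the construction of Algorithm~\ref{alg: cp}, the event $\Y' \in \cC_{1-\alpha}(\X')$ is equivalent to the scalar inequality $V' \leq Q$, where $Q$ is the empirical $(1-\alpha)$-quantile of the calibration scores, i.e.\ the $\ceil{(1-\alpha)(n+1)}$-th smallest of $V_1, \dots, V_n$. (When this index exceeds $n$ one has $Q = +\infty$ via the $\delta_{+\infty}$ point mass flagged in the footnote, and coverage is then trivially $1$, so I may assume the index is at most $n$.)

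The second step is to establish that the scores $V_1, \dots, V_n, V'$ are themselves exchangeable. This is where the train/calibration split matters: $\hat{\mu} = \hat{g} \circ \hat{f}$ is fit only on $\cD_{\text{tr}}$, which is disjoint from both the calibration fold and the test point, so conditionally on $\cD_{\text{tr}}$ the map $(\X, \Y) \mapsto \s(\X, \Y, \hat{\mu})$ is a fixed deterministic function. Applying this common function coordinatewise to the exchangeable tuple of Assumption~\ref{assump: exchangeability} preserves exchangeability, so $(V_1, \dots, V_n, V')$ is exchangeable.

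The third step is the rank lemma. Assuming for the moment no ties, let $R = |\{i : V_i \leq V'\}|$ be the rank of the test score among all $n+1$ scores; exchangeability forces $R$ to be uniform on $\{1, \dots, n+1\}$. A short counting check shows $R \leq \ceil{(1-\alpha)(n+1)} =: k$ implies $V' \leq Q$: if $V'$ lies among the $k$ smallest of all $n+1$ scores, then at most $k-1$ calibration scores fall strictly below it, so $V'$ does not exceed the $k$-th order statistic $Q$ of the calibration scores. Hence $\bP(\Y' \in \cC_{1-\alpha}(\X')) \geq \bP(R \leq k) = k/(n+1) \geq 1-\alpha$, the final inequality being exactly $\ceil{(1-\alpha)(n+1)} \geq (1-\alpha)(n+1)$.

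The main obstacle is the treatment of ties together with pinning down the quantile convention so that the ceiling delivers the clean $1-\alpha$ bound rather than something weaker. I would handle ties either by assuming the score distribution is atomless (hence a.s.\ distinct scores, so $R$ is genuinely uniform), or, more robustly, by appending independent uniform jitter to each score to break ties: this keeps the augmented tuple exchangeable and only enlarges the acceptance event, so the lower bound is preserved. Everything else — the equivalence in the first step and the counting inequality in the third — is routine once the exchangeability of the scores is in hand.
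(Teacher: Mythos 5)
Your proposal is correct and follows essentially the same route the paper takes: the paper treats this statement as a cited result, and its proof of the analogous Theorem~\ref{thm: fcp} likewise first establishes that the non-conformity scores of the calibration and test points are exchangeable (because $\hat{\mu}$ depends only on $\cD_{\text{tr}}$) and then invokes the standard quantile/rank lemma (Lemma~1 of \citet{DBLP:conf/nips/TibshiraniBCR19}) rather than reproving it. You simply write out that rank-counting step and the tie-breaking explicitly instead of citing it, which is fine.
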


\begin{algorithm*}[t]
\caption{Conformal Prediction} 
\label{alg: cp} 
\begin{algorithmic}[1] 
\REQUIRE Desired confidence level $\alpha$, dataset $\cD = \{ (\Xs, \Ys)\}_{i \in \cI}$, test point $\X^\prime$,
 non-conformity score function $\s(\cdot)$

\STATE{Randomly split the dataset $\mathcal{D}$ into a training fold $\cD_{\text{tr}} \triangleq (\X_i, \Y_i)_{i \in \mathcal{I}_{\text{tr}}}$ and a calibration fold $\cD_{\text{ca}} \triangleq (\X_i, \Y_i)_{i \in \mathcal{I}_{\text{ca}}}$;}

\STATE{Train a base machine learning model $\hat{\mu}(\cdot)$ with $\cD_{\text{tr}}$ to estimate the response $\Y_i$;}

\STATE{For each $i \in \mathcal{I}_{\text{ca}}$, calculate its non-conformity score $V_i = \s(\X_i, \Y_i, \hat{\mu})$;}

\STATE{Calculate the $(1-\alpha)$-th quantile $Q_{1-\alpha}$ of the distribution  $\frac{1}{|\cI_{\text{ca}}| + 1} \sum_{i \in \mathcal{I}_{ca}}  \delta_{V_i} + {\delta}_{\infty}$.}

\ENSURE $\cC_{1-\alpha}(\X^\prime) = \{\Y: s(\X^\prime, \Y, \hat{\mu}) \leq Q_{1-\alpha}\}$.

\end{algorithmic} 
\end{algorithm*}

\section{Methodology}
\label{sec: fcp}

In this section, we broaden the concept of conformal prediction using feature-level operations.
This extends the scope of conformal prediction and makes it more flexible.
We analyze the algorithm components and details in Section~\ref{sec: fcp, score} and Section~\ref{sec: fcp, estimation}.
The algorithm is finally summarized in Section~\ref{sec: fcp, fcp}.
We remark that although in this work we discuss Feature CP under regression regimes for simplicity's sake, one can easily extend the idea to classification problems.

\subsection{Non-conformity Score}
\label{sec: fcp, score}

Conformal prediction necessitates a non-conformity score to measure the conformity between prediction and ground truth.
Traditional conformal prediction usually uses norm-based non-conformity score due to its simplicity, \ie, $\s(\X, \Y, \mu) = \|\Y - {\mu}(\X)\|$, where $\Y$ is the provided ground truth target label.
Nonetheless, we have no access to the given target features if we want to conduct conformal prediction at the feature level.
To this end, we introduce the \emph{surrogate feature} (see Definition~\ref{def: surrogate feature}), which could serve as the role of ground truth $\Y$ in Feature CP.

\begin{definition}[Surrogate feature]
\label{def: surrogate feature}
Consider a trained neural network $\hat{\mu} = \hat{g} \circ \hat{f}$ where $\circ$ denotes the composition operator. 
For a sample $(\X, \Y)$, we define $\hat{v} = \hat{f}(\X)$ to be the trained feature.
Besides, we define the surrogate feature to be any feature $v$ such that $\hat{g}(v) = \Y$.
\end{definition}

In contrast to commonly adopted regression or classification scenarios where the label is unidimensional, the dimensionality of features could be much larger.
We thus define a corresponding non-conformity score based on the surrogate feature as follows:
\vspace{-0.02cm}
\begin{equation}
\label{eqn: score}
    \s(\X, \Y, \hat{g} \circ \hat{f}) = \inf_{v \in \{v: \hat{g}(v) = \Y\}}\|v -  \hat{f}(\X)\|.
\end{equation}
\vspace{-0.2cm}

It is usually complicated to calculate the score in Equation~\ref{eqn: score} due to the infimum operator.
Therefore, we design Algorithm~\ref{alg: non-conformity score} to calculate an upper bound of the non-conformity score.
Although the exact infimum is hard to achieve in practice, we can apply gradient descent starting from the trained feature $\hat{v}$ to find a surrogate feature $v$ around it.
In order to demonstrate the reasonability of this algorithm, we analyze the non-conformity score distribution with realistic data 
in Appendix~\ref{appendix: Additional Experiment Results}.

\subsection{Band Estimation and Band Detection}
\label{sec: fcp, estimation}

Utilizing the non-conformity score derived in Section~\ref{sec: fcp, score}, one could derive a confidence band in the feature space.
In this section, we mainly focus on how to transfer the confidence band in feature space to the output space, \ie, calculating the set
\begin{equation}
\label{eqn: estimation}
    \{\hat{g}(v): \| v - \hat{v} \| \leq Q_{1-\alpha}\},
\vspace{-0.2cm}
\end{equation}

\begin{wrapfigure}[]{r}{0.47\textwidth}
\vspace{-0.4cm}
\begin{minipage}{0.47\textwidth}
\begin{algorithm}[H]
\begin{algorithmic}[1] 
\REQUIRE Data point $(\X, \Y)$, trained predictor $\hat{g} \circ \hat{f}(\cdot)$,
 step size $\eta$,
number of steps $M$;
\STATE{$u \gets \hat{f}(\X)$;} \STATE{$m\gets 0$;}
\WHILE{$m<M$}
\STATE{$u \gets u - \eta \frac{\partial \| \hat{g}(u) - \Y \|^2}{\partial u}$;}
\STATE{$m \gets m + 1$;}
\ENDWHILE
\ENSURE $\s(\X, \Y, \hat{g}\circ\hat{f}) = \|u - \hat{f}(\X) \|$ .
\end{algorithmic} 
\caption{Non-conformity Score 
} 
\label{alg: non-conformity score} 
\end{algorithm}
\end{minipage}
\vspace{-0.1cm}
\end{wrapfigure}

where $\hat{v}$ is the trained feature, $\hat{g}$ is the prediction head, and $Q_{1-\alpha}$ is derived based on the calibration set 
(even though slightly different, we refer to step~4 in Algorithm~\ref{alg: cp} for the notion of $Q_{1-\alpha}$; 
a formal discussion of it is deferred to Algorithm~\ref{alg: fcp}).

Since the prediction head $\hat{g}$ is usually highly non-linear,
the exact confidence band is hard to represent explicitly.
Consequently, we provide two approaches: \emph{Band Estimation} which aims at estimating the upper bound of the confidence band, and \emph{Band Detection} which aims at identifying whether a response falls inside the confidence interval.
We next crystallize the two methods.

\textbf{Band Estimation.}
We model the Band Estimation problem as a perturbation analysis one, where we regard $v$ in Equation~\eqref{eqn: estimation} as a perturbation of the trained feature $\hat{v}$, and analyze the output bounds of prediction head $\hat{g}$.
In this work, we apply linear relaxation based perturbation analysis (LiPRA)~\citep{DBLP:conf/nips/XuS0WCHKLH20} to tackle this problem under deep neural network regimes.
{{LiPRA transforms the certification problem as a linear programming problem, and solves it accordingly. }}
The relaxation would result in a relatively looser interval than the actual band, so this method would give an upper bound estimation of the exact band length.

\textbf{Band Detection.}
Band Estimation could potentially end up with loose inference results. 
Typically, we are only interested  in determining whether a point $\tilde{\Y}$ is in the confidence band $\cC(\X^\prime)$ for a test sample $\X^\prime$.
To achieve this goal, we first apply Algorithm~\ref{alg: non-conformity score} using data point $(\X^\prime, \tilde{\Y})$, which returns a non-conformity score $\tilde{V}$.
We then test whether the score $\tilde{V}$ is smaller than quantile $Q_{1-\alpha}$ on the calibration set (see Equation~\eqref{eqn: estimation}).
If so, we deduce that $\tilde{\Y} \in \cC(\X^\prime)$ (or vice versa if not).

\subsection{Feature Conformal Prediction}
\label{sec: fcp, fcp} 

Based on the above discussion, we summarize\footnote{We here show the Feature CP algorithm with Band Estimation. 
We defer the practice details of using Band Detection in step~5 of Algorithm~\ref{alg: fcp} to Appendix.
} Feature CP in Algorithm~\ref{alg: fcp}.
Different from vanilla CP~(see Algorithm~\ref{alg: cp}), Feature CP uses a different non-conformity score based on surrogate features, and we need an additional Band Estimation or Band Detection (step~5) to transfer the band from feature space to output space. 

We then discuss two intriguing strengths of Feature CP.
First, the proposed technique is universal and could improve other advanced adaptive conformal inference techniques utilizing the inductive bias of learned feature space.
Specifically, we propose Feature CQR with insights from CQR \citep{DBLP:conf/nips/RomanoPC19}, a prominent adaptive conformal prediction method with remarkable performance, to demonstrate the universality of our technique.
We relegate related algorithmic details to Section~\ref{appendix: fcqr}.
Second, although methods such as CQR require specialized training criteria (\eg, quantile regression) for the predictive models, Feature CP could be directly applied to any given pretrained model and could still give meaningful adaptive interval estimates.
This trait facilitates the usage of our method with large pretrained models, which is common in modern language and vision tasks.

\begin{algorithm*}[t]
\caption{Feature Conformal Prediction 
} 
\label{alg: fcp} 
\begin{algorithmic}[1] 
\REQUIRE Level $\alpha$, 
dataset $\cD = \{ (\Xs, \Ys)\}_{i \in \cI}$, test point $\X^\prime$;

\STATE{Randomly split the dataset $\mathcal{D}$ into a training fold $\cD_{\text{tr}} \triangleq (\X_i, \Y_i)_{i \in \mathcal{I}_{\text{tr}}}$ together with a calibration fold $\cD_{\text{ca}} \triangleq (\X_i, \Y_i)_{i \in \mathcal{I}_{\text{ca}}}$;}

\STATE{Train a base machine learning model $\hat{g}\circ \hat{f}(\cdot)$ using $\cD_{\text{tr}}$ to estimate the response $\Y_i$;}

\STATE{For each $i \in \mathcal{I}_{\text{ca}}$, calculate the non-conformity score 
$V_i$ based on Algorithm~\ref{alg: non-conformity score};}

\STATE{Calculate the $(1-\alpha)$-th quantile $Q_{1-\alpha}$ of the distribution  $\frac{1}{|\cI_{\text{ca}}| + 1} \sum_{i \in \mathcal{I}_{ca}}  \delta_{V_i} + {\delta}_{\infty}$;}

\STATE{Apply Band Estimation on test data feature $\hat{f}(\X^\prime)$ with perturbation $Q_{1-\alpha}$ and prediction head $\hat{g}$, which returns $\cC_{1-\alpha}^{\text{fcp}}(\X)$;}

\ENSURE $\cC_{1-\alpha}^{\text{fcp}}(\X)$.

\end{algorithmic} 
\end{algorithm*}

\subsection{Theoretical Guarantee}
This section presents theoretical guarantees for Feature CP regarding coverage (effectiveness) and band length (efficiency). 
We provide an informal statement of the theorem below and defer the complete details to Appendix~\ref{appendix: theoretical details}.

\begin{theorem}[Informal Theorem on the Efficiency of Feature CP] 
\label{thm: informal theorem}
Under mild assumptions, if the following cubic conditions hold:
\begin{enumerate}
\item \textbf{Length Preservation.} Feature CP does not cost much loss in feature space.
\item \textbf{Expansion.} The Band Estimation operator expands the differences between individual length and their quantiles.
\item \textbf{Quantile Stability.} The band length is stable in both the feature space and the output space for a given calibration set.
\end{enumerate}
then Feature CP outperforms vanilla CP in terms of average band length.
\end{theorem}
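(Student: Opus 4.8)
The plan is to reduce both band lengths to a common feature-space currency and then exploit the three cubic conditions to compare them. First I would write the vanilla band length explicitly: by Algorithm~\ref{alg: cp} with the norm score, $\cC_{1-\alpha}(\X^\prime)$ is a ball of radius $Q_{1-\alpha}$ around $\hat{\mu}(\X^\prime)$, whose (scalar-output) length is $2Q_{1-\alpha}$ with $Q_{1-\alpha}=\mathrm{Quantile}_{1-\alpha}(\{s_i\}_{i\in\cI_{\text{ca}}})$ and $s_i=\|\Y_i-\hat{\mu}(\X_i)\|$, a \emph{constant} across test points. For Feature CP I would instead express the output length at a test point as the image under $\hat g$ of the feature ball $\{v:\|v-\hat f(\X^\prime)\|\le Q^{\mathrm{fcp}}_{1-\alpha}\}$, so that its length is governed by the local expansion of $\hat g$ around $\hat f(\X^\prime)$; the source of the gain is that the quantile is now taken over the (hopefully more concentrated) feature scores $V_i$ of Equation~\eqref{eqn: score} rather than over the output scores $s_i$.

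Second, using Definition~\ref{def: surrogate feature} I would introduce the secant expansion factor $h_i := s_i / V_i$. Since $\Y_i=\hat g(v_i^\star)$ for the minimizing surrogate $v_i^\star$ and $s_i=\|\hat g(\hat f(\X_i))-\hat g(v_i^\star)\|$ while $V_i=\|\hat f(\X_i)-v_i^\star\|$, the factor $h_i$ is exactly the discrete slope of $\hat g$ along the segment from $\hat f(\X_i)$ to $v_i^\star$. The \textbf{Length Preservation} condition would then let me assert that this reconstruction is (nearly) lossless, so that $s_i$ and $h_i V_i$ agree up to a controlled error, and, analogously, that the Band Estimation output length at a test point equals $2\,h(\X^\prime)\,Q^{\mathrm{fcp}}_{1-\alpha}$ for a local expansion $h(\X^\prime)$.

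Third, the comparison then collapses to the single inequality
\[
\mathbb{E}_{\X^\prime}\!\left[h(\X^\prime)\right]\cdot Q^{\mathrm{fcp}}_{1-\alpha}\ \le\ \mathrm{Quantile}_{1-\alpha}\!\left(\{h_i V_i\}_{i\in\cI_{\text{ca}}}\right),
\]
i.e.\ taking the quantile \emph{inside} the feature space and then expanding beats expanding first and taking the quantile in the output space. This is where the remaining two conditions enter: \textbf{Quantile Stability} lets me pass between the calibration empirical quantile and the test-time average (so that $\mathbb{E}_{\X^\prime}[h(\X^\prime)]$ and the calibration statistics are comparable and the exchangeability-based quantile of Theorem~\ref{prop:cp} transfers), while \textbf{Expansion} is the precise statement that the map $r\mapsto(\text{output length})$ magnifies the spread of the feature scores $V_i$ about their quantile, which is exactly the convexity/rearrangement structure needed to orient the displayed inequality (note that when $h_i$ is constant both sides coincide, so all of the gain is carried by the variation that Expansion controls).

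I expect the main obstacle to be this third step: making the quantile-of-products versus product-of-quantiles comparison rigorous while simultaneously absorbing the two approximations the pipeline introduces. Replacing the nonlinear image $\hat g(\cdot)$ of a feature ball by a single local expansion $h(\X^\prime)$ is only justified because Band Estimation via LiPRA returns an \emph{upper} bound, so the inequality must survive this relaxation; and replacing the exact infimum in Equation~\eqref{eqn: score} by the gradient-descent surrogate of Algorithm~\ref{alg: non-conformity score} perturbs every $V_i$. Each cubic condition must therefore be calibrated so that these errors are dominated by the gain, and the delicate part will be stating Expansion strongly enough to yield a strict improvement yet weakly enough to remain plausible for realistic learned feature maps, all while checking that exchangeability still guarantees the $1-\alpha$ coverage that makes the two average lengths legitimately comparable.
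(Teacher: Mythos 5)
Your plan follows the same architecture as the paper's proof of the formal version (Theorem~\ref{thm: fcp efficient}): both arguments reduce the comparison to ``quantile taken in feature space, then mapped to output space'' versus ``quantile taken directly in output space,'' use Expansion as the engine that orients the inequality, Length Preservation as the bridge between the two pipelines, and Quantile Stability to pass from the calibration set to the test distribution. The concrete route differs, though. You linearize $\hat g$ through secant expansion factors $h_i = s_i/V_i$ and reduce everything to a quantile-of-products versus product-of-quantiles inequality, which you correctly flag as the delicate step. The paper avoids any linearization: it works with the abstract length operator $\cH(v,X)$ (the length of $\{\hat g(u): \|u-\hat f(X)\|\le v/2\}$) under a H\"older condition $|\cH(v,X)-\cH(u,X)|\le L|v-u|^{\alpha}$, and the cubic conditions are stated quantitatively so that the proof is a four-line chain: Expansion lower-bounds the mean-to-quantile gap of $\cH(V^f_{\cD},\cD)$ by the H\"older-transferred gap of $V^f_{\cD}$ \emph{plus} all the slack terms ($\epsilon$ and $2\max\{L,1\}(c/\sqrt{n})^{\min\{\alpha,1\}}$); the H\"older condition moves the quantile inside $\cH$; Length Preservation swaps $Q_{1-\alpha}(\cH(V^f_{\cD},\cD))$ for $Q_{1-\alpha}(V^o_{\cD})$ at cost $\epsilon$; and Quantile Stability spends the remaining $c/\sqrt{n}$ budget. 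In other words, the rearrangement argument you anticipate having to prove is not proved in the paper --- it is \emph{assumed}, baked into the quantitative form of Expansion. Two smaller mismatches worth noting: (i) $s_i = h_iV_i$ holds by definition of $h_i$, so no condition is needed there; what Length Preservation actually supplies in the paper is the non-trivial quantile-level bound $\bE\,Q_{1-\alpha}(\cH(V^f_{\cD},\cD)) < \bE\,Q_{1-\alpha}(V^o_{\cD})+\epsilon$ (the reverse direction $\cH(V^f_i,X_i)\gtrsim v^o_i$ being automatic since the feature ball reaching the surrogate feature maps onto a set containing both prediction and truth), and this is exactly what licenses identifying your secant slope with the local expansion governing Band Estimation; (ii) your constant-$h$ sanity check (``when $h_i$ is constant both sides coincide'') is consistent with the paper's Theorem~\ref{thm: lenght variance}, which shows the gain vanishes precisely when $\hat g$ is linear. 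Your approach would yield an essentially equivalent formal statement once Expansion is calibrated with the same slack; the paper's buys a shorter proof at the price of less transparent hypotheses.
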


The intuition of Theorem~\ref{thm: informal theorem} is as follows:
Firstly, Feature CP and Vanilla CP take quantile operations in different spaces, and the Expansion condition guarantees that the quantile step costs less in Feature CP. 
However, there may be an efficiency loss when transferring the band from feature space to output space.
Fortunately, it is controllable under the Length Preservation condition.
The final Quantile Stability condition ensures that the band is generalizable from the calibration fold to the test samples. 
We provide the detailed theorem in Appendix~\ref{appendix: theoretical details} and empirically validate the cubic conditions in Appendix~\ref{appendix: Certifying Cubic Conditions}.

\section{Experiments}
\label{sec: experiments}

We conduct experiments on synthetic and real-world datasets, mainly to show that Feature CP is
(a) {effective}, \ie, it could return valid confidence bands with empirical coverage larger than $1-\alpha$; 
(b) {efficient}, \ie, it could return shorter confidence bands than vanilla CP.

\subsection{Setup}

\begin{figure}[t]
\centering
\begin{minipage}{0.40\columnwidth}
\includegraphics[width=\textwidth]{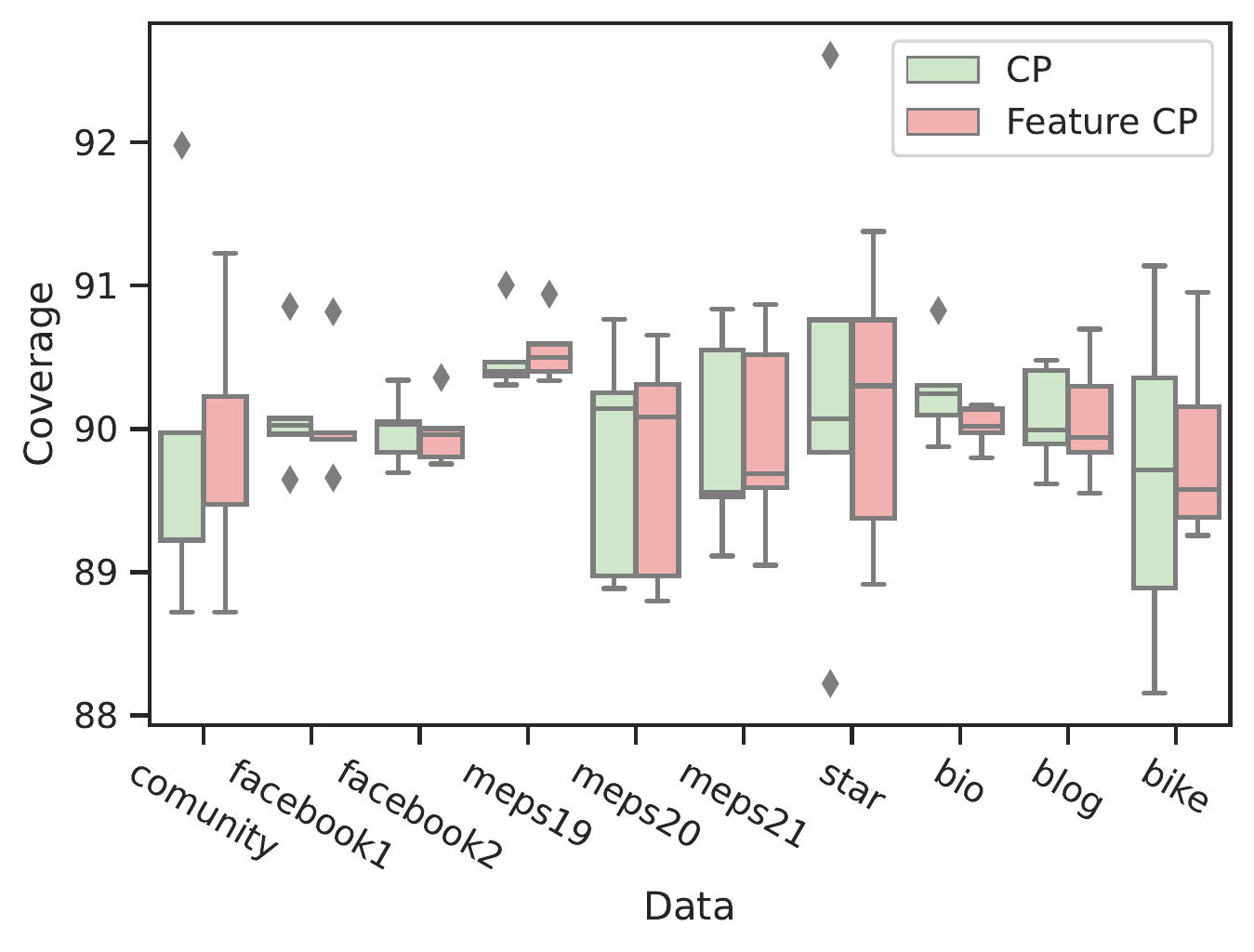}
\end{minipage}
\hspace{0.5cm}
\begin{minipage}{0.40\columnwidth}
\centering
\includegraphics[width=\textwidth]{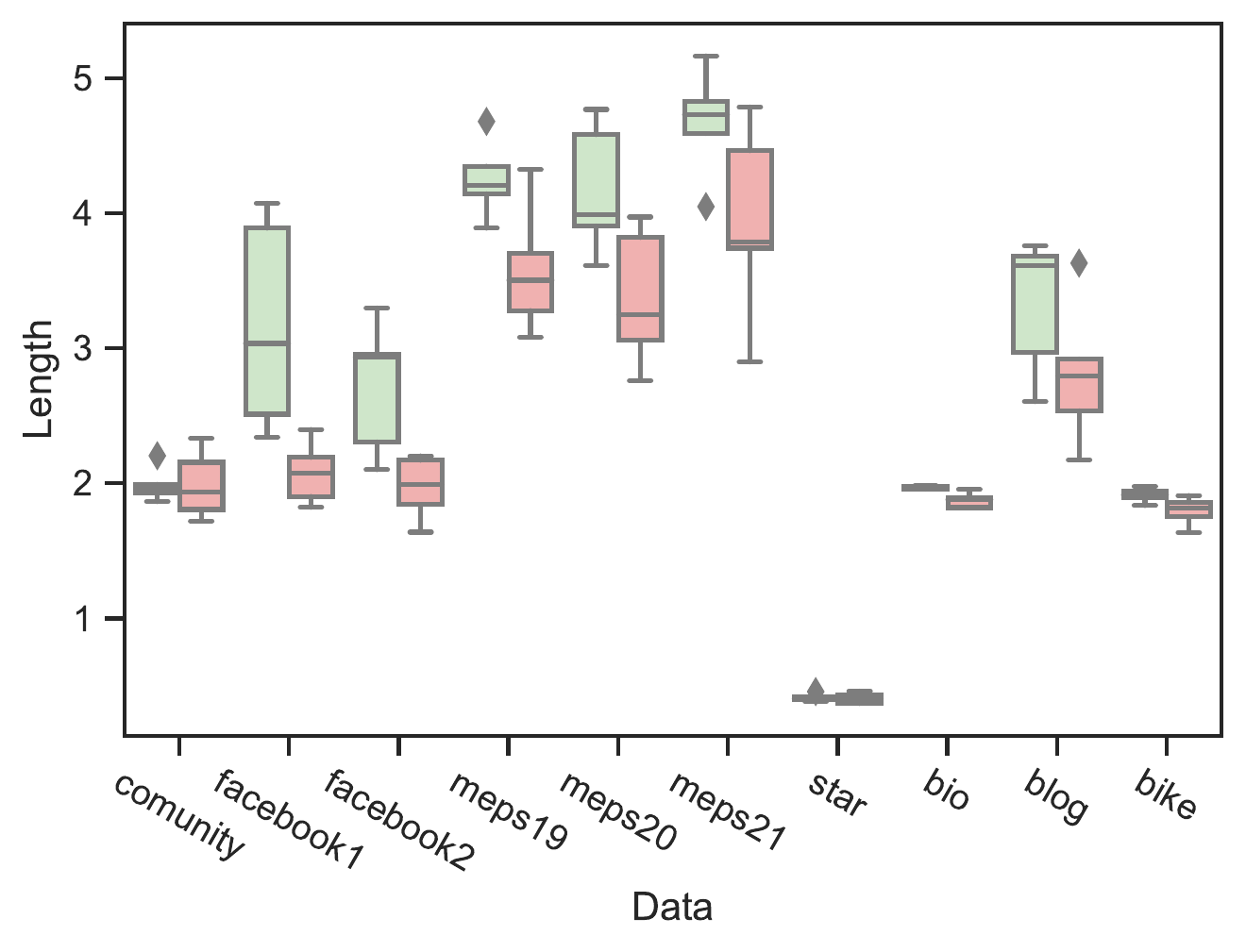}
\end{minipage}
\begin{minipage}{0.40\columnwidth}
\includegraphics[width=\textwidth]{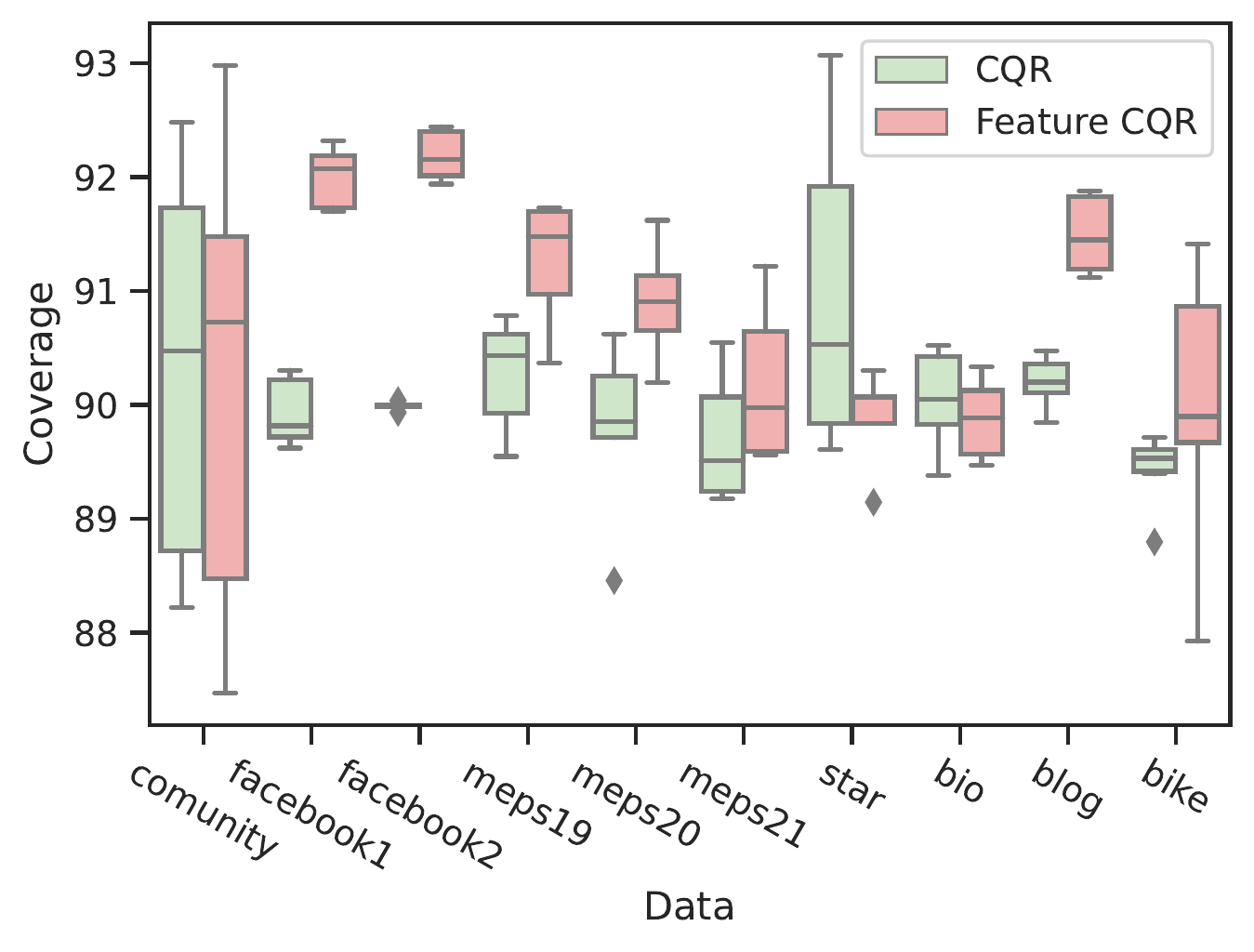}
\end{minipage}
\hspace{.5cm}
\begin{minipage}{0.40\columnwidth}
\centering
\includegraphics[width=\textwidth]{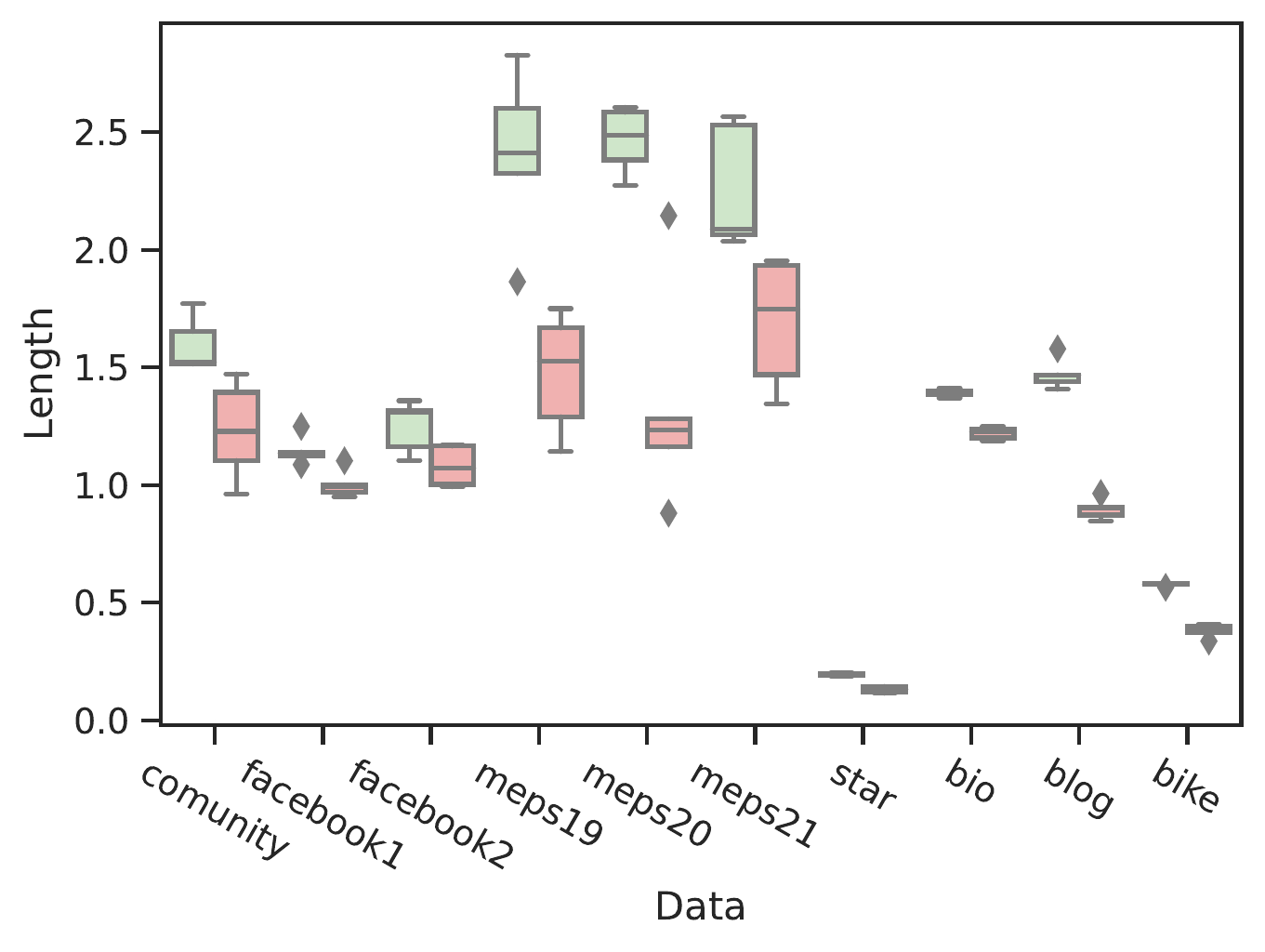}
\end{minipage}
\vspace{-0.2cm}
\caption{Performance on tasks with one-dimensional target ($\alpha=0.1$). \textit{Left:} Empirical coverage.
\textit{Right:} Confidence interval length, where smaller value is better. 
The proposed Feature CP and Feature CQR could consistently achieve shorter bands while maintaining a good coverage performance.
}
\label{fig:1dim-results}
\end{figure}

\textbf{Datasets.} 
We consider both synthetic datasets and real-world datasets, including
(a) realistic unidimensional target datasets: 
five datasets from UCI machine learning repository~\citep{Asuncion2007UCIML}: 
physicochemical properties of protein tertiary structure (bio),
bike sharing (bike),
community and crimes (community) and 
Facebook comment volume variants one and two (facebook 1/2),
five datasets from other sources: blog feedback (blog)~\citep{buza2014feedback},
Tennessee’s student teacher achievement ratio (star)~\citep{achilles2008tennessee},
and medical expenditure panel survey (meps19--21)~\citep{Cohen2009TheME};
(b) synthetic multi-dimensional target dataset: $\Y=\W\X+\epsilon$, where $\X\in [0,1]^{100}, \Y \in \mathbb{R}^{10}$, $\epsilon$ follows the standard Gaussian distribution, and $\W$ is a fixed randomly generated matrix;
and (c) real-world semantic segmentation dataset: Cityscapes~\citep{Cordts2016Cityscapes}, where we transform the original pixel-wise classification problem into a high-dimensional pixel-wise regression problem.
We also extend Feature CP to classification problems and test on the ImageNet~\citep{Deng2009ImageNetAL} dataset.
We defer more related details to Appendix~\ref{appendix: Experimental Details}.

\textbf{Algorithms.} 
We compare the proposed Feature CP against the vanilla conformal baselines without further specified, which directly deploy conformal inference on the output space. 
For both methods, we use $\ell_\infty$-type non-conformity score, namely, $s(X, Y, \mu) = \| Y - \mu(X)\|_\infty$.

\begin{table}[t]
\caption{Performance of both methods on multi-dimensional regression benchmarks ($\alpha=0.1$), where 
``w-length" denotes the weighted confidence band length.
}
\label{tab:multi-dim-results}
\begin{center}
\begin{small}
\begin{sc}
\begin{tabular}{cccccc}
\toprule
Dataset  &  \multicolumn{2}{c}{Synthetic}  &  \multicolumn{3}{c}{Cityscapes}  \\
\cmidrule(lr){2-3} \cmidrule(lr){4-6}
Method &  Coverage  &  Length  
&  Coverage  &  Length  &  W-length  \\
\midrule
\midrule
Baseline  &  $89.91$\scriptsize{$\pm1.03$}  &  
$0.401$\scriptsize{$\pm0.01$}  &  $91.41$\scriptsize{$\pm0.51$}  &  $40.15$\scriptsize{$\pm0.02$}  &  $40.15$\scriptsize{$\pm0.02$}  \\
Feature CP  &  $90.13$\scriptsize{$\pm0.59$}  &  $\textbf{0.373}$\scriptsize{$\pm0.05$}  &  
$90.77$\scriptsize{$\pm0.91$}  &  $\textbf{1.032}$\scriptsize{$\pm0.01$}  &  $\textbf{0.906}$\scriptsize{$\pm0.01$}  \\
\bottomrule
\end{tabular}
\end{sc}
\end{small}
\end{center}
\end{table}

\begin{figure}[t]
    \centering
    \includegraphics[width=0.8\linewidth]{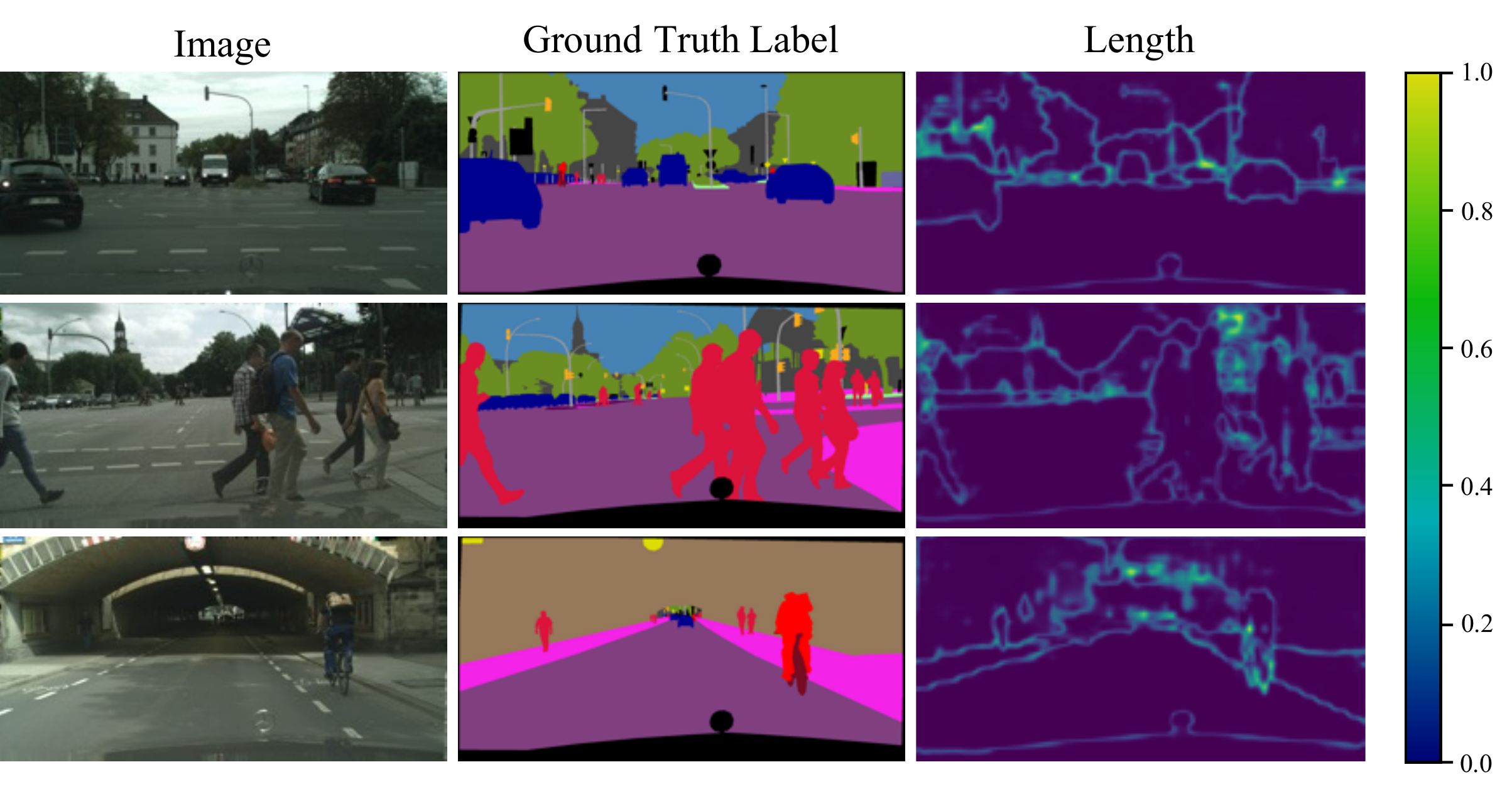}
    \caption{Visualization of Feature CP in image segmentation.
    The brightness of the pixels in the third column measures the uncertainty of Feature CP, namely the length of confidence bands.
    The algorithm is more uncertain in the brighter regions.
    For better visualization, we rescale the interval length to the range of $[0, 1]$.
    Feature CP is more uncertain in non-informative regions, which are object boundaries.
    }
    \label{fig:seg-visualization}
\vspace{-0.2cm}
\end{figure}

\textbf{Evaluation.}
We adopt the following metrics to evaluate algorithmic empirical performance.

\emph{Empirical coverage (effectiveness)} is the empirical probability that a test point falls into the predicted confidence band.
A good predictive inference method should achieve empirical coverage slightly larger than $1-\alpha$ for a given significance level $\alpha$.
To calculate the coverage for Feature CP, we first apply Band Detection on the test point $(\X^\prime, \Y^\prime)$ to detect whether $\Y^\prime$ is in
$\cC^{\text{fcp}}_{1-\alpha}(\X^\prime)$, and then calculate its average value to obtain the empirical coverage.

\emph{Band length (efficiency).}
Given the empirical coverage being larger than $1-\alpha$, we hope the confidence band to be as short as possible.
The band length should be compared under the regime of empirical coverage being larger than $1-\alpha$, otherwise one can always set the confidence band to empty to get a zero band length.
Since the explicit expression for confidence bands is intractable for the proposed algorithm, we could only derive an estimated band length via Band Estimation.
Concretely, we first use Band Estimation to estimate the confidence interval, which returns a band with explicit formulation, and then calculate the average length across each dimension.

We formulate the metrics as follows. 
Let $\Y = (\Y^{(1)},\ldots, \Y^{(d)}) \in \bR^d$ denotes the high dimensional response 
and $\cC(X) \subseteq \bR^d$ denotes the obtained confidence interval,
with length in each dimension forming a vector $|\cC(X)| \in \bR^d$.
With the test set index being $\cI_{\text{te}}$ and $[d]=\{1,\ldots, d\}$, we calculate the empirical coverage and band length
respectively as 
\vspace{-0.3cm}

\begin{equation*}
\centering
\small
\begin{split}
     \frac{1}{|\cI_{\text{te}}|} \sum_{i \in \cI_{\text{te}}} \bI(Y_i \in \cC(X_i)), 
     \quad 
    \frac{1}{|\cI_{\text{te}}|} \sum_{i \in \cI_{\text{te}}} \frac{1}{d} \sum_{j \in [d]} |\cC(X_i)|^{(j)}. 
\end{split}
\end{equation*}

\subsection{Results and Discussion}

\begin{figure}[t]
    \centering
    \subfigure[Facebook1]{\includegraphics[height=0.21\linewidth]{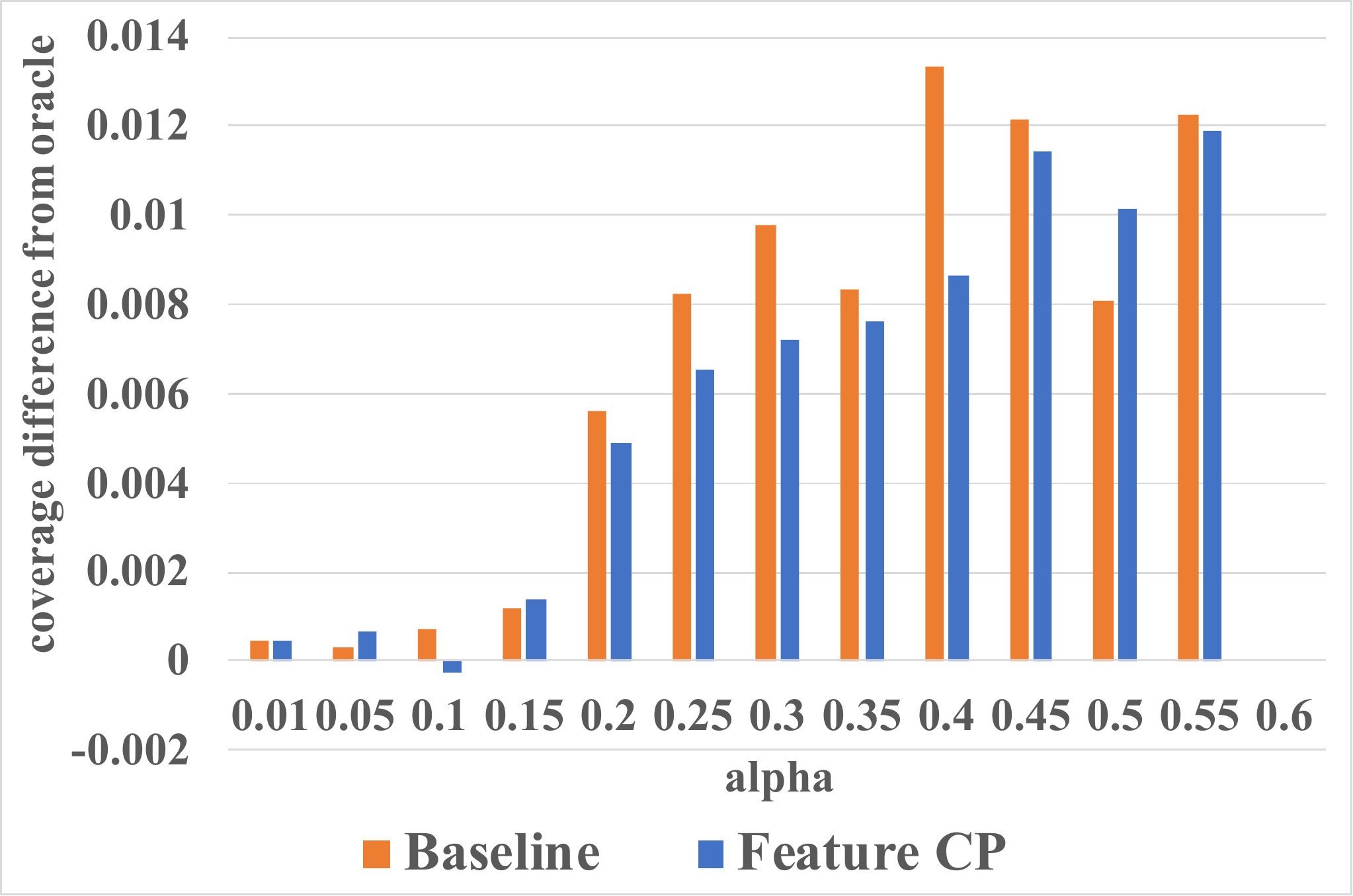}}
    \subfigure[Synthetic]{\includegraphics[height=0.21\linewidth]{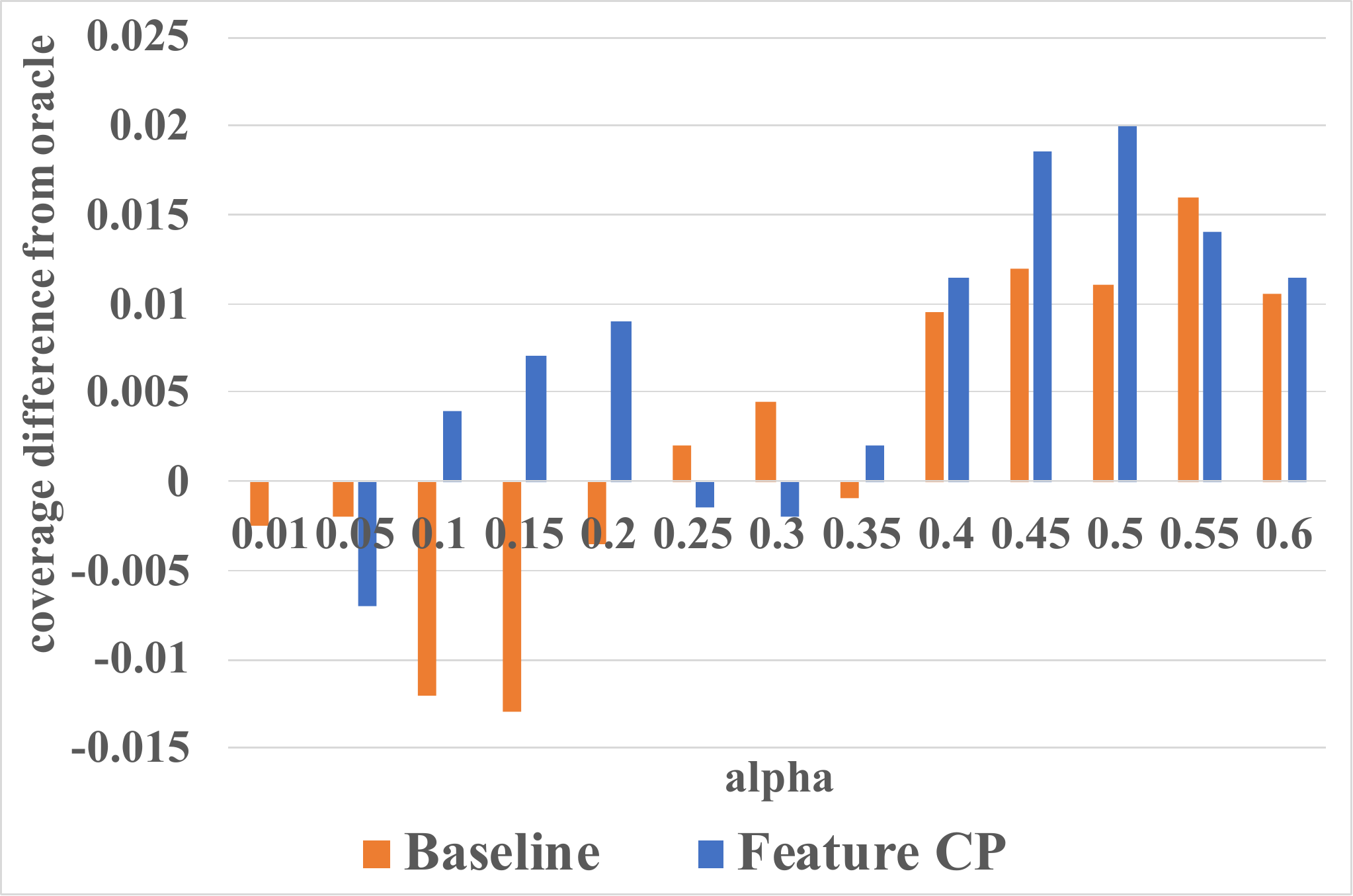}}
	\subfigure[Cityscapes]{\includegraphics[height=0.21\linewidth]{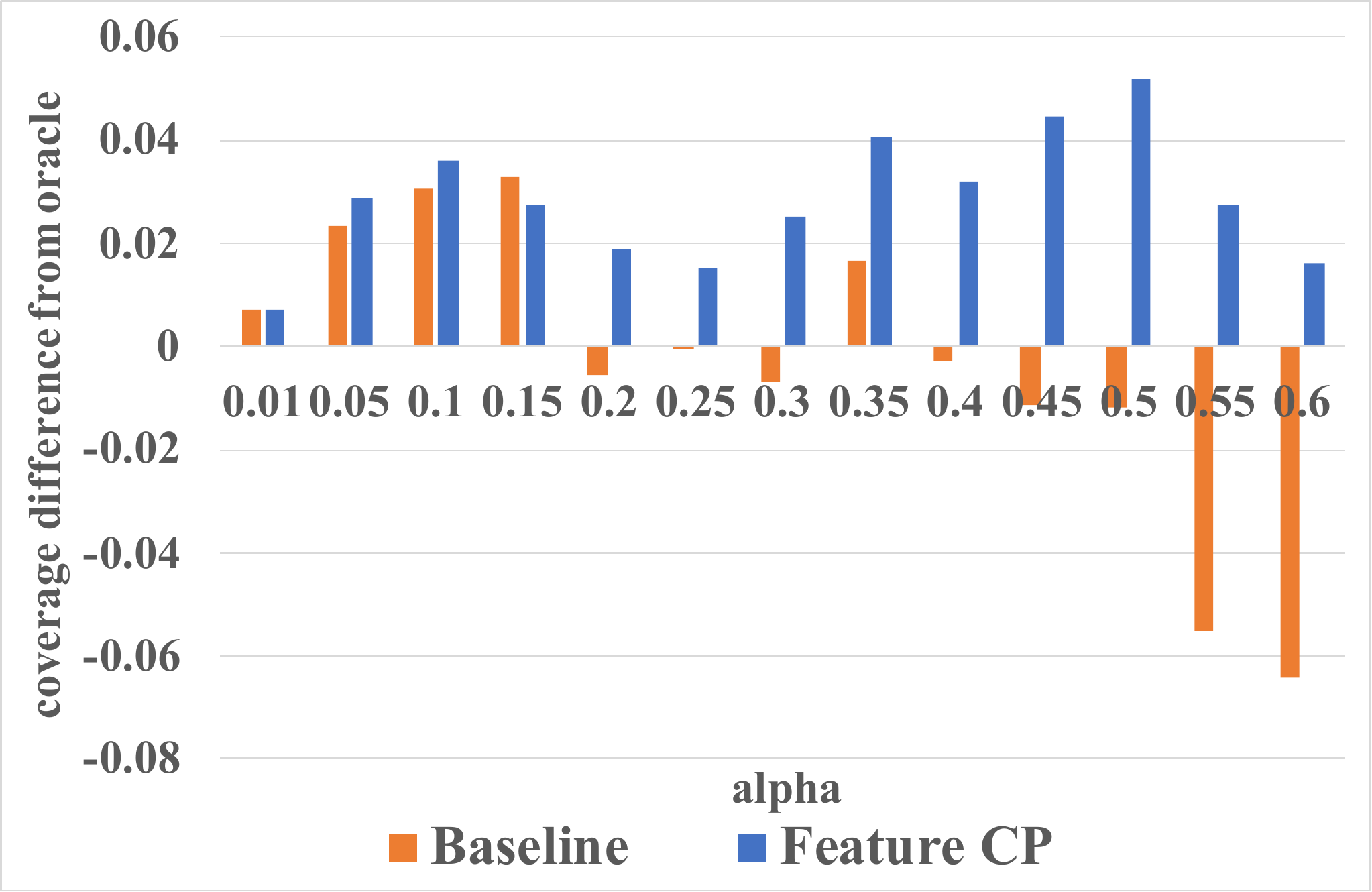}}
    \vspace{-0.2cm}
    \caption{
    Empirical coverage under different confidence levels.
    For a good conformal prediction method, the $y$-axis (\ie, empirical coverage minus ($1-\alpha$)) should keep being above zero for different $\alpha$. 
    These three figures above show that Feature CP generally performs better than the baseline, in the sense that this difference is above zero most of the time.}
    \label{fig:plot-coverage}
\end{figure}

\textbf{Effectiveness.}
We summarize the empirical coverage in Figure~\ref{fig:1dim-results} (one-dimension response) and Table~\ref{tab:multi-dim-results} (multi-dimension response).
As Theorem~\ref{thm: fcp} illustrates, the empirical coverage of Feature CP all exceeds the confidence level $1-\alpha$, indicating that Feature CP is effective.
Besides, 
Figure~\ref{fig:plot-coverage} demonstrates that 
that the effectiveness holds with different significance levels $\alpha$.
For simple benchmarks such as facebook1 and synthetic data, both methods achieve similar coverage due to the simplicity; while for the harder Cityscapes segmentation task, the proposed method outperforms the baseline under many confidence levels.

\textbf{Efficiency.}
We summarize the confidence band in Figure~\ref{fig:1dim-results} (one-dimension response) and Table~\ref{tab:multi-dim-results} (multi-dimension response). 
The band length presented in the tables is estimated via Band Estimation.
Note that Feature CP outperforms the baseline in the sense that it achieves a shorter band length and thus a more efficient algorithm.

\textbf{Comparison to CQR.}
The techniques proposed in this paper can be generalized to other conformal prediction techniques.
As an example, we propose Feature CQR which is a feature-level generalized version of CQR, whose details are deferred to Appendix~\ref{appendix: fcqr}. 
We display the comparison in Figure~\ref{fig:1dim-results}, where our method consistently outperforms CQR baseline by leveraging good representation.
Besides, we evaluate the group coverage performance of CQR and Feature CQR in Appendix~\ref{appendix: fcqr, group coverage}, demonstrating that Feature CQR generally outperforms CQR in terms of condition coverage.
{One can also generalize the other existing techniques to feature versions, \eg, Localized Conformal Prediction~\citep{guan2019conformal,DBLP:journals/corr/abs-2206-13092}. }

\textbf{Extension to classification tasks.} The techniques in Feature CP can be generalized to classification tasks. We take the ImageNet classification dataset as an example. Table~\ref{tab:classification} shows the effectiveness of Feature CP in classification tasks. We experiment on various different architectures and use the same pretrained weights as the baselines.
We provide additional details in Appendix~\ref{appendix: classification}.

\textbf{Ablation on the splitting point. }
We demonstrate that the coverage is robust to the splitting point in both small neural networks (Table~\ref{tab:layer-ablation} in Appendix) and large neural networks (Table~\ref{tab:layer-ablation-classification} in Appendix). 
One can also use standard cross-validation to choose the splitting point.
Since the output space is one of the special feature layers, such techniques always generalize the scope of vanilla CP.

\textbf{Truthfulness.}
We visualize the segmentation results in Figure~\ref{fig:seg-visualization}, which illustrates that Feature CP returns large bands (light region) on the non-informative regions (object boundaries) and small bands (dark region) on the informative regions.
We do not show baseline visualization results since they return the same band in each dimension for each sample, and therefore does not contain much information.
We also evaluate the performance with weighted band length, defined in Appendix~\ref{appendix: Experimental Details}.

\textbf{How does Feature CP benefit from a good deep representation?}
Here we provide some intuition on the success of Feature CP algorithm: we claim it is the usage of good (deep) representation that enables Feature CP to achieve better predictive inference.
To validate this hypothesis, we contrast Feature CP against the baseline with an unlearned neural network (whose feature is not semantic as desired). This ``random" variant of Feature CP does not outperform its vanilla counterpart with the same neural network, which confirms our hypothesis.
We defer the results to Table~\ref{tab:untrained base mode} and related discussion to Appendix~\ref{appendix: fcp advantage untrained not work}. 

{\bf More discussion.} We analyze the failure (\ie, inefficient) reasons of vanilla CP in image segmentation task from the following two perspectives.
Firstly, this paper aims to provide \emph{provable coverage}, namely, the confidence band should cover the ground truth for each pixel.
Since vanilla CP returns the same band for different samples, the loss is pretty large such that the returned interval is large enough to cover the ground truth.
Secondly, an intuitive explanation relates to our usage of $\ell_\infty$ to form the non-conformity score during the training.
We choose the infinity norm because reporting the total band length requires the band length in each dimension.
As a result, the non-conformity score is large as long as there exists one pixel that does not fit well, contributing to an unsatisfying band for vanilla CP.

\begin{table}[t]
\caption{Feature CP outperform previous methods on large-scale classification tasks (ImageNet, $\alpha=0.1$). We compare our results with APS~\citep{DBLP:conf/nips/RomanoSC20} and RAPS~\citep{DBLP:conf/iclr/AngelopoulosBJM21}. The results of baselines are taken from~\citet{DBLP:conf/iclr/AngelopoulosBJM21}.
}
\label{tab:classification}
\begin{center}
\begin{small}
\begin{sc}

\begin{tabular}{ccccccccc}
\toprule
Method  & \multicolumn{2}{c}{Accuracy} & \multicolumn{3}{c}{Coverage} & \multicolumn{3}{c}{Length} \\
\cmidrule(lr){2-3} \cmidrule(lr){4-6} \cmidrule(lr){7-9}
Model &  Top-1  &  Top-5  
&  APS  &  RAPS & Feature CP &  APS  &  RAPS & Feature CP   \\
\midrule
\midrule
ResNet 18  &  $0.698$  &  $0.891$  &  $0.900$  &  $0.900$ &  $0.902$\scriptsize{$\pm0.0023$} & $16.2$ & $4.43$ & $\textbf{3.82}$\scriptsize{$\pm0.18$} \\
ResNet 50  &  $0.761$  &  $0.929$  &  $0.900$  &  $0.900$ &  $0.900$\scriptsize{$\pm0.0047$} & $12.3$ & $2.57$ & $\textbf{2.14}$\scriptsize{$\pm0.03$} \\
ResNet 101  &  $0.774$  &  $0.936$  &  $0.900$  &  $0.900$  &  $0.900$\scriptsize{$\pm0.0025$} & $10.7$ & $2.25$ &  $\textbf{2.17}$\scriptsize{$\pm0.01$} \\
ResNeXt 101  &  $0.783$  &  $0.945$  &  $0.900$  &  $0.900$ &  $0.900$\scriptsize{$\pm0.0030$} & $19.7$ & $2.00$ & $\textbf{1.80}$\scriptsize{$\pm0.08$} \\
ShuffleNet  &  $0.694$  &  $0.883$  &  $0.900$  &  $0.900$   &  $0.901$\scriptsize{$\pm0.0019$} & $31.9$ & $5.05$ & ${5.06}$\scriptsize{$\pm0.04$} \\
VGG 16  &  $0.716$  &  $0.904$  &  $0.901$  &  $0.900$  &  $0.901$\scriptsize{$\pm0.0047$} & $14.1$ & $3.54$ & $\textbf{3.26}$\scriptsize{$\pm0.03$} \\
\bottomrule
\end{tabular}

\end{sc}
\end{small}
\end{center}
\vspace{-0.3cm}
\end{table}
\subsubsection*{Acknowledgments}
This work has been partly supported by the Ministry of Science and Technology of the People's Republic of China, the 2030 Innovation Megaprojects ``Program on New Generation Artificial Intelligence'' (Grant No.~2021AAA0150000). This work is also supported by a grant from the Guoqiang Institute, Tsinghua University.

\bibliography{iclr2022_conference}

\begin{thebibliography}{61}
\providecommand{\natexlab}[1]{#1}
\providecommand{\url}[1]{\texttt{#1}}
\expandafter\ifx\csname urlstyle\endcsname\relax
  \providecommand{\doi}[1]{doi: #1}\else
  \providecommand{\doi}{doi: \begingroup \urlstyle{rm}\Url}\fi

\bibitem[Achilles et~al.(2008)Achilles, Bain, Bellott, Boyd-Zaharias, Finn,
  Folger, Johnston, and Word]{achilles2008tennessee}
CM~Achilles, Helen~Pate Bain, Fred Bellott, Jayne Boyd-Zaharias, Jeremy Finn,
  John Folger, John Johnston, and Elizabeth Word.
\newblock Tennessee’s student teacher achievement ratio (star) project.
\newblock \emph{Harvard Dataverse}, 1:\penalty0 2008, 2008.

\bibitem[Angelopoulos \& Bates(2021)Angelopoulos and
  Bates]{DBLP:journals/corr/abs-2107-07511}
Anastasios~N. Angelopoulos and Stephen Bates.
\newblock A gentle introduction to conformal prediction and distribution-free
  uncertainty quantification.
\newblock \emph{CoRR}, abs/2107.07511, 2021.
\newblock URL \url{https://arxiv.org/abs/2107.07511}.

\bibitem[Angelopoulos et~al.(2021{\natexlab{a}})Angelopoulos, Bates,
  Cand{\`e}s, Jordan, and Lei]{angelopoulos2021learn}
Anastasios~N Angelopoulos, Stephen Bates, Emmanuel~J Cand{\`e}s, Michael~I
  Jordan, and Lihua Lei.
\newblock Learn then test: Calibrating predictive algorithms to achieve risk
  control.
\newblock \emph{arXiv preprint arXiv:2110.01052}, 2021{\natexlab{a}}.

\bibitem[Angelopoulos et~al.(2022)Angelopoulos, Kohli, Bates, Jordan, Malik,
  Alshaabi, Upadhyayula, and Romano]{DBLP:journals/corr/abs-2202-05265}
Anastasios~N. Angelopoulos, Amit P.~S. Kohli, Stephen Bates, Michael~I. Jordan,
  Jitendra Malik, Thayer Alshaabi, Srigokul Upadhyayula, and Yaniv Romano.
\newblock Image-to-image regression with distribution-free uncertainty
  quantification and applications in imaging.
\newblock \emph{CoRR}, abs/2202.05265, 2022.
\newblock URL \url{https://arxiv.org/abs/2202.05265}.

\bibitem[Angelopoulos et~al.(2021{\natexlab{b}})Angelopoulos, Bates, Jordan,
  and Malik]{DBLP:conf/iclr/AngelopoulosBJM21}
Anastasios~Nikolas Angelopoulos, Stephen Bates, Michael~I. Jordan, and Jitendra
  Malik.
\newblock Uncertainty sets for image classifiers using conformal prediction.
\newblock In \emph{9th International Conference on Learning Representations,
  {ICLR} 2021, Virtual Event, Austria, May 3-7, 2021}. OpenReview.net,
  2021{\natexlab{b}}.
\newblock URL \url{https://openreview.net/forum?id=eNdiU\_DbM9}.

\bibitem[Asuncion(2007)]{Asuncion2007UCIML}
Arthur~U. Asuncion.
\newblock Uci machine learning repository, university of california, irvine,
  school of information and computer sciences.
\newblock 2007.

\bibitem[Barber et~al.(2020)Barber, Cand{\`e}s, Ramdas, and
  Tibshirani]{Barber2020TheLO}
Rina~Foygel Barber, Emmanuel~J. Cand{\`e}s, Aaditya Ramdas, and Ryan~J.
  Tibshirani.
\newblock The limits of distribution-free conditional predictive inference.
\newblock \emph{Information and Inference: A Journal of the IMA}, 2020.

\bibitem[Barber et~al.(2022)Barber, Candes, Ramdas, and
  Tibshirani]{barber2022conformal}
Rina~Foygel Barber, Emmanuel~J Candes, Aaditya Ramdas, and Ryan~J Tibshirani.
\newblock Conformal prediction beyond exchangeability.
\newblock \emph{arXiv preprint arXiv:2202.13415}, 2022.

\bibitem[Bates et~al.(2021)Bates, Angelopoulos, Lei, Malik, and
  Jordan]{bates-rcps}
Stephen Bates, Anastasios~Nikolas Angelopoulos, Lihua Lei, Jitendra Malik, and
  Michael~I. Jordan.
\newblock Distribution free, risk controlling prediction sets, 2021.
\newblock URL \url{https://arxiv.org/abs/2101.02703}.

\bibitem[Blundell et~al.(2015)Blundell, Cornebise, Kavukcuoglu, and
  Wierstra]{DBLP:conf/icml/BlundellCKW15}
Charles Blundell, Julien Cornebise, Koray Kavukcuoglu, and Daan Wierstra.
\newblock Weight uncertainty in neural network.
\newblock In Francis~R. Bach and David~M. Blei (eds.), \emph{Proceedings of the
  32nd International Conference on Machine Learning, {ICML} 2015, Lille,
  France, 6-11 July 2015}, volume~37 of \emph{{JMLR} Workshop and Conference
  Proceedings}, pp.\  1613--1622. JMLR.org, 2015.
\newblock URL \url{http://proceedings.mlr.press/v37/blundell15.html}.

\bibitem[Brown et~al.(2020)Brown, Mann, Ryder, Subbiah, Kaplan, Dhariwal,
  Neelakantan, Shyam, Sastry, Askell, Agarwal, Herbert{-}Voss, Krueger,
  Henighan, Child, Ramesh, Ziegler, Wu, Winter, Hesse, Chen, Sigler, Litwin,
  Gray, Chess, Clark, Berner, McCandlish, Radford, Sutskever, and
  Amodei]{DBLP:conf/nips/BrownMRSKDNSSAA20}
Tom~B. Brown, Benjamin Mann, Nick Ryder, Melanie Subbiah, Jared Kaplan,
  Prafulla Dhariwal, Arvind Neelakantan, Pranav Shyam, Girish Sastry, Amanda
  Askell, Sandhini Agarwal, Ariel Herbert{-}Voss, Gretchen Krueger, Tom
  Henighan, Rewon Child, Aditya Ramesh, Daniel~M. Ziegler, Jeffrey Wu, Clemens
  Winter, Christopher Hesse, Mark Chen, Eric Sigler, Mateusz Litwin, Scott
  Gray, Benjamin Chess, Jack Clark, Christopher Berner, Sam McCandlish, Alec
  Radford, Ilya Sutskever, and Dario Amodei.
\newblock Language models are few-shot learners.
\newblock In Hugo Larochelle, Marc'Aurelio Ranzato, Raia Hadsell,
  Maria{-}Florina Balcan, and Hsuan{-}Tien Lin (eds.), \emph{Advances in Neural
  Information Processing Systems 33: Annual Conference on Neural Information
  Processing Systems 2020, NeurIPS 2020, December 6-12, 2020, virtual}, 2020.
\newblock URL
  \url{https://proceedings.neurips.cc/paper/2020/hash/1457c0d6bfcb4967418bfb8ac142f64a-Abstract.html}.

\bibitem[Buza(2014)]{buza2014feedback}
Krisztian Buza.
\newblock Feedback prediction for blogs.
\newblock In \emph{Data analysis, machine learning and knowledge discovery},
  pp.\  145--152. Springer, 2014.

\bibitem[Cand{\`e}s et~al.(2021)Cand{\`e}s, Lei, and
  Ren]{candes2021conformalized}
Emmanuel~J Cand{\`e}s, Lihua Lei, and Zhimei Ren.
\newblock Conformalized survival analysis.
\newblock \emph{arXiv preprint arXiv:2103.09763}, 2021.

\bibitem[Chen et~al.(2021)Chen, Zhang, Gutmann, Courville, and
  Zhu]{chen2021neural}
Yanzhi Chen, Dinghuai Zhang, Michael~U. Gutmann, Aaron Courville, and Zhanxing
  Zhu.
\newblock Neural approximate sufficient statistics for implicit models.
\newblock In \emph{International Conference on Learning Representations}, 2021.
\newblock URL \url{https://openreview.net/forum?id=SRDuJssQud}.

\bibitem[Cohen et~al.(2009)Cohen, Cohen, and Banthin]{Cohen2009TheME}
Joel~W. Cohen, Steven~B. Cohen, and Jessica~S. Banthin.
\newblock The medical expenditure panel survey: A national information resource
  to support healthcare cost research and inform policy and practice.
\newblock \emph{Medical Care}, 47:\penalty0 S44--S50, 2009.

\bibitem[Cordts et~al.(2016)Cordts, Omran, Ramos, Rehfeld, Enzweiler, Benenson,
  Franke, Roth, and Schiele]{Cordts2016Cityscapes}
Marius Cordts, Mohamed Omran, Sebastian Ramos, Timo Rehfeld, Markus Enzweiler,
  Rodrigo Benenson, Uwe Franke, Stefan Roth, and Bernt Schiele.
\newblock The cityscapes dataset for semantic urban scene understanding.
\newblock In \emph{Proc. of the IEEE Conference on Computer Vision and Pattern
  Recognition (CVPR)}, 2016.

\bibitem[Deng et~al.(2009)Deng, Dong, Socher, Li, Li, and
  Fei-Fei]{Deng2009ImageNetAL}
Jia Deng, Wei Dong, Richard Socher, Li-Jia Li, K.~Li, and Li~Fei-Fei.
\newblock Imagenet: A large-scale hierarchical image database.
\newblock \emph{2009 IEEE Conference on Computer Vision and Pattern
  Recognition}, pp.\  248--255, 2009.

\bibitem[Devlin et~al.(2019)Devlin, Chang, Lee, and
  Toutanova]{DBLP:conf/naacl/DevlinCLT19}
Jacob Devlin, Ming{-}Wei Chang, Kenton Lee, and Kristina Toutanova.
\newblock {BERT:} pre-training of deep bidirectional transformers for language
  understanding.
\newblock In Jill Burstein, Christy Doran, and Thamar Solorio (eds.),
  \emph{Proceedings of the 2019 Conference of the North American Chapter of the
  Association for Computational Linguistics: Human Language Technologies,
  {NAACL-HLT} 2019, Minneapolis, MN, USA, June 2-7, 2019, Volume 1 (Long and
  Short Papers)}, pp.\  4171--4186. Association for Computational Linguistics,
  2019.
\newblock \doi{10.18653/v1/n19-1423}.
\newblock URL \url{https://doi.org/10.18653/v1/n19-1423}.

\bibitem[Gawlikowski et~al.(2021)Gawlikowski, Tassi, Ali, Lee, Humt, Feng,
  Kruspe, Triebel, Jung, Roscher, Shahzad, Yang, Bamler, and
  Zhu]{DBLP:journals/corr/abs-2107-03342}
Jakob Gawlikowski, Cedrique Rovile~Njieutcheu Tassi, Mohsin Ali, Jongseok Lee,
  Matthias Humt, Jianxiang Feng, Anna~M. Kruspe, Rudolph Triebel, Peter Jung,
  Ribana Roscher, Muhammad Shahzad, Wen Yang, Richard Bamler, and Xiao~Xiang
  Zhu.
\newblock A survey of uncertainty in deep neural networks.
\newblock \emph{CoRR}, abs/2107.03342, 2021.
\newblock URL \url{https://arxiv.org/abs/2107.03342}.

\bibitem[Gendler et~al.(2022)Gendler, Weng, Daniel, and
  Romano]{gendler2022adversarially}
Asaf Gendler, Tsui-Wei Weng, Luca Daniel, and Yaniv Romano.
\newblock Adversarially robust conformal prediction.
\newblock In \emph{International Conference on Learning Representations}, 2022.
\newblock URL \url{https://openreview.net/forum?id=9L1BsI4wP1H}.

\bibitem[Gibbs \& Cand{\`e}s(2021)Gibbs and Cand{\`e}s]{Gibbs2021AdaptiveCI}
Isaac Gibbs and Emmanuel~J. Cand{\`e}s.
\newblock Adaptive conformal inference under distribution shift.
\newblock In \emph{NeurIPS}, 2021.

\bibitem[Guan(2019)]{guan2019conformal}
Leying Guan.
\newblock Conformal prediction with localization.
\newblock \emph{arXiv preprint arXiv:1908.08558}, 2019.

\bibitem[Guo et~al.(2017{\natexlab{a}})Guo, Pleiss, Sun, and
  Weinberger]{DBLP:conf/icml/GuoPSW17}
Chuan Guo, Geoff Pleiss, Yu~Sun, and Kilian~Q. Weinberger.
\newblock On calibration of modern neural networks.
\newblock In Doina Precup and Yee~Whye Teh (eds.), \emph{Proceedings of the
  34th International Conference on Machine Learning, {ICML} 2017, Sydney, NSW,
  Australia, 6-11 August 2017}, volume~70 of \emph{Proceedings of Machine
  Learning Research}, pp.\  1321--1330. {PMLR}, 2017{\natexlab{a}}.
\newblock URL \url{http://proceedings.mlr.press/v70/guo17a.html}.

\bibitem[Guo et~al.(2017{\natexlab{b}})Guo, Pleiss, Sun, and
  Weinberger]{Guo2017OnCO}
Chuan Guo, Geoff Pleiss, Yu~Sun, and Kilian~Q. Weinberger.
\newblock On calibration of modern neural networks.
\newblock \emph{ArXiv}, abs/1706.04599, 2017{\natexlab{b}}.

\bibitem[Han et~al.(2022)Han, Tang, Ghosh, and
  Liu]{DBLP:journals/corr/abs-2206-13092}
Xing Han, Ziyang Tang, Joydeep Ghosh, and Qiang Liu.
\newblock Split localized conformal prediction.
\newblock \emph{CoRR}, abs/2206.13092, 2022.
\newblock \doi{10.48550/arXiv.2206.13092}.
\newblock URL \url{https://doi.org/10.48550/arXiv.2206.13092}.

\bibitem[Haralick \& Shapiro(1985)Haralick and
  Shapiro]{DBLP:journals/cvgip/HaralickS85}
Robert~M. Haralick and Linda~G. Shapiro.
\newblock Image segmentation techniques.
\newblock \emph{Comput. Vis. Graph. Image Process.}, 29\penalty0 (1):\penalty0
  100--132, 1985.
\newblock \doi{10.1016/S0734-189X(85)90153-7}.
\newblock URL \url{https://doi.org/10.1016/S0734-189X(85)90153-7}.

\bibitem[Hern{\'{a}}ndez{-}Lobato \& Adams(2015)Hern{\'{a}}ndez{-}Lobato and
  Adams]{DBLP:conf/icml/Hernandez-Lobato15b}
Jos{\'{e}}~Miguel Hern{\'{a}}ndez{-}Lobato and Ryan~P. Adams.
\newblock Probabilistic backpropagation for scalable learning of bayesian
  neural networks.
\newblock In Francis~R. Bach and David~M. Blei (eds.), \emph{Proceedings of the
  32nd International Conference on Machine Learning, {ICML} 2015, Lille,
  France, 6-11 July 2015}, volume~37 of \emph{{JMLR} Workshop and Conference
  Proceedings}, pp.\  1861--1869. JMLR.org, 2015.
\newblock URL \url{http://proceedings.mlr.press/v37/hernandez-lobatoc15.html}.

\bibitem[Hu \& Lei(2020)Hu and Lei]{Hu2020ADT}
Xiaoyu Hu and Jing Lei.
\newblock A distribution-free test of covariate shift using conformal
  prediction.
\newblock \emph{arXiv: Methodology}, 2020.

\bibitem[Izbicki et~al.(2020{\natexlab{a}})Izbicki, Shimizu, and
  Stern]{izbicki2020cd}
Rafael Izbicki, Gilson Shimizu, and Rafael~B Stern.
\newblock Cd-split and hpd-split: efficient conformal regions in high
  dimensions.
\newblock \emph{arXiv preprint arXiv:2007.12778}, 2020{\natexlab{a}}.

\bibitem[Izbicki et~al.(2020{\natexlab{b}})Izbicki, Shimizu, and
  Stern]{Izbicki2020DistributionfreeCP}
Rafael Izbicki, Gilson~T. Shimizu, and Rafael~Bassi Stern.
\newblock Distribution-free conditional predictive bands using density
  estimators.
\newblock \emph{ArXiv}, abs/1910.05575, 2020{\natexlab{b}}.

\bibitem[Kuleshov et~al.(2018)Kuleshov, Fenner, and
  Ermon]{DBLP:conf/icml/KuleshovFE18}
Volodymyr Kuleshov, Nathan Fenner, and Stefano Ermon.
\newblock Accurate uncertainties for deep learning using calibrated regression.
\newblock In Jennifer~G. Dy and Andreas Krause (eds.), \emph{Proceedings of the
  35th International Conference on Machine Learning, {ICML} 2018,
  Stockholmsm{\"{a}}ssan, Stockholm, Sweden, July 10-15, 2018}, volume~80 of
  \emph{Proceedings of Machine Learning Research}, pp.\  2801--2809. {PMLR},
  2018.
\newblock URL \url{http://proceedings.mlr.press/v80/kuleshov18a.html}.

\bibitem[Law(2006)]{DBLP:journals/sigact/Law06}
James Law.
\newblock Review of "algorithmic learning in a random world by vovk, gammerman
  and shafer", springer, 2005, {ISBN:} 0-387-00152-2.
\newblock \emph{{SIGACT} News}, 37\penalty0 (4):\penalty0 38--40, 2006.
\newblock \doi{10.1145/1189056.1189065}.
\newblock URL \url{https://doi.org/10.1145/1189056.1189065}.

\bibitem[Lei et~al.(2013)Lei, Rinaldo, and Wasserman]{Lei2013ACP}
Jing Lei, Alessandro Rinaldo, and Larry~A. Wasserman.
\newblock A conformal prediction approach to explore functional data.
\newblock \emph{Annals of Mathematics and Artificial Intelligence},
  74:\penalty0 29--43, 2013.

\bibitem[Lei et~al.(2018)Lei, G’Sell, Rinaldo, Tibshirani, and
  Wasserman]{lei2018distribution}
Jing Lei, Max G’Sell, Alessandro Rinaldo, Ryan~J Tibshirani, and Larry
  Wasserman.
\newblock Distribution-free predictive inference for regression.
\newblock \emph{Journal of the American Statistical Association}, 113\penalty0
  (523):\penalty0 1094--1111, 2018.

\bibitem[Lei \& Cand{\`e}s(2021{\natexlab{a}})Lei and
  Cand{\`e}s]{Lei2021ConformalIO}
Lihua Lei and Emmanuel~J. Cand{\`e}s.
\newblock Conformal inference of counterfactuals and individual treatment
  effects.
\newblock \emph{Journal of the Royal Statistical Society: Series B (Statistical
  Methodology)}, 83, 2021{\natexlab{a}}.

\bibitem[Lei \& Cand{\`e}s(2021{\natexlab{b}})Lei and
  Cand{\`e}s]{lei2021conformal}
Lihua Lei and Emmanuel~J Cand{\`e}s.
\newblock Conformal inference of counterfactuals and individual treatment
  effects.
\newblock \emph{Journal of the Royal Statistical Society: Series B (Statistical
  Methodology)}, 2021{\natexlab{b}}.

\bibitem[Li \& Gal(2017)Li and Gal]{DBLP:conf/icml/LiG17}
Yingzhen Li and Yarin Gal.
\newblock Dropout inference in bayesian neural networks with alpha-divergences.
\newblock In Doina Precup and Yee~Whye Teh (eds.), \emph{Proceedings of the
  34th International Conference on Machine Learning, {ICML} 2017, Sydney, NSW,
  Australia, 6-11 August 2017}, volume~70 of \emph{Proceedings of Machine
  Learning Research}, pp.\  2052--2061. {PMLR}, 2017.
\newblock URL \url{http://proceedings.mlr.press/v70/li17a.html}.

\bibitem[Minaee et~al.(2020)Minaee, Boykov, Porikli, Plaza, Kehtarnavaz, and
  Terzopoulos]{DBLP:journals/corr/abs-2001-05566}
Shervin Minaee, Yuri Boykov, Fatih Porikli, Antonio Plaza, Nasser Kehtarnavaz,
  and Demetri Terzopoulos.
\newblock Image segmentation using deep learning: {A} survey.
\newblock \emph{CoRR}, abs/2001.05566, 2020.
\newblock URL \url{https://arxiv.org/abs/2001.05566}.

\bibitem[Minderer et~al.(2021)Minderer, Djolonga, Romijnders, Hubis, Zhai,
  Houlsby, Tran, and Lucic]{Minderer2021RevisitingTC}
Matthias Minderer, Josip Djolonga, Rob Romijnders, Frances~Ann Hubis, Xiaohua
  Zhai, Neil Houlsby, Dustin Tran, and Mario Lucic.
\newblock Revisiting the calibration of modern neural networks.
\newblock In \emph{NeurIPS}, 2021.

\bibitem[Nixon et~al.(2019)Nixon, Dusenberry, Zhang, Jerfel, and
  Tran]{DBLP:conf/cvpr/NixonDZJT19}
Jeremy Nixon, Michael~W. Dusenberry, Linchuan Zhang, Ghassen Jerfel, and Dustin
  Tran.
\newblock Measuring calibration in deep learning.
\newblock In \emph{{IEEE} Conference on Computer Vision and Pattern Recognition
  Workshops, {CVPR} Workshops 2019, Long Beach, CA, USA, June 16-20, 2019},
  pp.\  38--41. Computer Vision Foundation / {IEEE}, 2019.
\newblock URL
  \url{http://openaccess.thecvf.com/content\_CVPRW\_2019/html/Uncertainty\_and\_Robustness\_in\_Deep\_Visual\_Learning/Nixon\_Measuring\_Calibration\_in\_Deep\_Learning\_CVPRW\_2019\_paper.html}.

\bibitem[Nouretdinov et~al.(2011)Nouretdinov, Costafreda, Gammerman,
  Chervonenkis, Vovk, Vapnik, and
  Fu]{DBLP:journals/neuroimage/NouretdinovCGCVVF11}
Ilia Nouretdinov, Sergi~G. Costafreda, Alexander Gammerman, Alexey~Ya.
  Chervonenkis, Vladimir Vovk, Vladimir Vapnik, and Cynthia H.~Y. Fu.
\newblock Machine learning classification with confidence: Application of
  transductive conformal predictors to mri-based diagnostic and prognostic
  markers in depression.
\newblock \emph{NeuroImage}, 56\penalty0 (2):\penalty0 809--813, 2011.
\newblock \doi{10.1016/j.neuroimage.2010.05.023}.
\newblock URL \url{https://doi.org/10.1016/j.neuroimage.2010.05.023}.

\bibitem[Papadopoulos et~al.(2011)Papadopoulos, Vovk, and
  Gammerman]{DBLP:journals/jair/PapadopoulosVG11}
Harris Papadopoulos, Vladimir Vovk, and Alexander Gammerman.
\newblock Regression conformal prediction with nearest neighbours.
\newblock \emph{J. Artif. Intell. Res.}, 40:\penalty0 815--840, 2011.
\newblock URL \url{http://jair.org/papers/paper3198.html}.

\bibitem[Paszke et~al.(2016)Paszke, Chaurasia, Kim, and
  Culurciello]{paszke2016enet}
Adam Paszke, Abhishek Chaurasia, Sangpil Kim, and Eugenio Culurciello.
\newblock Enet: A deep neural network architecture for real-time semantic
  segmentation.
\newblock \emph{arXiv preprint arXiv:1606.02147}, 2016.

\bibitem[Podkopaev \& Ramdas(2021)Podkopaev and
  Ramdas]{Podkopaev2021DistributionfreeUQ}
Aleksandr Podkopaev and Aaditya Ramdas.
\newblock Distribution-free uncertainty quantification for classification under
  label shift.
\newblock In \emph{UAI}, 2021.

\bibitem[Romano et~al.(2019)Romano, Patterson, and
  Cand{\`{e}}s]{DBLP:conf/nips/RomanoPC19}
Yaniv Romano, Evan Patterson, and Emmanuel~J. Cand{\`{e}}s.
\newblock Conformalized quantile regression.
\newblock In Hanna~M. Wallach, Hugo Larochelle, Alina Beygelzimer, Florence
  d'Alch{\'{e}}{-}Buc, Emily~B. Fox, and Roman Garnett (eds.), \emph{Advances
  in Neural Information Processing Systems 32: Annual Conference on Neural
  Information Processing Systems 2019, NeurIPS 2019, December 8-14, 2019,
  Vancouver, BC, Canada}, pp.\  3538--3548, 2019.
\newblock URL
  \url{https://proceedings.neurips.cc/paper/2019/hash/5103c3584b063c431bd1268e9b5e76fb-Abstract.html}.

\bibitem[Romano et~al.(2020{\natexlab{a}})Romano, Sesia, and
  Cand{\`{e}}s]{DBLP:conf/nips/RomanoSC20}
Yaniv Romano, Matteo Sesia, and Emmanuel~J. Cand{\`{e}}s.
\newblock Classification with valid and adaptive coverage.
\newblock In Hugo Larochelle, Marc'Aurelio Ranzato, Raia Hadsell,
  Maria{-}Florina Balcan, and Hsuan{-}Tien Lin (eds.), \emph{Advances in Neural
  Information Processing Systems 33: Annual Conference on Neural Information
  Processing Systems 2020, NeurIPS 2020, December 6-12, 2020, virtual},
  2020{\natexlab{a}}.
\newblock URL
  \url{https://proceedings.neurips.cc/paper/2020/hash/244edd7e85dc81602b7615cd705545f5-Abstract.html}.

\bibitem[Romano et~al.(2020{\natexlab{b}})Romano, Sesia, and
  Cand{\`e}s]{Romano2020ClassificationWV}
Yaniv Romano, Matteo Sesia, and Emmanuel~J. Cand{\`e}s.
\newblock Classification with valid and adaptive coverage.
\newblock \emph{arXiv: Methodology}, 2020{\natexlab{b}}.

\bibitem[Senthilkumaran \& Rajesh(2009)Senthilkumaran and
  Rajesh]{DBLP:conf/artcom/SenthilkumaranR09}
N.~Senthilkumaran and Reghunadhan Rajesh.
\newblock Image segmentation - {A} survey of soft computing approaches.
\newblock In \emph{ARTCom 2009, International Conference on Advances in Recent
  Technologies in Communication and Computing, Kottayam, Kerala, India, 27-28
  October 2009}, pp.\  844--846. {IEEE} Computer Society, 2009.
\newblock \doi{10.1109/ARTCom.2009.219}.
\newblock URL \url{https://doi.org/10.1109/ARTCom.2009.219}.

\bibitem[Sesia \& Cand{\`e}s(2020)Sesia and Cand{\`e}s]{sesia2020comparison}
Matteo Sesia and Emmanuel~J Cand{\`e}s.
\newblock A comparison of some conformal quantile regression methods.
\newblock \emph{Stat}, 9\penalty0 (1):\penalty0 e261, 2020.

\bibitem[Sesia \& Romano(2021)Sesia and Romano]{DBLP:conf/nips/SesiaR21}
Matteo Sesia and Yaniv Romano.
\newblock Conformal prediction using conditional histograms.
\newblock In Marc'Aurelio Ranzato, Alina Beygelzimer, Yann~N. Dauphin, Percy
  Liang, and Jennifer~Wortman Vaughan (eds.), \emph{Advances in Neural
  Information Processing Systems 34: Annual Conference on Neural Information
  Processing Systems 2021, NeurIPS 2021, December 6-14, 2021, virtual}, pp.\
  6304--6315, 2021.
\newblock URL
  \url{https://proceedings.neurips.cc/paper/2021/hash/31b3b31a1c2f8a370206f111127c0dbd-Abstract.html}.

\bibitem[Shafer \& Vovk(2008)Shafer and Vovk]{DBLP:journals/jmlr/ShaferV08}
Glenn Shafer and Vladimir Vovk.
\newblock A tutorial on conformal prediction.
\newblock \emph{J. Mach. Learn. Res.}, 9:\penalty0 371--421, 2008.
\newblock URL \url{https://dl.acm.org/citation.cfm?id=1390693}.

\bibitem[Silver et~al.(2017)Silver, Schrittwieser, Simonyan, Antonoglou, Huang,
  Guez, Hubert, Baker, Lai, Bolton, Chen, Lillicrap, Hui, Sifre, van~den
  Driessche, Graepel, and Hassabis]{DBLP:journals/nature/SilverSSAHGHBLB17}
David Silver, Julian Schrittwieser, Karen Simonyan, Ioannis Antonoglou, Aja
  Huang, Arthur Guez, Thomas Hubert, Lucas Baker, Matthew Lai, Adrian Bolton,
  Yutian Chen, Timothy~P. Lillicrap, Fan Hui, Laurent Sifre, George van~den
  Driessche, Thore Graepel, and Demis Hassabis.
\newblock Mastering the game of go without human knowledge.
\newblock \emph{Nat.}, 550\penalty0 (7676):\penalty0 354--359, 2017.
\newblock \doi{10.1038/nature24270}.
\newblock URL \url{https://doi.org/10.1038/nature24270}.

\bibitem[Smith(2014)]{DBLP:books/daglib/0032893}
Ralph~C. Smith.
\newblock \emph{Uncertainty Quantification - Theory, Implementation, and
  Applications}.
\newblock Computational science and engineering. {SIAM}, 2014.
\newblock ISBN 978-1-611973-21-1.
\newblock URL \url{http://bookstore.siam.org/cs12/}.

\bibitem[Stutz et~al.(2021)Stutz, Dvijotham, Cemgil, and
  Doucet]{Stutz2021LearningOC}
David Stutz, Krishnamurthy Dvijotham, Ali~Taylan Cemgil, and A.~Doucet.
\newblock Learning optimal conformal classifiers.
\newblock \emph{ArXiv}, abs/2110.09192, 2021.

\bibitem[Sullivan(2015)]{sullivan2015introduction}
Timothy~John Sullivan.
\newblock \emph{Introduction to uncertainty quantification}, volume~63.
\newblock Springer, 2015.

\bibitem[Teng et~al.(2021)Teng, Tan, and Yuan]{DBLP:conf/icml/TengTY21}
Jiaye Teng, Zeren Tan, and Yang Yuan.
\newblock {T-SCI:} {A} two-stage conformal inference algorithm with guaranteed
  coverage for cox-mlp.
\newblock In Marina Meila and Tong Zhang (eds.), \emph{Proceedings of the 38th
  International Conference on Machine Learning, {ICML} 2021, 18-24 July 2021,
  Virtual Event}, volume 139 of \emph{Proceedings of Machine Learning
  Research}, pp.\  10203--10213. {PMLR}, 2021.
\newblock URL \url{http://proceedings.mlr.press/v139/teng21a.html}.

\bibitem[Tibshirani et~al.(2019)Tibshirani, Barber, Cand{\`{e}}s, and
  Ramdas]{DBLP:conf/nips/TibshiraniBCR19}
Ryan~J. Tibshirani, Rina~Foygel Barber, Emmanuel~J. Cand{\`{e}}s, and Aaditya
  Ramdas.
\newblock Conformal prediction under covariate shift.
\newblock In Hanna~M. Wallach, Hugo Larochelle, Alina Beygelzimer, Florence
  d'Alch{\'{e}}{-}Buc, Emily~B. Fox, and Roman Garnett (eds.), \emph{Advances
  in Neural Information Processing Systems 32: Annual Conference on Neural
  Information Processing Systems 2019, NeurIPS 2019, December 8-14, 2019,
  Vancouver, BC, Canada}, pp.\  2526--2536, 2019.
\newblock URL
  \url{https://proceedings.neurips.cc/paper/2019/hash/8fb21ee7a2207526da55a679f0332de2-Abstract.html}.

\bibitem[Vovk et~al.(2005)Vovk, Gammerman, and Shafer]{vovk2005algorithmic}
Vladimir Vovk, Alexander Gammerman, and Glenn Shafer.
\newblock \emph{Algorithmic learning in a random world}.
\newblock Springer Science \& Business Media, 2005.

\bibitem[Xu \& Xie(2021)Xu and Xie]{Xu2021ConformalPI}
Chen Xu and Yao Xie.
\newblock Conformal prediction interval for dynamic time-series.
\newblock In \emph{ICML}, 2021.

\bibitem[Xu et~al.(2020)Xu, Shi, Zhang, Wang, Chang, Huang, Kailkhura, Lin, and
  Hsieh]{DBLP:conf/nips/XuS0WCHKLH20}
Kaidi Xu, Zhouxing Shi, Huan Zhang, Yihan Wang, Kai{-}Wei Chang, Minlie Huang,
  Bhavya Kailkhura, Xue Lin, and Cho{-}Jui Hsieh.
\newblock Automatic perturbation analysis for scalable certified robustness and
  beyond.
\newblock In Hugo Larochelle, Marc'Aurelio Ranzato, Raia Hadsell,
  Maria{-}Florina Balcan, and Hsuan{-}Tien Lin (eds.), \emph{Advances in Neural
  Information Processing Systems 33: Annual Conference on Neural Information
  Processing Systems 2020, NeurIPS 2020, December 6-12, 2020, virtual}, 2020.
\newblock URL
  \url{https://proceedings.neurips.cc/paper/2020/hash/0cbc5671ae26f67871cb914d81ef8fc1-Abstract.html}.

\bibitem[Yang \& Kuchibhotla(2021)Yang and Kuchibhotla]{yang2021finite}
Yachong Yang and Arun~Kumar Kuchibhotla.
\newblock Finite-sample efficient conformal prediction.
\newblock \emph{arXiv preprint arXiv:2104.13871}, 2021.

\end{thebibliography}
\bibliographystyle{iclr2022_conference}

\newpage
\appendix
\begin{center}
\huge{Appendix}
\end{center}

Section~\ref{appendix: proofs} provides the complete proofs, and Section~\ref{appendix: Experimental Details} provides experiment details.

\section{Theoretical Proofs}
\label{appendix: proofs}

 We first show the formal version of Theorem~\ref{thm: informal theorem} in Section~\ref{appendix: theoretical details}, and show its proof in Section~\ref{sec:proof_coverage}.
Theorem~\ref{thm: fcp} and Theorem~\ref{thm: fcp efficient} shows the effectiveness (empirical coverage) and the efficiency (band length) in Feature CP.

We additionally provide Theorem~\ref{thm: lenght variance} (see Section~\ref{appendix: length variance}) and Theorem~\ref{thm: convergence rate} (see Section~\ref{appendix: convergence}) to better validate our theorem, in terms of the length variance and convergence rate.

\subsection{Theoretical Guarantee}
\label{appendix: theoretical details}
This section provides theoretical guarantees for Feature CP regarding coverage (effectiveness) and band length (efficiency), starting from additional notations.

\textbf{Notations.} 
Let $\cP$ denote the population distribution. 
Let $\cD_{\text{ca}} \sim \cP^n$ denote the calibration set with sample size $n$ and sample index $\cI_{\text{ca}}$, where we overload the notation $\cP^n$ to denote the distribution of a set with samples drawn from distribution $\cP$.
Given the model $\hat{g} \circ \hat{f}$ with feature extractor $\hat{f}$ and prediction head $\hat{g}$, we assume $\hat{g}$ is continuous. 
We also overload the notation $Q_{1-\alpha}(V)$ to denote the $(1-\alpha)$-quantile of the set $V\cup \{\infty\}$.
Besides, let $\bM[\cdot]$ denote the mean of a set, and a set minus a real number denote the broadcast operation.

\emph{Vanilla CP.}\phantom{a}
Let $V^o_{\cD_{\text{ca}}} = \{v_i^o\}_{i \in \cI_{\text{ca}}}$ denote the {individual} length in the output space for vanilla CP, given the calibration set $\cD_{\text{ca}}$.
Concretely, $v_i^o = 2 |y_i - \hat{y}_i|$ where $y_i$ denotes the true response of sample $i$ and $\hat{y}_i$ denotes the corresponding prediction. 
Since vanilla CP returns band length with $1-\alpha$ quantile of non-conformity score, the resulting average band length is derived by $Q_{1-\alpha} (V^o_{\cD_{\text{ca}}})$. 

\emph{Feature CP.}\phantom{a}
Let $V^f_{\cD_{\text{ca}}} = \{v_i^f\}_{i \in \cI_{\text{ca}}}$ be the {individual} length (or diameter in high dimensional cases) in the feature space for Feature CP, given the calibration set $\cD_{\text{ca}}$.
To characterize the band length in the output space, we define $\cH(v, X)$ as the individual length on sample $X$ in the output space, given the length $v$ in the feature space, \ie, $\cH(v, X)$ represents the length of the set $\{ \hat{g}(u) \in \bR: \| u - \hat{f}(X)\| \leq v/2 \}$.
Due to the continuity assumption on function $\hat{g}$, the above set is always simply-connected.
We here omit the dependency of prediction head $\hat{g}$ in $\cH$ for simplicity.
The resulting band length in Feature CP is denoted by $\bE_{(X^\prime, Y^\prime)\sim \cP} \cH( Q_{1-\alpha}(V^f_{\cD_{\text{ca}}}), X^\prime)$.
Without abuse of notations, operating $\cH$ on a dataset (\eg, $\cH(V^f_{\cD_{\text{ca}}}, \cD_{\text{ca}})$) means operating $\cH$ on each data point $(v_i^f, X_i)$ in the set. 

\textbf{Coverage guarantee.}
We next provide theoretical guarantees for Feature CP in Theorem~\ref{thm: fcp}, which informally shows that under Assumption~\ref{assump: exchangeability}, the confidence band returned by Algorithm~\ref{alg: fcp} is valid, meaning that the coverage is provably larger than $1-\alpha$.
We defer the whole proof to Appendix~\ref{sec:proof_coverage}.

\begin{restatable}[theoretical guarantee for Feature CP]{theorem}{coverage}
\label{thm: fcp}
Under Assumption~\ref{assump: exchangeability},
for any $\alpha>0$, the confidence band returned by Algorithm~\ref{alg: fcp} satisfies:
\begin{equation*}
    \bP(\Y^\prime \in \cC_{1-\alpha}^{\text{fcp}}(\X^\prime)) \geq 1-\alpha,
\end{equation*}
{where the probability is taken over the calibration fold and the testing point $(\X^\prime, \Y^\prime)$.}
\end{restatable}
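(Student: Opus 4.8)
The plan is to reduce the claim to the classical conformal quantile argument, isolating the only Feature-CP-specific step: translating a feature-space non-conformity score below the quantile into membership in the output-space band. Throughout I would condition on the training fold $\cD_{\text{tr}}$, so that $\hat f$ and $\hat g$ (hence the entire score map) are fixed deterministic functions; since the resulting bound is $\geq 1-\alpha$ for every realization of $\cD_{\text{tr}}$, taking expectation over $\cD_{\text{tr}}$ recovers the unconditional statement (the probability being over the calibration fold and test point). \textbf{Step 1 (exchangeability of scores).} The score $V_i = \s(\X_i, \Y_i, \hat g \circ \hat f)$ from Equation~\eqref{eqn: score} is a fixed measurable function of the pair $(\X_i, \Y_i)$ alone, because $\hat g, \hat f$ depend only on $\cD_{\text{tr}}$. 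By Assumption~\ref{assump: exchangeability}, the calibration pairs $(\X_i, \Y_i)_{i \in \cI_{\text{ca}}}$ and the test pair $(\X^\prime, \Y^\prime)$ are exchangeable, and applying the same fixed map to each pair preserves exchangeability; thus $V_1, \dots, V_n, V^\prime$ are exchangeable, where $V^\prime = \s(\X^\prime, \Y^\prime, \hat g \circ \hat f)$ and $n = |\cI_{\text{ca}}|$.

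\textbf{Step 2 (quantile lemma).} For exchangeable $V_1, \dots, V_n, V^\prime$ the rank of $V^\prime$ among the $n+1$ values is sub-uniform, which yields the standard inequality $\bP(V^\prime \leq Q_{1-\alpha}) \geq 1-\alpha$, where $Q_{1-\alpha}$ is the $(1-\alpha)$-quantile of $\frac{1}{n+1}\sum_{i \in \cI_{\text{ca}}}\delta_{V_i} + \frac{1}{n+1}\delta_{\infty}$; the atom at $\infty$ accounts for the $+1$ in the denominator. This is precisely the argument underlying Theorem~\ref{prop:cp}, so I would invoke it rather than reprove it.

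\textbf{Step 3 (score below quantile implies coverage).} This is the part particular to Feature CP. I claim $\{V^\prime \leq Q_{1-\alpha}\} \subseteq \{\Y^\prime \in \cC_{1-\alpha}^{\text{fcp}}(\X^\prime)\}$. Because $\hat g$ is continuous, the preimage $\{v: \hat g(v) = \Y^\prime\}$ is closed in the finite-dimensional feature space, so the distance from $\hat f(\X^\prime)$ to this (nonempty) set is attained at some $v^\star$ with $\hat g(v^\star) = \Y^\prime$ and $\|v^\star - \hat f(\X^\prime)\| = V^\prime \leq Q_{1-\alpha}$. Hence $\Y^\prime = \hat g(v^\star)$ lies in the exact feature-induced band $\{\hat g(v): \|v - \hat f(\X^\prime)\| \leq Q_{1-\alpha}\}$ of Equation~\eqref{eqn: estimation}. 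Since Band Detection declares membership exactly when $V^\prime \leq Q_{1-\alpha}$, and Band Estimation (via LiPRA) returns an outer approximation of this exact band, in either realization of step~5 we obtain $\Y^\prime \in \cC_{1-\alpha}^{\text{fcp}}(\X^\prime)$.

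Chaining the three steps gives $\bP(\Y^\prime \in \cC_{1-\alpha}^{\text{fcp}}(\X^\prime)) \geq \bP(V^\prime \leq Q_{1-\alpha}) \geq 1-\alpha$. The main obstacle is Step~3: one must guarantee that the infimum defining the score is genuinely attained, so that a score at or below the quantile actually certifies $\Y^\prime$ in the band rather than merely approximating it, and that the practical Band Estimation only ever enlarges the exact band. The former follows from continuity of $\hat g$ together with finite-dimensionality of the feature space, and the latter from the soundness (outer-approximation property) of the linear relaxation; Steps~1 and~2 are entirely standard.
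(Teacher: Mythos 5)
Your proposal is correct and follows essentially the same route as the paper: condition on the training fold so that the score map is a fixed function of the data pair, deduce exchangeability of the feature-space non-conformity scores from Assumption~\ref{assump: exchangeability}, and invoke the standard quantile/rank lemma. Your Step~3 --- verifying that the infimum defining the score in Equation~\eqref{eqn: score} is attained (closed preimage of $\Y^\prime$ under the continuous $\hat g$ in a finite-dimensional feature space), so that $V^\prime \leq Q_{1-\alpha}$ genuinely certifies $\Y^\prime \in \cC_{1-\alpha}^{\text{fcp}}(\X^\prime)$ rather than merely approximating membership, and that Band Estimation only enlarges the exact band --- is a detail the paper's proof leaves implicit when it appeals to Lemma~1 of \citet{DBLP:conf/nips/TibshiraniBCR19}, and it is a worthwhile addition.
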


\textbf{Length (efficiency) guarantee.}
We next show in Theorem~\ref{thm: fcp efficient} that Feature CP is provably more efficient than the vanilla CP.  

\begin{restatable}[Feature CP is provably more efficient]{theorem}{efficient}
\label{thm: fcp efficient}
{Assume that the non-conformity score is  in norm-type.}
For the operator $\cH$, we assume a Holder assumption that there exist $\alpha>0, L>0$ such that $| \cH(v, X) - \cH(u, X) | \leq L |v - u|^\alpha$ for all $X$. 
Besides, we assume that there exists $\epsilon>0$, $c>0$, such that the feature space satisfies the following cubic conditions:
\vspace{-0.05cm}
\begin{enumerate}
    \item \textbf{Length Preservation.} Feature CP does not cost much loss in feature space in a quantile manner, namely, $\bE_{\cD \sim \cP^n}  Q_{1-\alpha}( \cH(V^f_{{\cD}}, {\cD})) <\bE_{\cD \sim \cP^n} Q_{1-\alpha}(V^o_{{\cD}}) + \epsilon$.
    
    \item \textbf{Expansion.} The operator $\cH(v, X)$ expands the differences between individual length and their quantiles, namely, $L \bE_{\cD \sim \cP^n} \bM | Q_{1-\alpha}(V^f_{{\cD}}) - V^f_{{\cD}} |^\alpha < \bE_{\cD \sim \cP^n}
    \bM [Q_{1-\alpha}( \cH(V^f_{{\cD}}, {\cD})) - \cH(V^f_{{\cD}}, {\cD})] - \epsilon - 2 \max\{L, 1\} (c/\sqrt{n})^{\min\{\alpha, 1\}}$. 
    \item \textbf{Quantile Stability.} Given a calibration set $\cD_{\text{ca}}$, the quantile of the band length is stable in both feature space and output space, namely, $\bE_{\cD \sim \cP^n} | Q_{1-\alpha}(V_{{\cD}}^f) - Q_{1-\alpha}(V_{{\cD_\text{ca}}}^f) |  \leq \frac{c}{\sqrt{n}}$ and $ \bE_{\cD \sim \cP^n} | Q_{1-\alpha}(V_{{\cD}}^o) - Q_{1-\alpha}(V_{{\cD_{\text{ca}}}}^o) |  \leq \frac{c}{\sqrt{n}}$.
    \vspace{-0.05cm}
\end{enumerate}
Then Feature CP provably outperforms vanilla CP in terms of average band length, namely,
\begin{equation*}
  \bE \cH( Q_{1-\alpha}(V^f_{\cD_{\text{ca}}}), X^\prime) < Q_{1-\alpha}(V_{\cD_{\text{ca}}}^o),
\end{equation*}
{where the expectation is taken over the calibration fold and the testing point $(X^\prime, Y^\prime)$.}
\end{restatable}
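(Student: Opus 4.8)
The plan is to read the claimed inequality as a comparison of expected band lengths: writing $q^f_{\cD} := Q_{1-\alpha}(V^f_{\cD})$ for brevity, I would treat $\cD_{\text{ca}}$ as the realized calibration fold while $\cD \sim \cP^n$ ranges over fresh i.i.d.\ draws inside the cubic conditions, and build a single chain of inequalities from the Feature CP band length $\bE\,\cH(q^f_{\cD_{\text{ca}}}, X^\prime)$ down to the vanilla length $Q_{1-\alpha}(V^o_{\cD_{\text{ca}}})$, invoking the conditions in the order Expansion $\to$ Length Preservation $\to$ Quantile Stability. The design of the Expansion condition is what makes this work: its $-\epsilon - 2\max\{L,1\}(c/\sqrt n)^{\min\{\alpha,1\}}$ term is exactly the budget needed to absorb the $\epsilon$ from Length Preservation together with the two separate $c/\sqrt n$ errors coming from the feature-space and output-space halves of Quantile Stability.

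First I would reduce the test-point quantity to calibration-set averages. Since $q^f_{\cD_{\text{ca}}}$ is frozen relative to a fresh draw $\cD$ and $X^\prime$ is i.i.d.\ with the points of $\cD$, one has the exact identity $\bE_{X^\prime}\cH(q^f_{\cD_{\text{ca}}}, X^\prime) = \bE_{\cD}\bM[\cH(q^f_{\cD_{\text{ca}}}, \cD)]$. I would then replace the fixed quantile $q^f_{\cD_{\text{ca}}}$ by each set's own quantile $q^f_{\cD}$ via the Holder assumption applied termwise, $|\cH(q^f_{\cD_{\text{ca}}}, X_i) - \cH(q^f_{\cD}, X_i)| \le L|q^f_{\cD_{\text{ca}}} - q^f_{\cD}|^\alpha$, and bound the expectation using feature-space Quantile Stability: for $\alpha \le 1$ concavity gives $L(\bE_{\cD}|q^f_{\cD} - q^f_{\cD_{\text{ca}}}|)^\alpha \le L(c/\sqrt n)^\alpha$, while for $\alpha > 1$ one invokes boundedness of the quantile gaps so $|q^f_{\cD}-q^f_{\cD_{\text{ca}}}|^\alpha \le |q^f_{\cD}-q^f_{\cD_{\text{ca}}}|$. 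Either way,
\[
\bE\,\cH(q^f_{\cD_{\text{ca}}}, X^\prime) \;\le\; \bE_{\cD}\bM[\cH(q^f_{\cD}, \cD)] + \max\{L,1\}(c/\sqrt n)^{\min\{\alpha,1\}}.
\]

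Next I would carry out the core comparison on a fresh set $\cD$. Applying Holder termwise between the common-quantile length $\cH(q^f_{\cD}, X_i)$ and the individual length $\cH(v_i^f, X_i)$ produces exactly the left-hand side of Expansion, $L\,\bM|q^f_{\cD} - V^f_{\cD}|^\alpha$; rewriting the individual-length mean through the elementary identity $\bM[\cH(V^f_{\cD},\cD)] = Q_{1-\alpha}(\cH(V^f_{\cD},\cD)) - \bM[Q_{1-\alpha}(\cH(V^f_{\cD},\cD)) - \cH(V^f_{\cD},\cD)]$ and invoking Expansion cancels the mean-of-gaps term, leaving $\bE_{\cD}\bM[\cH(q^f_{\cD},\cD)] < \bE_{\cD} Q_{1-\alpha}(\cH(V^f_{\cD},\cD)) - \epsilon - 2\max\{L,1\}(c/\sqrt n)^{\min\{\alpha,1\}}$. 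Length Preservation then replaces $\bE_{\cD} Q_{1-\alpha}(\cH(V^f_{\cD},\cD))$ by $\bE_{\cD} Q_{1-\alpha}(V^o_{\cD}) + \epsilon$, the two $\epsilon$'s cancel, and I obtain $\bE_{\cD}\bM[\cH(q^f_{\cD},\cD)] < \bE_{\cD} Q_{1-\alpha}(V^o_{\cD}) - 2\max\{L,1\}(c/\sqrt n)^{\min\{\alpha,1\}}$.

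Finally I would combine the two displays and spend the remaining slack: output-space Quantile Stability gives $\bE_{\cD}Q_{1-\alpha}(V^o_{\cD}) \le Q_{1-\alpha}(V^o_{\cD_{\text{ca}}}) + c/\sqrt n$, and since $c/\sqrt n \le \max\{L,1\}(c/\sqrt n)^{\min\{\alpha,1\}}$ once $c/\sqrt n \le 1$, the error bookkeeping ($+1$ from the test-point reduction, $-2$ from Expansion, $+1$ from output stability) leaves a nonpositive remainder and yields $\bE\,\cH(q^f_{\cD_{\text{ca}}}, X^\prime) < Q_{1-\alpha}(V^o_{\cD_{\text{ca}}})$. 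The main obstacle is the first reduction: the clean identity holds only because the quantile is frozen against the fresh draw, and making the subsequent swap to data-dependent quantiles rigorous needs the Quantile Stability control together with an honest treatment of the $\alpha>1$ regime, where the expectation bound must be upgraded to an almost-sure bound on the quantile gaps, which is exactly why the exponent $\min\{\alpha,1\}$ recurs throughout. A secondary point to verify is that continuity of $\hat{g}$ makes the output set simply-connected, so $\cH(v,X)$ is a well-defined single length and the termwise Holder manipulations are legitimate.
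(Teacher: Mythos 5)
Your proposal is correct and follows essentially the same route as the paper's proof: the identical chain Expansion $\to$ H\"older $\to$ Length Preservation $\to$ Quantile Stability, with the same bookkeeping in which the $2\max\{L,1\}(c/\sqrt{n})^{\min\{\alpha,1\}}$ budget absorbs the two stability errors and the test-point reduction. If anything you are more careful than the paper, which states the quantile-stability step with a typo (both terms written over $\cD$ rather than one over $\cD_{\text{ca}}$) and does not spell out the Jensen/boundedness argument behind the $\min\{\alpha,1\}$ exponent that you make explicit.
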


The cubic conditions used in Theorem~\ref{thm: fcp efficient} sketch the properties of feature space from different aspects. 
The first condition implies that the feature space is efficient for each individual, which holds when the band is generally not too large.
The second condition is the core of the proof, which informally assumes that the difference between quantile and each individual is smaller in feature space. 
Therefore, conducting quantile operation would not harm the effectiveness (namely, step~4 in Algorithm~\ref{alg: cp} and step~4 in Algorithm~\ref{alg: fcp}), leading to the efficiency of Feature CP. 
The last condition helps generalize the results from the calibration set to the test set.

\newcommand{\ca}{{\text{ca}}}

\begin{proof}[Proof of Theorem~\ref{thm: fcp efficient}]
We start the proof with Assumption~2, which claims that 
\begin{equation*}
\begin{split}
    L \bE_{\cD}\bM | Q_{1-\alpha}(V^f_{{\cD}}) - V^f_{{\cD}} |^\alpha < &
  \bE_{\cD} \bM \left[Q_{1-\alpha}( \cH(V^f_{{\cD}}, {\cD})) - \cH(V^f_{{\cD}}, {\cD})\right] \\
  &- \epsilon - 2\max\{L, 1\} (c/\sqrt{n})^{\min\{\alpha, 1\}}.
\end{split}
\end{equation*}

We rewrite it as
\begin{equation*}
\begin{split}
    \bE_{\cD} \bM\cH(V^f_{{\cD}}, {\cD}) <&  \bE_{\cD} Q_{1-\alpha}( \cH(V^f_{{\cD}}, {\cD}))- \epsilon \\
    &- 2\max\{L, 1\} (c/\sqrt{n})^{\min\{\alpha, 1\}} -  L\bE_{\cD}\bM | Q_{1-\alpha}(V^f_{{\cD}}) - V^f_{{\cD}} |^\alpha .
\end{split}
\end{equation*}

Due to Holder condition, we have that $\bM \cH(Q_{1-\alpha}(V^f_{{\cD}}), {\cD}) < \bM ( \cH(V^f_{{\cD}}, {\cD})) + L\bM | Q_{1-\alpha}(V^f_{{\cD}}) - V^f_{{\cD}} |^\alpha$, therefore
\begin{equation*}
    \bE_{\cD} \bM \left[\cH(Q_{1-\alpha} (V_{\cD}^f), \cD)\right] < \bE_{\cD} Q_{1-\alpha}(\cH(V_{\cD}^f, \cD)) - \epsilon - 2 \max\{1, L\} [c/\sqrt{n}]^{\min\{1, \alpha\}}.
\end{equation*}

Therefore, due to assumption~1, we have that 
\begin{equation*}
  \bE_{\cD} \bM \cH(Q_{1-\alpha} (V_{\cD}^f), \cD) <  \bE_{\cD} Q_{1-\alpha}(V_{\cD}^o) - 2 \max\{1, L\} [c/\sqrt{n}]^{\min{1, \alpha}}.
\end{equation*}

Besides, according to the quantile stability assumption, we have that $ \bE_{\cD} |\bM \cH(Q_{1-\alpha} (V_{\cD}^f), \cD)  -\bM \cH(Q_{1-\alpha} (V_{\cD}^f), \cD) | \leq L [c/\sqrt{n}]^\alpha$, and $ \bE_{\cD}  |Q_{1-\alpha}(V_{\cD}^o) - Q_{1-\alpha}(V_{\cD}^o)| \leq c/\sqrt{n}$.
Therefore, 
\begin{equation*}
\begin{split}
&\bE \cH(Q_{1-\alpha}(V_{\cD_\ca}^f), X^\prime) \\
=& \bE_{\cD} \bM \cH(Q_{1-\alpha} (V_{\cD_\ca}^f), \cD) \\
<& Q_{1-\alpha}(V_{\cD_\ca}^o) - 2 \max\{1, L\} [c/\sqrt{n}]^{\min{1, \alpha}}+ L[c/\sqrt{n}]^{\alpha} + c/\sqrt{n} \\
    <& Q_{1-\alpha}(V_{\cD_\ca}^o).
\end{split}
\end{equation*}

\end{proof}

\subsubsection{Example for Theorem~\ref{thm: fcp efficient}}
This section provides an example for Theorem~\ref{thm: fcp efficient}.
The key information is that Feature CP loses less efficiency when conducting the quantile step. 

Assume the dataset has five samples labeled A, B, C, D, and E. 
When directly applying vanilla CP leads to individual length in the output space $IL_o$ as  $1, 2, 3, 4, 5$, respectively. 
By taking $80\%$ quantile (namely, $\alpha = 0.2$), the final confidence band returned by vanilla CP $(Q(IL_o))$ would be $Q_{0.8} (\{1, 2, 3, 4, 5\}) = 4$.
Note that for any sample, the returned band length would be $4$, and the final average band length is $4$.

We next consider Feature CP.
We assume that the individual length in the feature space ($IL_f$) is $1.1, 1.2, 1.1, 1.3, 1.6$, respectively.
Due to the expansion condition (cubic condition \#2), the difference between $IL_f$ and $Q(IL_f)$ is smaller than that between $IL_o$ and $Q(IL_o)$.
Therefore, the quantile step costs less in Feature CP.
Since $IL_f$ is close to $Q(IL_f)$, their corresponding output length $\cH(IL_f)$, $\cH(Q(IL_f))$ are also close.
Besides, to link conformal prediction and vanilla CP, the length preservation condition (cubic condition \#1) ensures that $IL_o$ is close to $\cH(IL_f)$.
Therefore, the final average length $\bM \cH(Q(L_f))$ is close to the average length $\bM IL_o$, which is better than $Q(IL_o)$
Finally, the quantile stability condition (cubic condition \#3) generalizes the results from the calibration set to the test set.

\begin{table*}[t]
\caption{A concrete example for the comparison between Feature CP and CP. Let $IL_o$, $IL_f$ denote the individual length in the feature and output space.
Let $Q(\cdot)$ denote the quantile operator, and $\cH(\cdot)$ denote the operator that calculates output space length given the feature space length. We remark that the average band length returned by Feature CP (3.1) outperforms that of vanilla CP (4.0).}
\label{tab: fcqr and cp}
\begin{center}
\begin{small}
\begin{sc}
\begin{tabular}{ccccccc}
\toprule
Method  &  \multicolumn{2}{c}{Vanilla CP}  &  \multicolumn{4}{c}{Feature CP}  \\
\cmidrule(lr){2-3} \cmidrule(lr){4-7}
Sample &  $IL_o$  &  $Q(IL_o)$ 
& $IL_f$  & $\cH(IL_f)$&  $Q(IL_f)$   & $\cH(Q(IL_f))$ \\
\midrule
\midrule
A  &  1.0  & 4.0 &  1.1 & 1.2 & 1.3 & 1.4\\
B & 2.0 &4.0&  1.2 &   2.1 & 1.3 & 2.3\\
C &  3.0 &4.0 & 1.1& 2.8 & 1.3 & 3.1\\
D  &  4.0 &4.0&  1.3 & 3.8 & 1.3 & 3.8\\
E  &  5.0 &4.0 & 1.6 & 5.2 & 1.3 & 4.9\\
\midrule
Quantile &  4.0  &  /  &  1.3 & / & / & /  \\
Average &  /  &  \textbf{4.0}  & / & / & /  &  \textbf{3.1} \\
\bottomrule
\end{tabular}
\end{sc}
\end{small}
\end{center}
\end{table*}

\subsection{Proof of Theorem~\ref{thm: fcp}}
\label{sec:proof_coverage}

\coverage*

\begin{proof}[Proof of Theorem~\ref{thm: fcp}.]
The key to the proof is to derive the exchangeability of the non-conformity score, given that the data in the calibration fold and test fold are exchangeable~(see Assumption~\ref{assump: exchangeability}).

For ease of notations, we denote the data points in the calibration fold and the test fold as $\cD^\prime = \{(\X_i, \Y_i)\}_{i \in [m]}$, where $m$ denotes the number of data points in both calibration fold and test fold.  
By Assumption~\ref{assump: exchangeability}, the data points in $\cD^\prime$ are exchangeable. 

The proof can be split into three parts.
The first step is to show that for any function  independent of $\cD^\prime$, $h(\X_i, \Y_i)$ are exchangeable.
The second step is to show that the proposed score function $s$ satisfies the above requirements.
And the third step is to show the theoretical guarantee based on the exchangeability of the non-conformity score.

We next prove the first step:
for any given function $h: \cX\times\cY \to \bR$ that is independent of data points in $\cD^\prime$, we have that $h(\X_i, \Y_i)$ are exchangeable.
Specifically, its CDF $F_v$ and its perturbation CDF $F_v^\pi$ is the same, given the training fold $\cD_{\text{tr}}$.
\begin{equation*}
\begin{split}
     &F_v (u_1, \dots, u_{n}\ | \ \cD_{\text{tr}}) \\
     =& \bP(h(\X_1, \Y_1)\leq u_1, \dots, h(\X_n, \Y_n) \leq u_n \ | \ \cD_{\text{tr}} ) \\
     =& \bP((X_1, Y_1) \in \cC_{h^{-1}}(u_1-), \dots, (X_n, Y_n) \in \cC_{h^{-1}}(u_n-) \ | \ \cD_{\text{tr}} )\\
    =& \bP((X_{\pi(1)}, Y_{\pi(1)}) \in \cC_{h^{-1}}(u_1-), \dots, (X_{\pi(n)}, Y_{\pi(n)}) \in \cC_{h^{-1}}(u_n-) \ | \ \cD_{\text{tr}} )\\
    =& \bP(h(X_{\pi(1)}, Y_{\pi(1)})\leq u_1, \dots, h(X_{\pi(n)}, Y_{\pi(n)}) \leq u_n \ | \ \cD_{\text{tr}} ) \\
    =& F_v^\pi (u_1, \dots, u_{n}\ | \ \cD_{\text{tr}}),
\end{split}
\end{equation*}
where $\pi$ denotes a random perturbation, and $C_{h^{-1}}(u-) = \{ (\X, \Y): h(\X, \Y) \leq u \}$.

The second step is to show that the proposed non-conformity score function (See Equation~\eqref{eqn: score} and Algorithm~\ref{alg: fcp}) is independent of the dataset $\cD^\prime$.
To show that, we note that the proposed score function $s$ in Equation~\eqref{eqn: score} (we rewrite it in Equation~\eqref{eqn: score-restate}) is totally independent of dataset $\cD^\prime$, in that we only use the information of $\hat{f}$ and $\hat{g}$ which is dependent on the training fold $\cD_{\text{tr}}$ instead of $\cD^\prime$.
\begin{equation}
\label{eqn: score-restate}
    \s(\X, \Y, \hat{g} \circ \hat{f}) = \inf_{v \in \{v: \hat{g}(v) = \Y\}}\|v -  \hat{f}(\X)\|.
\end{equation}
Besides, note that when calculating the non-conformity score in Algorithm~\ref{alg: fcp} for each testing data/calibration data, we do not access any information on the calibration folds for any other points.
Therefore, the score function does not depend on the calibration fold or test fold. 
We finally remark that here we always state that the \emph{score function $s$} does not depend on the calibration fold or test fold, but its realization $s(\X, \Y, \hat{g} \circ \hat{f})$ can depend on the two folds, if $(\X, \Y) \in \cD^\prime$.
This does not contrast with the requirement in the first step. 

Therefore, combining the two steps leads to a conclusion that the non-conformity scores on $\cD^\prime$ are exchangeable. 
Finally, following Lemma~1 in \citet{DBLP:conf/nips/TibshiraniBCR19}, the theoretical guarantee holds under the exchangeability of non-conformity scores.
\end{proof}

\subsection{Length Variance Guarantee}
\label{appendix: length variance}

The next Theorem~\ref{thm: lenght variance} demonstrates that the length returned by Feature CP would be individually different.
Specifically, the variance for the length is lower bounded by a constant.
The essential intuition is that, for a non-linear function $g$, the feature bands with the same length return different bands in output space.
Before expressing the theorem, we first introduce a formal notation of length and other necessary assumptions.
For ease of discussion, we define in Definition~\ref{def: band length} a type of band length slightly different from the previous analysis.
We assume $\Y \in \bR$ below, albeit our analysis can be directly extended to high-dimensional cases.

\begin{definition}[band length]
\label{def: band length}
For a given feature $v$ and any perturbation $\tilde{v}\in \cC_{f}(v)=\{\tilde{v}: \| \tilde{v} - v\| \leq Q \}$ in the feature band, we define the band length in the output space $L_o(v)$ as the maximum distance between predictor $g(v)$ and $g(\tilde{v})$, namely 
\begin{equation*}
    L_o(v) \triangleq \max_{\tilde{v} \in \cC_{f}(v)} |g(\tilde{v}) - g(v)|.
\end{equation*}
\end{definition}

Besides, we require Assumption~\ref{assump: smooth g}, which is about the smoothness of the prediction head $g$.

\begin{assumption}
\label{assump: smooth g}
Assume that the prediction head $g$ is second order derivative and $M$-smooth, namely, $\| \nabla^2 g(u) \| \leq M$ for all feasible $u$.
\end{assumption}

The following Theorem~\ref{thm: lenght variance} indicates that the variance of the band length is lower bounded, meaning that the bands given by Feature CP are individually different.

\begin{restatable}{theorem}{length}
\label{thm: lenght variance}
Under Assumption~\ref{assump: smooth g}, if the band on the feature space is with radius $Q$, then the variance of band length on the output space satisfies:
\begin{equation*}
    \bE\left[L_o - \bE L_o\right]^2 / Q^2  \geq \bE\left[\|\nabla g(v)\| - \bE\|\nabla g(v)\|\right]^2 - M Q \bE\|\nabla g(v)\|.
\end{equation*}
\end{restatable}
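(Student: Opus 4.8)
The plan is to reduce the geometric band length $L_o(v)$ to a first-order quantity in $\|\nabla g(v)\|$ and then push the variance through that reduction. First I would Taylor-expand the prediction head around a feature $v$: for any perturbation $\delta$ with $\|\delta\|\le Q$, Assumption~\ref{assump: smooth g} gives $g(v+\delta) = g(v) + \nabla g(v)^\top\delta + \tfrac12\,\delta^\top\nabla^2 g(\xi)\,\delta$ for some $\xi$ on the segment $[v,v+\delta]$, and the $M$-smoothness bound $\|\nabla^2 g\|\le M$ controls the remainder by $\tfrac12 M\|\delta\|^2\le \tfrac12 MQ^2$. Taking the supremum over $\|\delta\|\le Q$ yields the upper bound $L_o(v)\le Q\|\nabla g(v)\| + \tfrac12 MQ^2$, while plugging in the gradient-aligned perturbation $\delta = Q\,\nabla g(v)/\|\nabla g(v)\|$ (and handling $\nabla g(v)=0$ trivially) yields the matching lower bound $L_o(v)\ge Q\|\nabla g(v)\| - \tfrac12 MQ^2$. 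Writing $W:=\|\nabla g(v)\|$ and $R(v):=L_o(v)-QW$, this establishes the key sandwich $|R(v)|\le \tfrac12 MQ^2$, so that $L_o = QW + R$ with a uniformly small, nonnegatively-correlated-in-the-worst-case remainder.

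Next I would decompose the variance through this additive representation. Since
\[
\var(L_o) = Q^2\var(W) + 2Q\,\cov(W,R) + \var(R)
\]
and $\var(R)\ge 0$, we obtain $\var(L_o)\ge Q^2\var(W) + 2Q\,\cov(W,R)$. Dividing by $Q^2$, the claimed inequality follows as soon as the cross term satisfies $\cov(W,R)\ge -\tfrac12 MQ^2\,\bE W$, because then $\var(L_o)/Q^2 \ge \var(W) - MQ\,\bE W = \var(W) - MQ\,\bE\|\nabla g(v)\|$, which is exactly the target.

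The hard part is precisely this covariance bound. I would write it in centered form $\cov(W,R)=\bE[(W-\bE W)R]$ and exploit $W\ge 0$ together with the pointwise control $|R|\le \tfrac12 MQ^2$. The subtlety to flag is that the naive split $\cov(W,R)=\bE[WR]-\bE W\,\bE R$ and bounding each term separately loses a factor of two, and the Cauchy--Schwarz route only produces $\sqrt{\var(W)}$ rather than the mean $\bE W$ appearing in the statement; so the delicate step is to keep the centered form and use the nonnegativity of $W$ to absorb the remainder directly into $\bE\|\nabla g(v)\|$. Once $\cov(W,R)\ge -\tfrac12 MQ^2\,\bE W$ is secured, the three ingredients combine immediately to give the stated lower bound on $\bE[L_o-\bE L_o]^2/Q^2$, and I would close by noting that the bound is informative only when $\var(W)$ dominates $MQ\,\bE W$, i.e. when the gradient norm $\|\nabla g(v)\|$ genuinely fluctuates across the feature distribution, which is the intended message that Feature CP produces individually varying bands.
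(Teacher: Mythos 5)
Your overall route is the same as the paper's: Taylor-expand the prediction head to get the sandwich $|L_o - Q\|\nabla g(v)\|\,| \le \tfrac12 MQ^2$, write $L_o = QW + R$ with $W = \|\nabla g(v)\|$, expand the variance, discard $\var(R)\ge 0$, and reduce everything to a lower bound on the cross term. Up to that point your argument is correct and matches the paper essentially line for line.

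The gap sits exactly at the step you yourself flag as ``the hard part'': you never actually prove $\cov(W,R)\ge -\tfrac12 MQ^2\,\bE W$, and the route you hint at cannot deliver it. From the centered form $\cov(W,R)=\bE[(W-\bE W)R]$ and $|R|\le \tfrac12 MQ^2$ you only get $\cov(W,R)\ge -\tfrac12 MQ^2\,\bE|W-\bE W|$, and the inequality you would then need, $\bE|W-\bE W|\le \bE W$, is false for a general nonnegative $W$: take $W=0$ with probability $1-\epsilon$ and $W=\mu/\epsilon$ with probability $\epsilon$, so that $\bE W=\mu$ but $\bE|W-\bE W| = 2\mu(1-\epsilon)>\mu$ for $\epsilon<1/2$ (the sharp general bound is $\bE|W-\bE W|\le 2\,\bE W$). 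The alternative centering $\cov(W,R)=\bE[W(R-\bE R)]$ with $|R-\bE R|\le MQ^2$ gives only $\cov(W,R)\ge -MQ^2\,\bE W$, a factor of $2$ short of what your decomposition requires. For what it is worth, the paper's own proof crosses this same bridge by asserting $\bE\big|\|\nabla g(v)\|-\bE\|\nabla g(v)\|\big|\le \bE\|\nabla g(v)\|$, i.e.\ precisely the inequality that fails in general, so you have correctly located the soft spot of the theorem rather than introduced a new one --- but your sketch does not close it with the stated constant. What your method does prove is the weaker bound $\bE[L_o-\bE L_o]^2/Q^2 \ge \var(W) - MQ\,\bE|W-\bE W| \ge \var(W) - 2MQ\,\bE W$, which preserves the qualitative message (nontrivial length variance whenever $\|\nabla g(v)\|$ genuinely fluctuates) at the cost of a constant factor in the correction term.
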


From Theorem~\ref{thm: lenght variance}, the variance of the band length has a non-vacuous lower bound if
\begin{equation}
\label{eqn: condition for band length}
    {\bE[\|\nabla g(v)\| - \bE\|\nabla g(v)\|]^2} > MQ\cdot {\bE\|\nabla g(v)\|}.
\end{equation}

We next discuss the condition for Equation~\eqref{eqn: condition for band length}.
For a linear function $g$, note that $\bE[\|\nabla g(v)\| - \bE\|\nabla g(v)\| = 0$ and $M=0$, thus does not meet Equation~\eqref{eqn: condition for band length}.
But for any other non-linear function $g$, we at least have $\bE[\|\nabla g(v)\| - \bE\|\nabla g(v)\|]^2 > 0$ and $M>0$, and therefore there exists a term $Q$ such that Equation~\eqref{eqn: condition for band length} holds.
Hence, the band length in feature space must be individually different for a non-linear function $g$ and a small band length $Q$.

\begin{proof}[Proof of Theorem~\ref{thm: lenght variance}.]
We revisit the notation in the main text, where $v = f(\X)$ denotes the feature, and $\cC_{f}(v) = \{\tilde{v}: \| \tilde{v} - v \| \leq Q\}$ denotes the confidence band returned in feature space. 
By Taylor Expansion, for any given $\tilde{v} \in \cC_{f}(v)$, there exists a $v^\prime$ such that
\begin{equation*}
    g(\tilde{v}) - g(v) = \nabla g(v) (\tilde{v} - v) + 1/2 (\tilde{v} - v)^\top \nabla^2 g(v^\prime)  (\tilde{v} - v).
\end{equation*}
Due to Assumption~\ref{assump: smooth g},  $\| \nabla^2 g(v^\prime) \| \leq M$.
Therefore, for any $\tilde{v} \in \cC_{f}(v)$
\begin{equation*}
    | 1/2 (\tilde{v} - v)^\top \nabla^2 g(v^\prime)  (\tilde{v} - v) | \leq \frac{1}{2} M Q^2.
\end{equation*}

On the one hand, by Cauchy Schwarz inequality, we have
\begin{equation*}
    L_o = \max_{\tilde{v}} |g(\tilde{v}) - g(v)| \leq \| \nabla g(v) \| Q + \frac{1}{2} M Q^2.
\end{equation*}

On the other hand, by setting $\tilde{v} - v = Q \nabla g(v) / |\nabla g(v)|$, we have that
\begin{equation*}
     L_o = \max_{\tilde{v}} |g(\tilde{v}) - g(v)| \geq |g(v + Q \nabla g(v) / |\nabla g(v)|) - g(v)| = Q|\nabla g(v)|- 1/2 M Q^2.
\end{equation*}

Therefore, we have that 
\begin{equation*}
   | L_o - Q|\nabla g(v)| | \leq 1/2 M Q^2.
\end{equation*}

We finally show the variance of the length, where the randomness is taken over the data $v$, 
\begin{equation*}
    \begin{split}
          &\phantom{=} \bE\left[[L_o - \bE L_o]^2\right] \\
          &= \bE\left[Q|\nabla g(v)| - \bE Q|\nabla g(v)| + \left[L_o-Q|\nabla g(v)|\right] - \bE \left[L_o-Q|\nabla g(v)|\right] \right]^2 \\
          &= \bE\left[Q|\nabla g(v)| - \bE Q|\nabla g(v)|\right]^2 + \bE\left[\left[L_o-Q|\nabla g(v)|\right] - \bE \left[L_o-Q|\nabla g(v)|\right]\right]^2 \\
    &\quad\quad\quad + 2 \bE\left[Q|\nabla g(v)| - \bE Q|\nabla g(v)|\right]\left[(L_o-Q|\nabla g(v)|) - \bE (L_o-Q|\nabla g(v)|)\right] \\
    &\geq Q^2\bE\left[|\nabla g(v)| - \bE |\nabla g(v)|\right]^2 \\
    &\quad\quad\quad - 2Q \bE\left|\left[|\nabla g(v)| - \bE |\nabla g(v)|\right]\right| \left|\left[(L_o-Q|\nabla g(v)|) - \bE (L_o-Q|\nabla g(v)|)\right]\right| \\
    &\geq Q^2\bE\left[|\nabla g(v)| - \bE |\nabla g(v)|\right]^2 - MQ^3 \bE\left|\left[|\nabla g(v)| - \bE |\nabla g(v)|\right]\right| .
    \end{split}
\end{equation*}

Besides, note that $\bE|[|\nabla g(v)| - \bE |\nabla g(v)|]| \leq \bE|\nabla g(v)| $.
Therefore, we have that
\begin{equation*}
    \begin{split}
   \bE\left[L_o - \bE L_o\right]^2 / Q^2 \geq \bE\left[|\nabla g(v)| - \bE |\nabla g(v)|\right]^2 - MQ \bE|\nabla g(v)|  .
    \end{split}
\end{equation*}
\end{proof}

\subsection{Theoretical Convergence Rate}
\label{appendix: convergence}
In this section, we prove the theoretical convergence rate for the width.
Specifically, we derive that when the number of samples in the calibration fold goes to infinity, the width for the testing point converges to a fixed value. 
Before we introduce the main theorem, we introduce some necessary definitions. 
Without further clarification, we follow the notations in the main text.
\begin{definition}[Precise Band]
We define the precise band as 
\begin{equation}
    {\cC}^{\text{pre}}_{1-\alpha} = \{g(v): \|v-\hat{v}\| \leq Q_{1-\alpha} \}.
\end{equation}
\end{definition}

\begin{definition}[Precise Exact Band]
We define the exact precise band as 
\begin{equation}
    \bar{\cC}^{\text{pre}}_{1-\alpha} = \{g(v): \|v-\hat{v}\| \leq \bar{Q}_{1-\alpha} \},
\end{equation}
where $\bar{Q}_{1-\alpha}$ denotes the exact value such that 
\begin{equation}
    \bP(\exists v: \|v-\hat{v}\| \leq \bar{Q}_{1-\alpha}, g(v) = y) = 1 - \alpha.
\end{equation}
\end{definition}

Our goal is to prove that the band length (volume) of ${\cC}^{\text{pre}}_{1-\alpha}$ (denoted by $\cV({\cC}^{\text{pre}}_{1-\alpha} )$) converges to $\cV(\bar{\cC}^{\text{pre}}_{1-\alpha} )$.
We assume that the prediction head and the quantile function are both Lipschitz in Assumption~\ref{assump: Lipschitz for Prediction Head} and Assumption~\ref{assump: Lipschitz for Inverse Quantile Function}.

\begin{assumption}[Lipschitz for Prediction Head]
\label{assump: Lipschitz for Prediction Head}
Assume that for any $v, v^\prime$, we have
\begin{equation*}
    \| g(v) - g(v^\prime) \| \leq L_1 \| v - v^\prime\|.
\end{equation*}
\end{assumption}

\begin{assumption}[Lipschitz for Inverse Quantile Function]
\label{assump: Lipschitz for Inverse Quantile Function}
Denote the quantile function as 
\begin{equation*}
    \text{Quantile}(Q_{u}) = \bP(\exists v: \| v - \hat{v}\|\leq Q_{u}, g(v) = y) = u.
\end{equation*}
We assume that its inverse function is $L_2$-Lipschitz, that is to say,
\begin{equation*}
    \|\text{Quantile}^{-1}(u) - \text{Quantile}^{-1}(u^\prime) \| \leq L_2 \| u - u^\prime\|. 
\end{equation*}
\end{assumption}

Besides, we assume that the region of $ \bar{\cC}^{\text{pre}}_{1-\alpha}$ has benign blow-up.
\begin{assumption}[Benign Blow-up]
\label{assump: Benign Blow-up}
Assume that $ \bar{\cC}^{\text{pre}}_{1-\alpha}$ has benign blow-up, that is to say, for the blow-up set $\cC^{\text{pre}}_{1-\alpha}(\epsilon) = \{v: \exists u \in  \bar{\cC}^{\text{pre}}_{1-\alpha}, \| u - v\|\leq \epsilon \} $, we have
\begin{equation*}
   \| \cV(\cC^{\text{pre}}_{1-\alpha}(\epsilon)) - \cV(\bar{\cC}^{\text{pre}}_{1-\alpha})\| \leq  c \epsilon,
\end{equation*}
where $c$ denotes a constant independent of $n$.
\end{assumption}
In the one-dimensional case $Y \in \bR$, Assumption~\ref{assump: Benign Blow-up} easily holds. 
For the high-dimensional cases, such a bound usually requires that $c$ depends on the dimension $d$.

\begin{theorem}[Convergence Rate]
\label{thm: convergence rate}
Assume that the non-conformity scores in the calibration fold have no ties. 
Under Assumption~\ref{assump: Lipschitz for Prediction Head}, Assumption~\ref{assump: Lipschitz for Inverse Quantile Function} and Assumption~\ref{assump: Benign Blow-up}, we have that
\begin{equation*}
 \| \cV({\cC}^{\text{pre}}_{1-\alpha}) -  \cV(\bar{\cC}^{\text{pre}}_{1-\alpha}) \| \leq c L_1 L_2 \frac{1}{n}.
\end{equation*}
\end{theorem}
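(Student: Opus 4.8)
The plan is to control the gap between the two bands by propagating a single $O(1/n)$ error through three stages: from the discreteness of the empirical quantile to a gap in coverage \emph{level}, from the level gap to a gap in the feature-space \emph{radius} $Q_{1-\alpha}$ versus $\bar{Q}_{1-\alpha}$, and finally from the radius gap to a \emph{volume} gap in the output space. A crucial preliminary observation is that conformal prediction is conservative (cf.\ Theorem~\ref{prop:cp}), so $\text{Quantile}(Q_{1-\alpha}) \geq 1-\alpha$; since $\text{Quantile}$ is increasing this forces $Q_{1-\alpha} \geq \bar{Q}_{1-\alpha}$, hence the nesting $\bar{\cC}^{\text{pre}}_{1-\alpha} \subseteq {\cC}^{\text{pre}}_{1-\alpha}$. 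Identifying the exact band as the inner object lets me invoke Benign Blow-up (Assumption~\ref{assump: Benign Blow-up}), which is stated for $\bar{\cC}^{\text{pre}}_{1-\alpha}$, in the correct direction.

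First I would quantify the level gap. Writing $\text{Quantile}(Q_u) = F_s(Q_u)$ for the population CDF of the non-conformity score $s$ (the event $\exists v:\|v-\hat{v}\|\leq Q_u,\, g(v)=y$ is exactly $\{s \leq Q_u\}$), the no-ties assumption makes $Q_{1-\alpha}$ the $k$-th order statistic of the calibration scores with $k=\lceil (1-\alpha)(n+1)\rceil$, whose nominal level $\tfrac{k}{n+1}$ lies in $[1-\alpha,\, 1-\alpha+\tfrac{1}{n+1})$. Combined with the conformal lower bound this gives $0 \leq \text{Quantile}(Q_{1-\alpha}) - (1-\alpha) \leq \tfrac{1}{n+1} \leq \tfrac{1}{n}$. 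Since $\bar{Q}_{1-\alpha} = \text{Quantile}^{-1}(1-\alpha)$ by definition and $Q_{1-\alpha} = \text{Quantile}^{-1}(\text{Quantile}(Q_{1-\alpha}))$, the Lipschitz inverse-quantile assumption (Assumption~\ref{assump: Lipschitz for Inverse Quantile Function}) converts the level gap into a radius gap: $|Q_{1-\alpha} - \bar{Q}_{1-\alpha}| \leq L_2\,|\text{Quantile}(Q_{1-\alpha}) - (1-\alpha)| \leq L_2/n$.

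Then I would pass to the output space geometrically. Both bands are images under $g$ of feature balls centered at $\hat{v}$ of radii $Q_{1-\alpha}$ and $\bar{Q}_{1-\alpha}$. For any $v$ in the larger ball, its projection $v'$ onto the smaller ball satisfies $\|v-v'\| \leq |Q_{1-\alpha}-\bar{Q}_{1-\alpha}|$, so by the Lipschitz prediction head (Assumption~\ref{assump: Lipschitz for Prediction Head}) $\|g(v)-g(v')\| \leq L_1|Q_{1-\alpha}-\bar{Q}_{1-\alpha}|$; hence ${\cC}^{\text{pre}}_{1-\alpha} \subseteq \bar{\cC}^{\text{pre}}_{1-\alpha}(\epsilon)$ with $\epsilon = L_1|Q_{1-\alpha}-\bar{Q}_{1-\alpha}| \leq L_1L_2/n$. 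By the nesting, the volume difference is $\cV({\cC}^{\text{pre}}_{1-\alpha}) - \cV(\bar{\cC}^{\text{pre}}_{1-\alpha}) \leq \cV(\bar{\cC}^{\text{pre}}_{1-\alpha}(\epsilon)) - \cV(\bar{\cC}^{\text{pre}}_{1-\alpha}) \leq c\epsilon \leq cL_1L_2/n$ by Assumption~\ref{assump: Benign Blow-up}, which is the claim.

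I expect the main obstacle to be making the first stage fully rigorous: $Q_{1-\alpha}$ is a random order statistic, so $\text{Quantile}(Q_{1-\alpha}) = F_s(V_{(k)})$ is itself random (distributed as $\mathrm{Beta}(k,\,n+1-k)$, with mean exactly $\tfrac{k}{n+1}$), and one must decide whether the bound in Theorem~\ref{thm: convergence rate} is meant to hold deterministically, in expectation over $\cD_{\text{ca}}$, or with high probability. The cleanest reading takes the level gap at its nominal value $\tfrac{k}{n+1}-(1-\alpha)$, i.e.\ treats $Q_{1-\alpha}$ as $\text{Quantile}^{-1}(\tfrac{k}{n+1})$, under which every step above is deterministic. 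The remaining care is purely geometric: the projection argument relies on the feature balls being nested and on $g$ being globally $L_1$-Lipschitz, and the blow-up bound must be applied to $\bar{\cC}^{\text{pre}}_{1-\alpha}$ rather than to ${\cC}^{\text{pre}}_{1-\alpha}$, which is exactly why pinning down the direction of nesting at the outset matters.
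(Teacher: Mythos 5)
Your proposal is correct and follows essentially the same route as the paper's proof: bound the coverage-level gap by $1/n$ via the standard conformal guarantee under no ties, convert it to a radius gap via the Lipschitz inverse quantile, push it through $\hat g$ via the Lipschitz prediction head and the benign blow-up of $\bar{\cC}^{\text{pre}}_{1-\alpha}$, and use the nesting $\bar{\cC}^{\text{pre}}_{1-\alpha}\subseteq\cC^{\text{pre}}_{1-\alpha}$ for the lower bound. Your explicit treatment of the nesting direction and of the randomness of $Q_{1-\alpha}$ is in fact slightly more careful than the paper's, which leaves both points implicit.
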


\begin{proof}
Firstly, as derived in \citet{DBLP:conf/nips/RomanoPC19}, when the non-conformity score in the calibration fold has no ties (the probability is zero), we have
\begin{equation}
\label{eqn: guarantee}
    \bP(\exists v: \|v-\hat{v}\| \leq Q_{1-\alpha}, g(v) = y) \in (1 - \alpha, 1 - \alpha + 1/n),
\end{equation}
where $v, \hat{v}, Q_{1-\alpha}$ denotes the surrogate feature, the trained feature, and the quantile value in Algorithm~\ref{alg: fcp}, respectively.  

By Assumption~\ref{assump: Lipschitz for Inverse Quantile Function} that the inverse quantile function is $L_1$-Lipschitz around $1-\alpha$, we have
\begin{equation*}
    \| \bar{Q}_{1-\alpha} - Q_{1-\alpha} \| \leq L_2 \frac{1}{n}.
\end{equation*}

Therefore, for any $u \in {\cC}^{\text{pre}}_{1-\alpha}$, there \emph{exists} $u^\prime \in \bar{\cC}^{\text{pre}}_{1-\alpha}$ such that
\begin{equation}
    \| u - u^\prime\| \triangleq \|g(v) - g(v^\prime)\| \leq L_2 \|v - v^\prime\| \leq L_1 L_2 \frac{1}{n}.
\end{equation}
We note that bounding $\|v - v^\prime\|$ requires that the region of $v, v^\prime$ are both balls, and therefore one can select $v^\prime$ as the point with the smallest distance to $v$. 
Since the region of $\bar{\cC}^{\text{pre}}_{1-\alpha}$ has benign blow-up, we have that
\begin{equation*}
 \cV({\cC}^{\text{pre}}_{1-\alpha}) \leq   \cV(\bar{\cC}^{\text{pre}}_{1-\alpha}) + c L_1 L_2 \frac{1}{n}.
\end{equation*}

Besides, the following equation naturally holds due to Equation~\eqref{eqn: guarantee}.
\begin{equation*}
    \cV({\cC}^{\text{pre}}_{1-\alpha}) \geq \cV(\bar{\cC}^{\text{pre}}_{1-\alpha}).
\end{equation*}

Therefore, we conclude with the following inequality,
\begin{equation*}
 \| \cV({\cC}^{\text{pre}}_{1-\alpha}) -  \cV(\bar{\cC}^{\text{pre}}_{1-\alpha}) \| \leq c L_1 L_2 \frac{1}{n}.
\end{equation*}

Therefore, as the sample size in the calibration fold goes to infinity, the length of the trained band converges to $\cV(\bar{\cC}^{\text{pre}}_{1-\alpha})$.

\end{proof}

\section{Experimental Details}
\label{appendix: Experimental Details total}

Section~\ref{appendix: Experimental Details} introduces the omitted experimental details.
Section~\ref{appendix: Certifying Cubic Conditions} provide experimental evidence to validate cubic conditions.
Section~\ref{appendix: fcp advantage untrained not work} shows that Feature CP performs similarly to vanilla CP for untrained neural networks, validating that Feature CP works due to semantic information trained in feature space.
Section~\ref{appendix: fcqr} introduces Feature CQR which applies feature-level techniques on CQR and Section~\ref{appendix: fcqr, group coverage} reports the corresponding group coverage.
Finally, Section~\ref{appendix: Additional Experiment Results} provides other additional experiments omitted in the main text. 

\subsection{Experimental Details}
\label{appendix: Experimental Details}

\textbf{Model Architecture.}
The model architecture of the uni-dimensional and synthetic multi-dimensional target regression task is shown in Figure~\ref{fig:linear-arch}. The feature function $f$ and prediction head $g$ includes two linear layers, respectively.
Moreover, the model architecture of the FCN used in the semantic segmentation experiment is shown in Figure~\ref{fig:fcn-arch}, which follows the official implementation of PyTorch.
The batch normalization and dropout layers are omitted in the figure.
We use the ResNet50 backbone as $f$ and take two convolution layers as $g$. We select the Layer4 output of ResNet50 as our surrogate feature $v$.

\begin{figure}[t]
    \centering
    \includegraphics[width=0.4\linewidth]{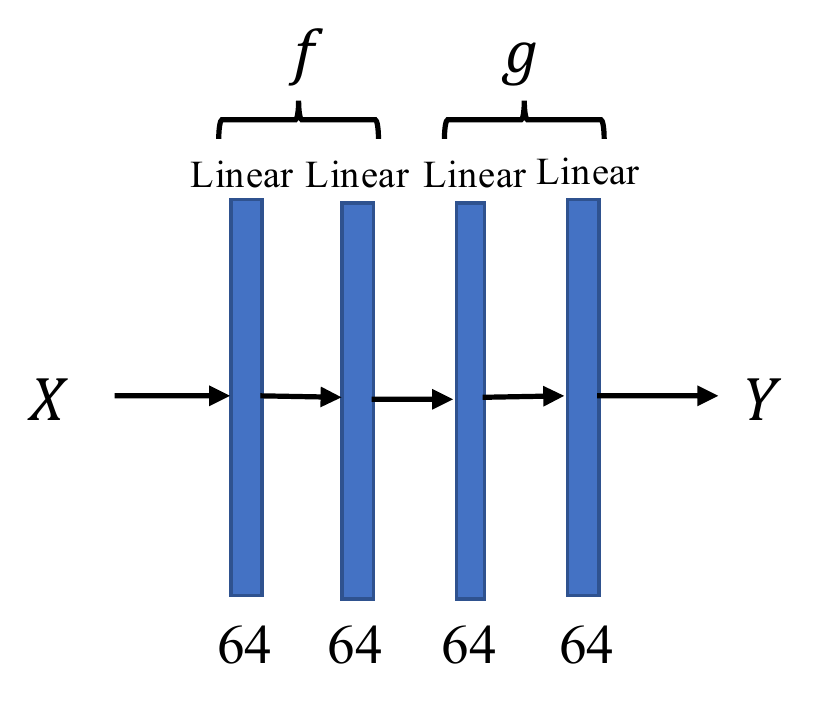}
    \caption{The model architecture of the uni-dimensional and synthetic multi-dimensional target regression experiments. 
    The dropout layers are omitted.}
    \label{fig:linear-arch}
\end{figure}

\begin{figure}[t]
    \centering
    \includegraphics[width=0.6\linewidth]{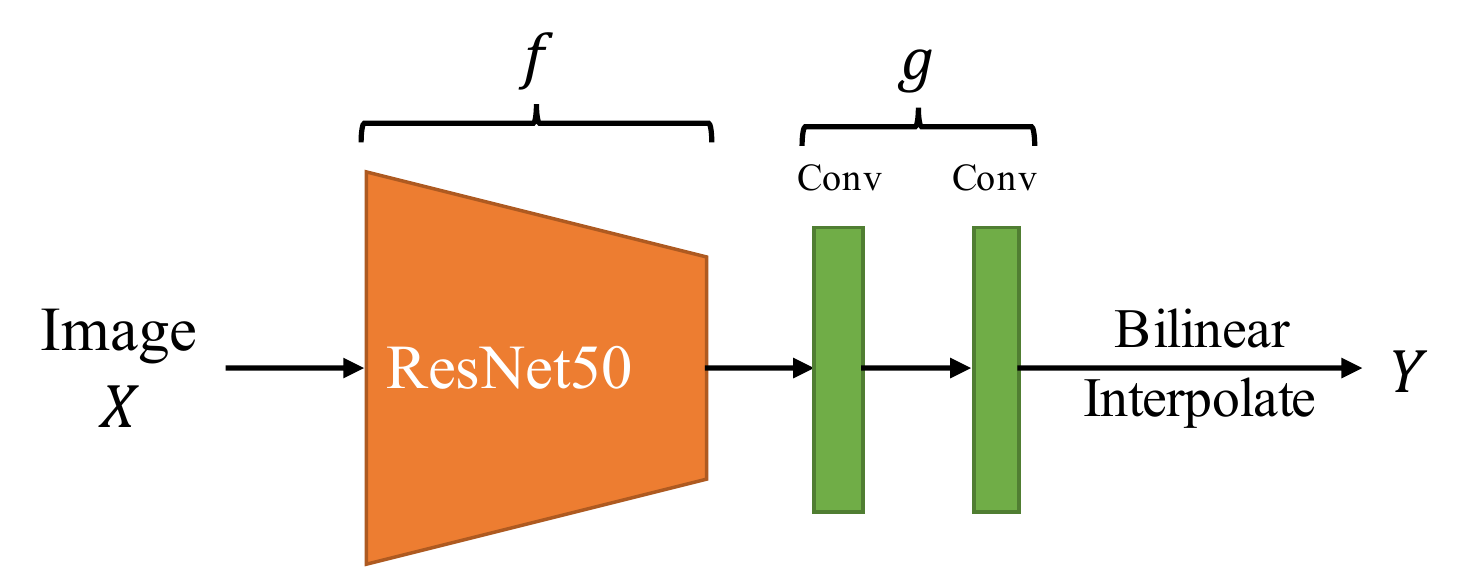}
    \caption{The model architecture of the semantic segmentation experiment. }
    \label{fig:fcn-arch}
\end{figure}

\textbf{Training protocols.}
In the unidimensional and synthetic dimensional target regression experiments, we randomly divide the dataset into training, calibration, and test sets with the proportion $2:2:1$.
As for the semantic segmentation experiment, because the labels of the pre-divided test set are not accessible, we re-split the training, calibration, and test sets randomly on the original training set of Cityscapes. We remove the class $0$ (unlabeled) from the labels during calibration and testing, and use the weighted mean square error as the training objective where the class weights are adopted from \citet{paszke2016enet}.

\textbf{Band Estimation.}
In estimating the band length, we deploy Band Estimation based on the score calculated by Algorithm~\ref{alg: non-conformity score}.
In our experiment, we choose the number of steps $M$ in Algorithm~\ref{alg: non-conformity score} via cross-validation.

\textbf{Randomness.}
We train each model five times with different random seeds and report the mean and standard deviation value across all the runs as the experimental results (as shown in Figure~\ref{fig:1dim-results} and Table~\ref{tab:multi-dim-results}).

\textbf{Details of transforming segmentation classification problem into a regression task.}
The original semantic segmentation problem is to fit the one-hot label $y$ whose size is $(C, W, H)$ via logistic regression, where $C$ is the number of the classes, $W$ and $H$ are the width and height of the image. We use Gaussian Blur to smooth the values in each channel of $y$. At this time, the smoothed label $\tilde{y}$ ranges from 0 to 1. Then, we use the double log trick to convert the label space from $[0,1]$ to $(-\infty, \infty)$, \ie, $\dot{y}=\log(-\log(\tilde{y}))$. Finally, we use mean square error loss to fit $\dot{y}$.

\textbf{Definition of weighted length.}
We formulate the weighted length as 
\begin{equation*}
\centering
\begin{split}
    \text{weighted length} = \frac{1}{|\cI_{\text{te}}|} \sum_{i \in \cI_{\text{te}}} \sum_{j\in [d]}  w^{(j)}_i |\cC(X_i)|^{(j)},
\end{split}
\end{equation*}
where $w^{(j)}_i$ is the corresponding weight in each dimension.
We remark that although the formulation of $w^{(j)}_i$ is usually sample-dependent, we omit the dependency of the sample and denote it by $w^{(j)}$ when the context is clear.
We next show how to define $w^{(j)}$ in practice.

Generally speaking, we hope that $w^{(j)}$ is large when being informative (\ie, in non-boundary regions).
Therefore, for the $j$-th pixel after Gaussian Blur whose value is $\Y^{(j)} \in [0, 1]$, its corresponding weight is defined as
\begin{equation*}
w^{(j)} = \frac{|2\Y^{(j)} - 1|}{W} \in [0, 1],
\end{equation*}
where $W = \sum_j |2\Y^{(j)} - 1|$ is a scaling factor. 

At a colloquial level, $w^{(j)}$ is close to $1$ if $Y^{(j)}$ is close to $0$ or $1$.
In this case, $Y^{(j)}$ being close to $0$ or $1$ means that the pixel is far from the boundary region.
Therefore, the weight indicates the degree to which a pixel is being informative (not in object boundary regions).

\textbf{Calibration details.}
During calibration, to get the best value for the number of steps $M$, we take a subset (one-fifth) of the calibration set as the additional validation set. 
We calculate the non-conformity score on the rest of the calibration set with various values of step $M$ and then evaluate on the validation set to get the best $M$ whose coverage is just over $1-\alpha$. 
The final trained surrogate feature $v$ is close to the true feature because $\hat{g}(v)$ is sufficiently close to the ground truth $\Y$. In practice, the surrogate feature after optimization satisfies $\frac{\| \hat{g}(v) - \Y \|^2}{\|\Y \|^2} < 1\%$.

\textbf{Comparison between Feature CP and Vanilla CP.}
We next present the specific statistics of figure~\ref{fig:1dim-results} in Table~\ref{tab: cp and fcp, statistics} and Table~\ref{tab: fcqr}.

\begin{table*}[t]
\caption{{Comparison between CP and Feature CP.}
}
\label{tab: cp and fcp, statistics}
\begin{center}
\begin{small}
\begin{sc}
\begin{tabular}{ccccc}
\toprule
Method  &  \multicolumn{2}{c}{CP}  &  \multicolumn{2}{c}{Feature CP}  \\
\cmidrule(lr){2-3} \cmidrule(lr){4-5}
Dataset &  Coverage  &  Length  
&  Coverage  &  Length  \\
\midrule
\midrule
community  &  $89.82$ \scriptsize{$\pm 0.95$} & ${1.99}$\scriptsize{$\pm 0.09$} & $89.62$\scriptsize{$\pm 0.83$} & $1.99$\scriptsize{$\pm	0.19$}  \\
facebook1 &  $90.11$ \scriptsize{$\pm 0.33$}&  $3.17$ \scriptsize{$\pm 0.59$}&  $90.07$ \scriptsize{$\pm 0.32$}&   {$ {\textbf{2.08}}$ \scriptsize{$\pm 0.17$}}\\
facebook2 &  $89.99$ \scriptsize{$\pm 0.18$}&  $2.72$ \scriptsize{$\pm 0.37$}&  $89.98$ \scriptsize{$\pm 0.17$}&   {$ {\textbf{1.97}}$ \scriptsize{$\pm 0.17$}}\\
meps19  &  $90.51$ \scriptsize{$\pm 0.21$}  &   {$ {4.25}$ \scriptsize{$\pm 0.21$}}&     $90.55$ \scriptsize{$\pm 0.17$}   &  $\textbf{3.58}$ \scriptsize{$\pm 0.35$}  \\
meps20  &  $89.80$ \scriptsize{$\pm 0.61$}  &  $4.17$ \scriptsize{$\pm 0.36$}  &  $89.76$ \scriptsize{$\pm 0.61$}  &   {$\textbf{3.37}$ \scriptsize{$\pm 0.38$} } \\
meps21 &  $89.92$ \scriptsize{$\pm 0.54$} &  $4.67$ \scriptsize{$\pm 0.30$} &  $	89.94$ \scriptsize{$\pm 0.54$} &   {$3.93$ \scriptsize{$\pm 0.54$} }\\
star &  $90.30$ \scriptsize{$\pm 1.17$} &   {$ {0.41}$ \scriptsize{$\pm 0.02$}} &  $90.35$ \scriptsize{$\pm 0.99$} &  $0.41$ \scriptsize{$\pm 0.03$} \\
bio &  $90.27$ \scriptsize{$\pm 0.26$}&   {$ {1.97}$ \scriptsize{$\pm 0.01$}}&  $90.20$ \scriptsize{$\pm 0.29$}&  $\textbf{1.87}$ \scriptsize{$\pm 0.04$}\\
blog &  $90.08$ \scriptsize{$\pm 0.27$}&   {$ {3.32}$ \scriptsize{$\pm 0.38$}}&  $90.06$ \scriptsize{$\pm 0.33$}&  $2.81$ \scriptsize{$\pm 0.40$}\\
bike &  $89.65$ \scriptsize{$\pm 0.87$}&  $1.91$ \scriptsize{$\pm 0.04$}&  $89.61$ \scriptsize{$\pm 0.69$}&   {$\textbf{1.79}$ \scriptsize{$\pm 0.08$}}\\
\bottomrule
\end{tabular}
\end{sc}
\end{small}
\end{center}
\end{table*}

\subsection{Certifying Cubic Conditions}
\label{appendix: Certifying Cubic Conditions}
In this section, we validate the cubic conditions. 
The most important component for the cubic condition is Condition~2, which claims that conducting the quantile step would not hurt much efficiency. 
We next provide experiment results in Table~\ref{tab: cubic} on comparing the average distance between each sample to their quantile in feature space $\bM|Q_{1-\alpha} V_{\cD_\text{ca}}^f - V_{\cD_\text{ca}}^f|$ and in output space $\bM[Q_{1-\alpha} \cH(V_{\cD_\text{ca}}^f, {\cD_\text{ca}}) - \cH(V_{\cD_\text{ca}}^f, {\cD_\text{ca}})]$.
We here take $\alpha = 1$ for simplicity.
The significant gap in Table~\ref{tab: cubic} validates that the distance in feature space is significantly smaller than that in output space, although we did not consider the Lipschitz factor $L$ for computational simplicity.

\begin{table*}[t]
\caption{Validate cubic conditions.}
\label{tab: cubic}
\begin{center}
\begin{small}
\begin{sc}
\begin{tabular}{ccc}
\toprule
Space  &  \multicolumn{1}{c}{Feature space}  &  \multicolumn{1}{c}{Output space}  \\
\cmidrule(lr){2-2} \cmidrule(lr){3-3}
metric &  $\bM|Q_{1-\alpha} V_{\cD_\text{ca}}^f - V_{\cD_\text{ca}}^f|$  &  $\bM[Q_{1-\alpha} \cH(V_{\cD_\text{ca}}^f) - \cH(V_{\cD_\text{ca}}^f)]$  \\
\midrule
\midrule
community  &  $0.1150$ \scriptsize{$\pm 0.0290$} & ${0.8073}$\scriptsize{$\pm	0.1450$}  \\
facebook1 &  $0.2491$ \scriptsize{$\pm 0.0391$}&  $1.8950$ \scriptsize{$\pm 0.2058$}\\
facebook2 &  $0.2387$ \scriptsize{$\pm 	0.0960$}&  $1.7918$ \scriptsize{$\pm 0.5220$}\\
meps19  &  $0.2403$ \scriptsize{$\pm 0.0161$}  &   $ 1.7511$ \scriptsize{$\pm 0.1150$}  \\
meps20  &  $0.2485$ \scriptsize{$\pm 0.0571$}  &  $1.7936$ \scriptsize{$\pm 0.3532$} \\
meps21 &  $0.2528$ \scriptsize{$\pm 0.0488$} &  $1.8686$ \scriptsize{$\pm 	0.3350$} \\
star &  $0.0230$ \scriptsize{$\pm 0.0025$} &   {$0.1605$ \scriptsize{$\pm 0.0123$}}  \\
bio &  $0.1051$ \scriptsize{$\pm 0.0056$}&   {$0.8509$ \scriptsize{$\pm 0.0368$}}\\
blog &  $0.3537$ \scriptsize{$\pm 0.1135$}&   {$2.3769$ \scriptsize{$\pm 0.5421$}}\\
bike &  $0.0921$ \scriptsize{$\pm 0.0058$}&  $	0.7759$ \scriptsize{$\pm 0.0469$}\\
\bottomrule
\end{tabular}
\end{sc}
\end{small}
\end{center}
\end{table*}

Besides, we plot the relationship between the efficiency (band length) v.s. cubic metric in Figure~\ref{fig:length-vs-cubic}. 
Specifically, cubic metric here represents the core statement in cubic condition (statement 2), which implies a metric form like $\bM|Q_{1-\alpha} V_{\cD_\text{ca}}^f - V_{\cD_\text{ca}}^f|$.
The results are shown in Figure~\ref{fig:length-vs-cubic}.

\begin{figure}
    \centering
    \includegraphics[width=\textwidth]{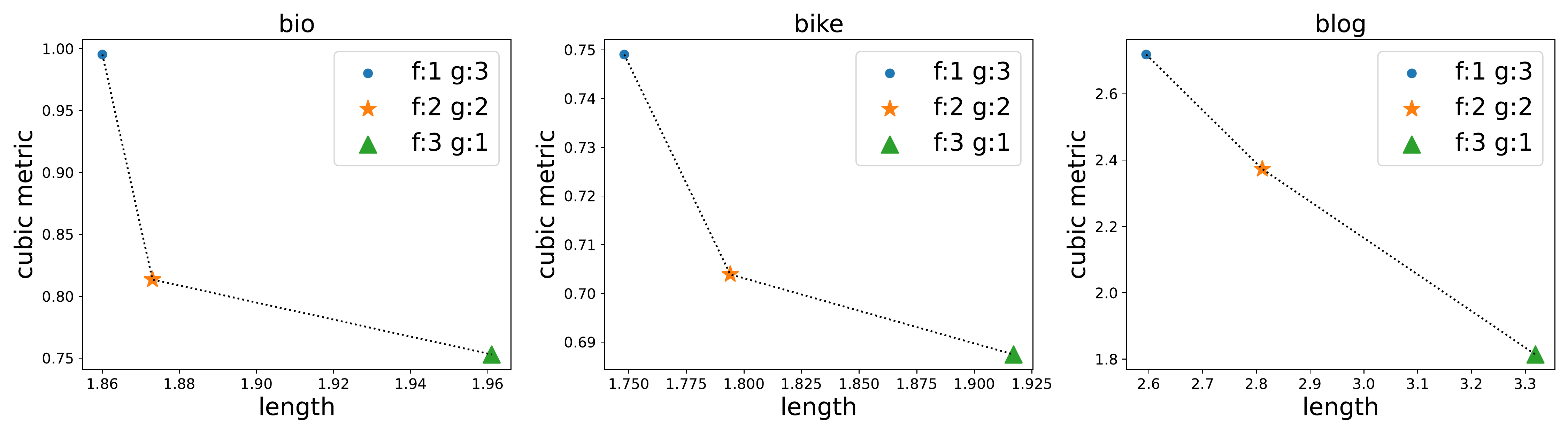}
    \caption{Length v.s. cubic metric. Larger cubic metric implies better efficiency (shorter band).}
    \label{fig:length-vs-cubic}
\end{figure}

\subsection{Feature CP works due to semantic information in feature space
}
\label{appendix: fcp advantage untrained not work}

Experiment results illustrate that feature-level techniques improve the efficiency of conformal prediction methods (\eg, Feature CP vs. CP, Feature CQR vs. CQR).
We claim that \emph{exploiting the semantic information in feature space is the key to our algorithm}. 
Different from most existing conformal prediction algorithms, which regard the base model as a black-box mode, feature-level operations allow seeing the training process via the trained feature.
This is novel and greatly broadens the scope of conformal prediction algorithms.
For a well-trained base model, feature-level techniques improve efficiency by utilizing the powerful feature embedding abilities of well-trained neural networks.

In contrast, if the base model is untrained with random initialization (whose representation space does not have semantic meaning), Feature CP returns a similar band length as the baseline (see Table~\ref{tab:untrained base mode}).
This validates the hypothesis that Feature CP's success lies in leveraging the inductive bias of deep representation learning.
Fortunately, realistic machine learning models usually contain meaningful information in the feature space, enabling Feature CP to perform well.

\begin{table*}[t]
\caption{\textbf{Untrained base model comparison between conformal prediction and Feature CP.} The base model is randomly initialized but not trained with the training fold. Experiment results show that Feature CP cannot outperform vanilla CP if the base model is not well-trained. 
}
\label{tab:untrained base mode}
\begin{center}
\begin{small}
\begin{sc}
\begin{tabular}{ccccc}
\toprule
Method  &  \multicolumn{2}{c}{Vanilla CP}  &  \multicolumn{2}{c}{Feature CP}  \\
\cmidrule(lr){2-3} \cmidrule(lr){4-5}
Dataset &  Coverage  &  Length  
&  Coverage  &  Length  \\
\midrule
\midrule
community  &  $90.28$ \scriptsize{$\pm 1.70$} & $\textbf{4.85}$\scriptsize{$\pm	0.22$} & $90.68$\scriptsize{$\pm	1.33$} & $4.92$\scriptsize{$\pm	0.77$}  \\
facebook1 &  $90.15$ \scriptsize{$\pm 0.15$}&  $3.42$ \scriptsize{$\pm 0.25$}&  $90.16$ \scriptsize{$\pm 0.12$}&  \textbf{$\textbf{3.20}$ \scriptsize{$\pm 0.50$}}\\
facebook2 &  $90.17$ \scriptsize{$\pm 0.11$}&  $3.51$ \scriptsize{$\pm 0.26$}&  $90.12$ \scriptsize{$\pm 0.14$}&  \textbf{$\textbf{3.34}$ \scriptsize{$\pm 0.39$}}\\
meps19  &  $90.81$ \scriptsize{$\pm 0.46$}  &  \textbf{$\textbf{4.02}$ \scriptsize{$\pm 	0.16$}}&     $90.86$ \scriptsize{$\pm 0.30$}   &  $4.22$ \scriptsize{$\pm 0.48$}  \\
meps20  &  $90.10$ \scriptsize{$\pm 0.60$}  &  $4.10$ \scriptsize{$\pm 0.28$}  &  $90.28$ \scriptsize{$\pm 0.46$}  &  \textbf{$\textbf{4.02}$ \scriptsize{$\pm 0.41$} } \\
meps21 &  $89.78$ \scriptsize{$\pm 0.44$} &  $4.08$ \scriptsize{$\pm 0.16$} &  $89.85$ \scriptsize{$\pm 0.58$} &  \textbf{$\textbf{3.81}$ \scriptsize{$\pm 0.32$} }\\
star &  $90.07$ \scriptsize{$\pm0.77$} &  \textbf{$\textbf{2.23}$ \scriptsize{$\pm 0.18$}} &  $89.47$ \scriptsize{$\pm 1.84$} &  $2.24$ \scriptsize{$\pm 0.40$} \\
bio &  $90.06$ \scriptsize{$\pm0.19$}&  \textbf{$\textbf{4.25}$ \scriptsize{$\pm 0.11$}}&  $90.11$ \scriptsize{$\pm 0.07$}&  $4.44$ \scriptsize{$\pm0.74$}\\
blog &  $90.13$ \scriptsize{$\pm	0.34$}&  \textbf{$\textbf{2.41}$ \scriptsize{$\pm 0.15$}}&  $	90.16$ \scriptsize{$\pm	0.26$}&  $2.58$ \scriptsize{$\pm0.49$}\\
bike &  $89.53$ \scriptsize{$\pm	0.78$}&  $	4.65$ \scriptsize{$\pm	0.15$}&  $89.61$ \scriptsize{$\pm	0.86$}&  \textbf{$\textbf{4.13}$ \scriptsize{$\pm 0.38$}}\\
\bottomrule
\end{tabular}
\end{sc}
\end{small}
\end{center}
\end{table*}

\subsection{Feature Conformalized Quantile Regression}
\label{appendix: fcqr}
In this section, we show that feature-level techniques are pretty general in that they can be applied to most of the existing conformal prediction algorithms.
Specifically, We take Conformalized Quantile Regression (CQR, \citet{DBLP:conf/nips/RomanoPC19}) as an example and propose Feature-level Conformalized Quantile Regression (Feature CQR).
The core idea is similar to Feature CP (See Algorithm~\ref{alg: fcp}), where we conduct calibration steps in the feature space. 
We summarize the Feature CQR algorithm in Algorithm~\ref{alg: fcqr}.

\begin{algorithm*}[t]
\caption{Feature Conformalized Quantile Regression (Feature CQR)} 
\label{alg: fcqr} 
\begin{algorithmic}[1] 
\REQUIRE Level $\alpha$, 
dataset $\cD = \{ (\Xs, \Ys)\}_{i \in \cI}$, test point $\X^\prime$;

\STATE{Randomly split the dataset $\mathcal{D}$ into a training fold $\cD_{\text{tr}} \triangleq (\X_i, \Y_i)_{i \in \mathcal{I}_{\text{tr}}}$ together with a calibration fold $\cD_{\text{ca}} \triangleq (\X_i, \Y_i)_{i \in \mathcal{I}_{\text{ca}}}$;}

\STATE{Train a base machine learning model $\hat{g}^{\text{lo}}\circ \hat{f}^{\text{lo}}(\cdot)$ and $\hat{g}^{\text{hi}}\circ \hat{f}^{\text{hi}}(\cdot)$ using $\cD_{\text{tr}}$ to estimate the quantile of response $Y_i$, which returns $[\hat{Y}_i^{\text{lo}}, \hat{Y}_i^{\text{hi}}]$;}

\STATE{For each $i \in \mathcal{I}_{\text{ca}}$, calculate the index $c_i^{\text{lo}} = \bI(\hat{Y}_i^{\text{lo}} \leq Y)$ and $c_i^{\text{hi}} = \bI(\hat{Y}_i^{\text{hi}} \geq Y)$;}

\STATE{For each $i \in \mathcal{I}_{\text{ca}}$, calculate the non-conformity score $V_i^{\text{lo}} = \tilde{V}_i^{\text{lo}} c_i^{\text{lo}}$ where $\tilde{V}_i^{\text{lo}}$ is derived on the lower bound function with Algorithm~\ref{alg: non-conformity score};}

\STATE{Calculate the $(1-\alpha)$-th quantile $Q_{1-\alpha}^{\text{lo}}$ of the distribution  $\frac{1}{|\cI_{\text{ca}}| + 1} \sum_{i \in \mathcal{I}_{ca}}  \delta_{V_i^{\text{lo}}} + {\delta}_{\infty}$;}

\STATE{Apply Band Estimation on test data feature $\hat{f}^{\text{lo}}(\X^\prime)$ with perturbation $Q_{1-\alpha}^{\text{lo}}$ and prediction head $\hat{g}^{\text{lo}}$, which returns $[\cC_{0}^{\text{lo}}, \cC_{1}^{\text{lo}}]$;}

\STATE{Apply STEP~4-6 similarly with higher quantile, which returns $[\cC_{0}^{\text{hi}}, \cC_{1}^{\text{hi}}]$;}

\STATE{Derive $\cC_{1-\alpha}^{\text{fcqr}}(\X)$ based on Equation~\eqref{eqn: fcqr};}

\ENSURE $\cC_{1-\alpha}^{\text{fcqr}}(\X)$.

\end{algorithmic} 
\end{algorithm*}

Similar to CQR, Algorithm~\ref{alg: fcqr} also considers the one-dimension case where $\Y \in \bR$.
We next discuss the steps in Algorithm~\ref{alg: fcqr}.
Firstly, different from Feature CP, Feature CQR follows the idea of CQR that the non-conformity score can be negative (see Step~4).
Such negative scores help reduce the band length, which improves efficiency.
This is achieved by the index calculated in Step~5\footnote{Here, we set $\bI = \pm 1$ for simplicity}.
Generally, if the predicted value is larger than the true value $\hat{Y}_i^{\text{lo}} > Y_i$, we need to adjust $\hat{Y}_i^{\text{lo}}$ to be smaller, and vice visa. 
Step~8 follows the adjustment, where we summarize the criterion in Equation~\eqref{eqn: fcqr}, given the two band $[\cC_{0}^{\text{lo}}, \cC_{1}^{\text{lo}}]$ and $[\cC_{0}^{\text{hi}}, \cC_{1}^{\text{hi}}]$.

\begin{equation}
\label{eqn: fcqr}
    \begin{split}
        \text{if \ } &c_i^{\text{lo}} < 0, c_i^{\text{hi}} < 0, \text{\ return \ }  \cC_{1-\alpha}^{\text{fcqr}}(\X) = [\cC_{0}^{\text{lo}}, \cC_{1}^{\text{hi}}] ; \\
        \text{if \ } &c_i^{\text{lo}} < 0, c_i^{\text{hi}} > 0, \text{\ return \ }  \cC_{1-\alpha}^{\text{fcqr}}(\X) = [\cC_{0}^{\text{lo}}, \cC_{0}^{\text{hi}}] ; \\
        \text{if \ } &c_i^{\text{lo}} > 0, c_i^{\text{hi}} < 0, \text{\ return \ }  \cC_{1-\alpha}^{\text{fcqr}}(\X) = [\cC_{1}^{\text{lo}}, \cC_{1}^{\text{hi}}] ; \\
        \text{if \ } &c_i^{\text{lo}} > 0, c_i^{\text{hi}} > 0, \text{\ return \ }  \cC_{1-\alpha}^{\text{fcqr}}(\X) = [\cC_{1}^{\text{lo}}, \cC_{0}^{\text{hi}}] .
    \end{split}
\end{equation}

Similar to Feature CP, we need a Band Estimation step to approximate the band length used in Step~6.
One can change it into Band Detection if necessary.
Different from Feature CP where Band Estimation always returns the upper bound of the band, Feature CQR can only approximate it.
We conduct experiments to show that this approximation does not lose effectiveness since the coverage is always approximate to $1-\alpha$.
Besides, different from CQR, which considers adjusting the upper and lower with the same value, we adjust them separately, which is more flexible in practice (see Step~7). 

We summarize the experiments result in Table~\ref{tab: fcqr}.
Feature CQR achieves better efficiency while maintaining effectiveness.
Here we provide 90\% confidence band using five repeated experiments with different random seeds. 
\begin{table*}[t]
\caption{{Comparison between CQR and Feature CQR. Feature CQR achieves better efficiency while maintaining effectiveness.}
}
\label{tab: fcqr}
\begin{center}
\begin{small}
\begin{sc}
\begin{tabular}{ccccc}
\toprule
Method  &  \multicolumn{2}{c}{CQR}  &  \multicolumn{2}{c}{Feature CQR}  \\
\cmidrule(lr){2-3} \cmidrule(lr){4-5}
Dataset &  Coverage  &  Length  
&  Coverage  &  Length  \\
\midrule
\midrule
community  &  $90.33$ \scriptsize{$\pm 1.36$} & ${1.60}$\scriptsize{$\pm 0.08$} & $90.23$\scriptsize{$\pm 1.65$} & $\textbf{1.23}$\scriptsize{$\pm 0.15$}  \\
facebook1 &  $89.94$ \scriptsize{$\pm 0.23$}&  $1.15$ \scriptsize{$\pm 0.04$}&  $92.00$ \scriptsize{$\pm 0.20$}&   {$ {\textbf{1.00}}$ \scriptsize{$\pm 0.04$}}\\
facebook2 &  $89.99$ \scriptsize{$\pm 0.03$}&  $1.25$ \scriptsize{$\pm 0.08$}&  $92.19$ \scriptsize{$\pm 0.17$}&   {$ {\textbf{1.08}}$ \scriptsize{$\pm 0.06$}}\\
meps19  &  $90.26$ \scriptsize{$\pm 0.38$}  &   {$ {	2.41}$ \scriptsize{$\pm 0.26$}}&     $91.25$ \scriptsize{$\pm 0.43$}   &  $\textbf{1.48}$ \scriptsize{$\pm 0.19$}  \\
meps20  &  $89.78$ \scriptsize{$\pm 0.60$}  &  $2.47$ \scriptsize{$\pm 0.10$}  &  $90.9$ \scriptsize{$\pm 0.39$}  &   {$ {\textbf{1.34}}$ \scriptsize{$\pm 0.35$} } \\
meps21 &  $89.52$ \scriptsize{$\pm 0.32$} &  $2.26$ \scriptsize{$\pm 0.20$} &  $	90.2$ \scriptsize{$\pm 0.53$} &   {$ \textbf{{1.69}}$ \scriptsize{$\pm 0.20$} }\\
star &  $90.99$ \scriptsize{$\pm 1.08$} &   {$ {	0.20}$ \scriptsize{$\pm 0.00$}} &  $	89.88$ \scriptsize{$\pm 0.33$} &  $\textbf{0.13}$ \scriptsize{$\pm 0.01$} \\
bio &  $90.09$ \scriptsize{$\pm 0.36$}&   {$ {	1.39}$ \scriptsize{$\pm 0.01$}}&  $89.88$ \scriptsize{$\pm 0.27$}&  $\textbf{1.22}$ \scriptsize{$\pm 0.02$}\\
blog &  $90.15$ \scriptsize{$\pm 0.15$}&   {$ {1.47}$ \scriptsize{$\pm 0.05$}}&  $	91.49$ \scriptsize{$\pm	0.26$}&  $	\textbf{0.89}$ \scriptsize{$\pm 0.03$}\\
bike &  $89.38$ \scriptsize{$\pm 0.25$}&  $	0.58$ \scriptsize{$\pm 0.01$}&  $89.95$ \scriptsize{$\pm 0.98$}&   {$ {\textbf{0.38}}$ \scriptsize{$\pm 0.02$}}\\
\bottomrule
\end{tabular}
\end{sc}
\end{small}
\end{center}
\end{table*}

\subsection{Group coverage for Feature Conformalized Quantile Regression}
\label{appendix: fcqr, group coverage}

This section introduces the group coverage returned by feature-level techniques, which implies the performance conditional coverage, namely $\bP(\Y \in \cC(\X) | \X)$.
Specifically, we split the test set into three groups according to their response values, and report the minimum coverage over each group. 

We remark that the group coverage of feature-level conformal prediction stems from its vanilla version.
That is to say, when the vanilla version has a satisfying group coverage, its feature-level version may also return a relatively satisfying group coverage.
Therefore, we did not provide Feature CP here because vanilla CP cannot return a good group coverage.

We summarize the experiment results in Table~\ref{tab: group coverage}.
Although we did not provide a theoretical guarantee for group coverage, Feature CQR still outperforms vanilla CQR in various datasets in terms of group coverage.
Among ten datasets, Feature CQR outperforms vanilla CQR in four datasets, and is comparable with vanilla CQR in five datasets.
Although the advantage is not universal, improving group coverage via feature-level techniques is still possible. 

We note that there is still one dataset where vanilla CQR outperforms Feature CQR.
We attribute the possible failure reason of Feature CQR on the dataset FACEBOOK2 to the failure of base models. 
As stated in Section~\ref{appendix: fcp advantage untrained not work}, Feature CQR only works when the base model is well-trained. 
However, when grouping according to the returned values, it is possible that there exists one group that is not well-trained during the training process.
This may cause the failure of Feature CQR on the dataset FACEBOOK2.

\begin{table*}[t]
\caption{Comparison between Feature CQR and CQR in terms of group coverage.}
\label{tab: group coverage}
\begin{center}
\begin{small}
\begin{sc}
\begin{tabular}{ccc}
\toprule
Group Coverage  &  \multicolumn{1}{c}{Feature CQR}  &  \multicolumn{1}{c}{vanilla CQR}  \\
\midrule
\midrule
community  &  $76.05$ \scriptsize{$\pm 5.13$} & ${78.77}$\scriptsize{$\pm 4.17$}  \\
facebook1 &  $65.52$ \scriptsize{$\pm 0.95$}&  $66.68$ \scriptsize{$\pm 1.69$}\\
facebook2 &  $65.66$ \scriptsize{$\pm 	1.41$}&  \textbf{$\textbf{70.78}$ \scriptsize{$\pm 1.39$}}\\
meps19  &  \textbf{$\textbf{76.67}$ \scriptsize{$\pm 2.17$}}  &   $ 71.26$ \scriptsize{$\pm 1.23$}  \\
meps20  &  \textbf{$\textbf{77.70}$ \scriptsize{$\pm 0.90$}}  &  $71.26$ \scriptsize{$\pm 3.20$} \\
meps21 &  $74.71$ \scriptsize{$\pm 2.36$} &  $70.74$ \scriptsize{$\pm 	1.83$} \\
star &  $84.62$ \scriptsize{$\pm 2.77$} &   $82.20$ \scriptsize$\pm 5.72$  \\
bio &  \textbf{$\textbf{84.80}$ \scriptsize{$\pm 1.05$}}&   {$80.03$ \scriptsize{$\pm 1.48$}}\\
blog &  \textbf{$\textbf{59.43}$ \scriptsize{$\pm 0.60$}}&   {$49.10$ \scriptsize{$\pm 0.54$}}\\
bike &  ${81.07}$ \scriptsize{$\pm 1.65$}&  {$	{78.22}$ \scriptsize{$\pm 2.44$}}\\
\bottomrule
\end{tabular}
\end{sc}
\end{small}
\end{center}
\end{table*}

\subsection{Robustness of Splitting point}
As discussed in the main text, the empirical coverage of Feature CP is pretty robust to the splitting point.
We show both experiments of small neural networks (Table~\ref{tab:layer-ablation}) and large neural networks (Table~\ref{tab:layer-ablation-classification}). 
We remark that one can also apply standard cross-validation to find the best splitting point, which may return a shorter band. 

\begin{table}[t]
\caption{Ablation study of the number of layers in $f$ and $g$ ($\alpha=0.1$) in unidimensional tasks, where the default setting is $f:2, g:2$.
}
\label{tab:layer-ablation}
\begin{center}
\begin{scriptsize}
\begin{sc}
\begin{tabular}{ccccccc}
\toprule
\multicolumn{1}{c}{Dataset}  &  \multicolumn{2}{c}{Bio}  &  \multicolumn{2}{c}{Bike}  &  \multicolumn{2}{c}{Blog}\\
\cmidrule(lr){2-3} \cmidrule(lr){4-5} \cmidrule(lr){6-7}
\multicolumn{1}{c}{Method} &  Coverage  &  Length  &  Coverage  &  Length  &  Coverage  &  Length \\
\midrule
$f:2 \quad g:2$  &  $90.20\pm0.39$  &  $1.873\pm0.06$  &  $89.61\pm0.94$  &  $1.794\pm0.11$  & $90.06\pm0.44$  &   $2.811\pm0.54$ \\
$f:3 \quad g:1$  &  $90.24\pm0.32$  &  $1.961\pm0.02$  &  $89.72\pm1.10$  &  $1.917\pm0.14$  &  $90.16\pm0.34$  &  $3.319\pm0.22$ \\
$f:1 \quad g:3$  &  $90.00\pm0.46$  &  $1.860\pm0.12$  &  $89.72\pm0.81$  &  $1.748\pm0.10$  &  $90.11\pm0.43$  &  $2.595\pm0.38$   \\
\bottomrule
\end{tabular}
\end{sc}
\end{scriptsize}
\end{center}
\end{table}

\begin{table}[t]
\caption{Ablation study of the number of layers in $f$ and $g$ ($\alpha=0.1$) in large neural networks on ImageNet.
}
\label{tab:layer-ablation-classification}
\begin{center}
\begin{scriptsize}
\begin{sc}
\begin{tabular}{ccccccc}
\toprule

Splitting point (prediction head $g$) & coverage & length \\
\midrule
The last layer & $0.900 \pm 0.0018$ & $3.32 \pm 0.10$\\
Last two layers & $0.901 \pm 0.0047$ & $3.26 \pm 0.03$  \\
Last three layers & $0.900 \pm 0.0029$ & $3.30 \pm 0.01$\\
\bottomrule
\end{tabular}
\end{sc}
\end{scriptsize}
\end{center}
\end{table}
\subsection{FCP in classification}
\label{appendix: classification}
In this section, we show how to deploy FCP in classification problems. 
The basic ideas follow Algorithm~\ref{alg: fcp}, except for two points:
\begin{enumerate}
    \item We use cross-entropy loss instead of MSE loss when calculating the non-conformity score.
    \item We do not deploy LiPRA, but use a sampling method to return the final confidence band, which induces a looser bound.
\end{enumerate}
For each sample in the calibration fold, we sample $1000$ samples in the confidence band of feature space, and return the corresponding predictions in output space to get the final confidence band.
We summarize the experiment results in Table~\ref{tab:classification}, where we follow the statistics of APS and RAPS in \citet{DBLP:conf/iclr/AngelopoulosBJM21}.
\subsection{The validation of Non-conformity score}

\textbf{Distribution for non-conformity score in calibration fold.}
We plot the distribution of calibration score in Figure~\ref{fig:cal-score}.
We plot each non-conformity score in the calibration fold. 
The distribution of non-conformity scores is smooth and single-peak in the real-world dataset, meaning that the proposed score is reasonable.

\begin{figure}[t]
    \centering
    \includegraphics[width=0.7\linewidth]{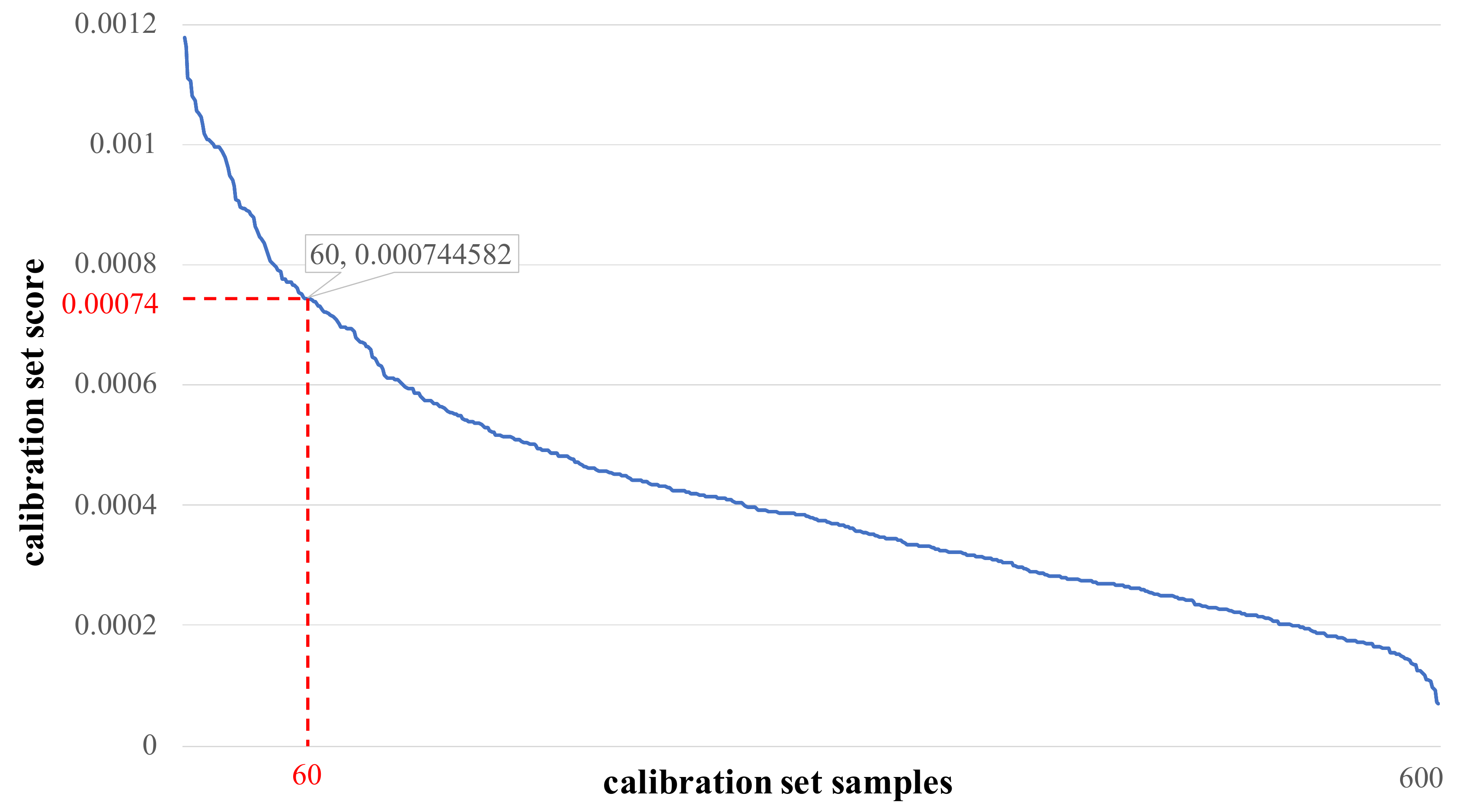}
    \caption{The distribution of the calibration score for the segmentation task.
    This indicates that the definition of our non-conformity score is a proper one.
    }
    \label{fig:cal-score}
\end{figure}

Besides, one may wonder what happens if the predicted function is close to the ground truth function, which may lead to a zero quantile of the non-conformity score. 
We conduct experiments to show that zero quantiles would not underestimate the final non-conformity score. Specifically, we create a dataset that the prediction target is given by $y = f^*(x) + \epsilon$, where $f^*$ denotes a three-layer neural network, $\epsilon=0$ with probability $0.901$ and $\epsilon$ follows a standard Gaussian with probability $0.009$. Here the $0.901$ is to let the 90\% quantile of the non-conformity score on a calibration set to be $0$. Besides, we make the predicted model satisfy $\hat{f} = f^*$. We summarize the results (with $\alpha = 0.1$) as follows (five repeated experiments with different $f^*$). Here $x$ is generated with a uniform distribution between $0$ and $1$. Note that the zero-length is implied by zero quantiles.

\begin{table*}[t]
\caption{Zero quantile does not effects the empirical coverage ($\hat{f} = f^*$).}
\label{tab: zero quantile}
\begin{center}
\begin{small}
\begin{sc}
\begin{tabular}{ccc}
\toprule
Trial & Coverage & Length \\
\midrule
$1$ & $90.63$ & $0 $\\
$2$ & $90.08$ & $0 $\\
$3 $ & $89.72$ & $0 $\\
$4$ & $89.66 $& $0 $\\
$5$ & $90.21$ & $0$ \\
\bottomrule
\end{tabular}
\end{sc}
\end{small}
\end{center}
\end{table*}

However, in practice it is impossible for the predicted model to be $f^*$, therefore exact zero-quantile is impossible. We then summarize the results when the model $\hat{f}$ is obtained via training on a training fold. The non-zero length implies that the quantile is non-zero.

\begin{table*}[t]
\caption{Zero quantile does not effects the empirical coverage ($\hat{f} \neq f^*$).}
\label{tab: zero quantile 2}
\begin{center}
\begin{small}
\begin{sc}
\begin{tabular}{ccc}
\toprule
Trial & Coverage & Length \\
\midrule
$1$ & $90.55$ & $0.1274$ \\
$2$ & $89.72 $& $0.1105$ \\
$3$ & $89.75 $& $0.1120$ \\
$4$& $89.31 $& $0.1202$ \\
$5$ & $90.10 $& $0.1245$ \\
\bottomrule
\end{tabular}
\end{sc}
\end{small}
\end{center}
\end{table*}
\subsection{Additional Experiment Results}
\label{appendix: Additional Experiment Results}

This section provides more experiment results omitted in the main text.

\textbf{Visualization for the segmentation problem.}
We also provide more visualization results for the segmentation problem in Figure~\ref{fig:supp-visualization}.

\begin{figure}[t]
    \centering
    \includegraphics[width=0.9\linewidth]{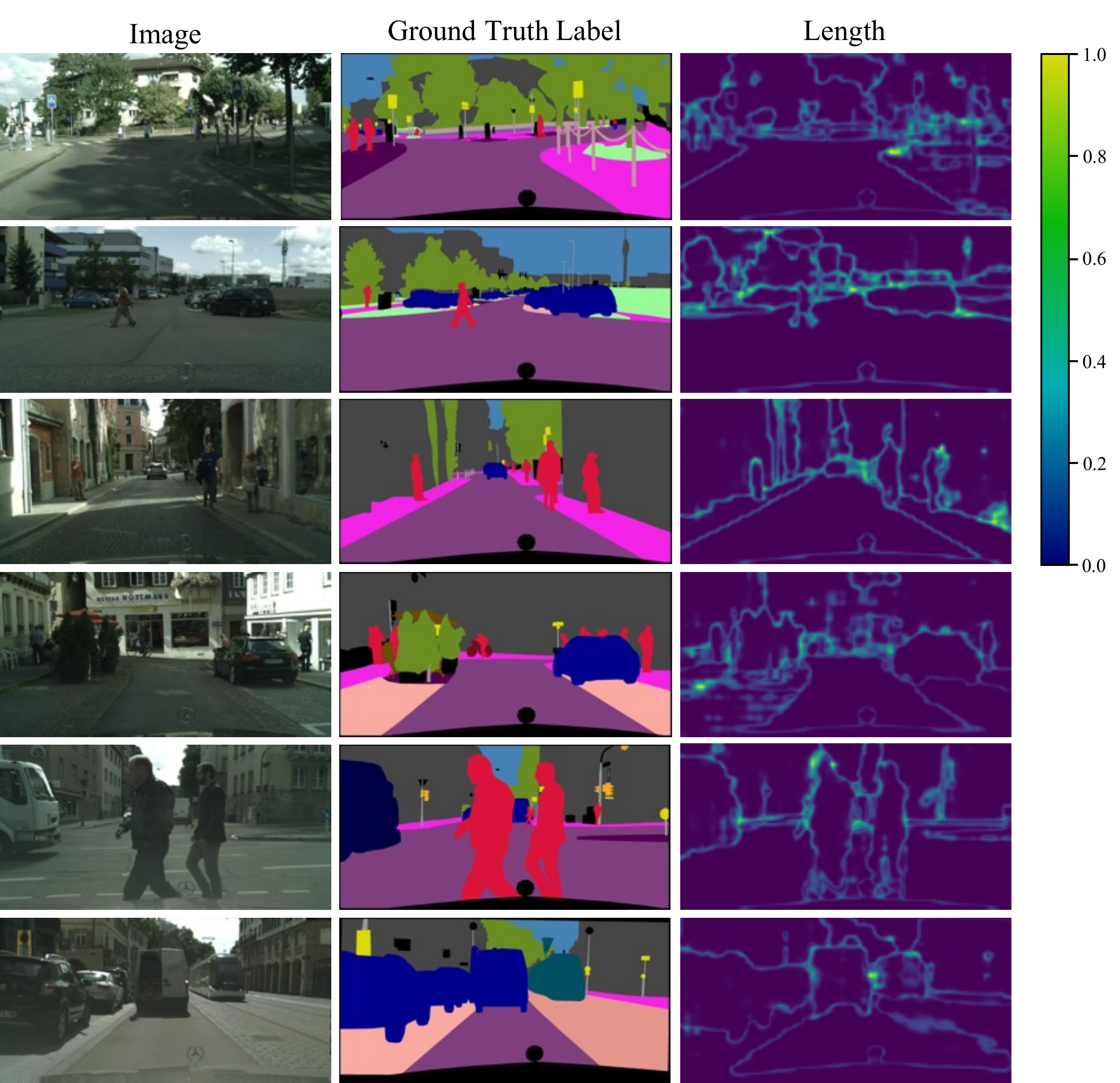}
    \caption{More visualization results for the Cityscapes segmentation task.}
    \label{fig:supp-visualization}
\end{figure}

\end{document}